\pgfplotsset{compat=1.17} 
\newtheorem{theorem}{Theorem}
\newtheorem{lemma}{Lemma}
\newtheorem{remark}{Remark}
\newtheorem{proposition}{Proposition}
\DeclareMathOperator{\tr}{\ensuremath{\text{\rm tr}}}
\DeclareMathOperator{\Diag}{\ensuremath{\text{\rm diag}}}
\providecommand{\norm}[1]{\lVert#1\rVert}
\providecommand{\SVDr}[1]{[\![#1]\!]_r}
\newcommand{\scalarp}[1]{{\langle #1\rangle}}
\newcommand{\R}{\mathbb R}
\newcommand{\EE}{\ensuremath{\mathbb E}}
\newcommand{\Id}{I}
\newcommand{\Data}{\mathcal{D}_n}
\newcommand{\Koop}{{A_{\im}}}  
\newcommand{\EKRR}{\widehat{G}_\reg} 
\newcommand{\EEstim}{\widehat{G}}  
\newcommand{\Estim}{G}  
\newcommand{\ERRR}{\EEstim^{\rm RRR}_{r,\reg}}  
\newcommand{\ECx}{\widehat{C} } 
\newcommand{\ECy}{\widehat{D}} 
\newcommand{\ECxy}{\widehat{T}}  
\newcommand{\ECreg}{\widehat{C}_{\reg}}
\newcommand{\X}{\mathcal{X}} 
\newcommand{\Risk}{\mathcal{R}} 
\newcommand{\ERisk}{\widehat{\mathcal{R}}} 
\newcommand{\RKHS}{\mathcal{H}} 
\newcommand{\Lii}{L^2_{\im}(\X)}
\newcommand{\im}{\pi} 
\newcommand{\HS}[1]{{\rm{HS}}\left(#1\right)} 
\newcommand{\reg}{\gamma}
\newcommand{\bcon}{c_\RKHS}
\newcommand{\fH}{\phi}
\newcommand{\Cx}{C}
\newcommand{\Cy}{D}
\newcommand{\Cxy}{T}
\newcommand{\Kreg}[1]{K^{\reg}}
\newcommand{\Stau}{S_{\tau}}
\newcommand{\Stautilde}{\Tilde{S_{\tau}}}
\newcommand{\Corrtau}[1]{V_{\tau}(\mathbf{#1})}
\newcommand{\BECorrtau}[1]{\widetilde{V}_{\tau}(\mathbf{#1})}
\newcommand{\UECorrtau}[1]{\widehat{V}_{\tau}(\mathbf{#1})}
\newcommand{\delt}{\delta(\tau)}
\newcommand{\deltprime}{\delta'(\tau)}
\newtoks\rowvectoks
\newcommand{\rowvec}[2]{%
  \rowvectoks={#2}\count255=#1\relax
  \advance\count255 by -1
  \rowvecnexta}
\newcommand{\rowvecnexta}{%
  \ifnum\count255>0
    \expandafter\rowvecnextb
  \else
    \begin{pmatrix}\the\rowvectoks\end{pmatrix}
  \fi}
\newcommand\rowvecnextb[1]{%
    \rowvectoks=\expandafter{\the\rowvectoks&#1}%
    \advance\count255 by -1
    \rowvecnexta}
\def\mybigtimes{\mathop{\mathchoice{
   \vcenter{\hbox to10bp{\vrule height15bp width0pt \pdfliteral{
   q 1 J .8 w 0 1 m 10 14 l S 0 14 m 10 1 l S Q
}\hss}}}{
   \vcenter{\hbox to10bp{\kern1bp\vrule height10bp width0pt \pdfliteral{
   q 1 J .65 w 0 0 m 8 10 l S 0 10 m 8 0 l S Q
}\hss}}}{\times}{\times}
}}
\title{An Empirical Bernstein Inequality for Dependent Data in Hilbert Spaces and Applications}
\author{ Erfan Mirzaei \\ {Istituto Italiano di Tecnologia} \\ {University of Genoa}\\ {\tt \small erfan.mirzaei@iit.it}  \And
      Andreas Maurer \\
      Istituto Italiano di Tecnologia\\
      \texttt{am@andreas-maurer.eu}
      \And 
Vladimir R. Kostic \\ {Istituto Italiano di Tecnologia} \\ {University of Novi Sad} \\ {\tt \small vladimir.kostic@iit.it}
      \And      
Massimiliano Pontil \\ {Istituto Italiano di Tecnologia} \\ {University College London} \\ {\tt \small massimiliano.pontil@iit.it}
}
\begin{document}

\maketitle

\begin{abstract}
Learning from non-independent and non-identically distributed data poses a persistent challenge in statistical learning. In this study, we introduce data-dependent Bernstein inequalities tailored for vector-valued processes in Hilbert space. Our inequalities apply to both stationary and non-stationary processes and exploit the potential rapid decay of correlations between temporally separated variables to improve estimation. We demonstrate the utility of these bounds by applying them to covariance operator estimation in the Hilbert-Schmidt norm and to operator learning in dynamical systems, achieving novel risk bounds.  Finally, we perform numerical experiments to illustrate the practical implications of these bounds in both contexts.
\end{abstract}

\section{Introduction}\label{sec:intro}
Learning from non-independent and identically distributed (non-i.i.d.) data presents significant challenges in machine learning, both from theoretical and practical perspectives. 
Most real-world data do not follow the neat, predictable patterns of i.i.d. scenarios, creating a demand for statistical learning techniques to handle more complex random processes, thereby broadening the applicability of learning algorithms.

In this paper, we present data-dependent Empirical Bernstein Inequalities (EBIs), which apply to vector-valued random processes in Hilbert space. A driving motivation for this work is recent studies on learning operators associated with stochastic dynamical systems \citep{Kostic2022,kostic2024consistent}. 

Stochastic dynamical systems are essential for modeling complex phenomena across diverse fields, from finance, where they describe asset price fluctuations and stochastic volatility \citep{tankov2003financial}, to neuroscience, where they capture neural variability and synaptic noise \citep{rusakov2020noisy, schug2021presynaptic}, and climate science, where they model turbulent atmospheric and oceanic dynamics \citep{majda2012filtering}. A key example is Langevin dynamics, which describes molecular motion in a thermal environment by incorporating both deterministic forces and random fluctuations, making it fundamental for simulating biomolecular systems, modeling Brownian motion, and analyzing stochastic processes in physics, chemistry, and electrical engineering \citep{coffey2012langevin}.

Many such processes are slowly exploring the state space, and one has to wait a long time before two points along the process can be considered independent. Such phenomena are formalized via the notion of mixing.
Unfortunately, for largely unknown processes, the quantification of mixing is unfeasible, and it may be hard even when dealing with data generated from simulations based on mathematical models that ensure that the dynamical system is mixing.
This motivates the empirical concentration inequalities for dependent random variables in Hilbert spaces since their primary objective is to minimize the inequality’s dependence on mixing coefficients that are often unknown. 

In this respect, EBIs contrast with the more classical combinations of Bernstein inequalities and mixing assumptions. Current estimation bounds for covariance operators exhibit a reduced effective sample size, which roughly requires dividing the length of the trajectory by the mixing time. We show that our EBIs allow us to derive estimation bounds in which these large mixing times mainly impact the fast $O(1/n)$ term in the bound, while the slow $O(1/\sqrt{n})$ term involves only the time-lag correlation/variance of the within-block average of the process, which may be very small even for slowly mixing processes. 
Notably, when the variables observed along a trajectory decorrelate much more rapidly than they achieve approximate independence, Bernstein's inequality, with its smaller variance term, facilitates $O(1/n)$ convergence rates, a significant improvement over standard ones. 

This utility is enhanced by combining the inequality with precise estimates of its variance term, further differentiating our empirical bounds from others and highlighting the novelty of our approach compared to related works. Finally, we highlight that our bound only requires prior knowledge of the mixing coefficient of the process, with all other quantities being data-dependent. 


{\bf Previous Work~} The idea of combining Bernstein's inequality with estimates of the variance term is not new. It was first applied to reinforcement learning and general learning theory (\citep{audibert2007variance}, \citep{maurer2009empirical}, \citep{audibert2009exploration}) and has been extended to improve estimates on Martingales \citep{peel2013empirical, waudby2024estimating}, U-statistics \citep{peel2010empirical}, PAC-Bayesian bounds \citep{tolstikhin2013pac}, and certain Banach space-valued random variables \citep{martinez2024empirical}. To our knowledge, the present paper gives the first application to weakly dependent variables in a Hilbert space. Other relevant works on learning with non-i.i.d. data include 
\citep{hang2014fast,steinwart2009fast,steinwart2009learning,smale2009online,hang2017bernstein, modha1996minimum, blanchard2019concentration,liu2023wasserstein, alquier2019exponential, abeles2024generalization, abeles2024online, chatterjee2024generalization}. These studies do not consider data-dependent bounds for random variables in Hilbert spaces or focus on operator learning, so they cannot be directly compared with ours. 

Within the context of operator learning, it's notable that \citep{Kostic2022} utilizes the block method from \citep{yu1994rates} to derive estimation bounds for the covariance operator in the domain of transfer operator learning. The concept of a mixing assumption in learning theory is likely first introduced by \citep{yu1994rates}, alongside a technique for method of interlacing block sequences to derive empirical bounds. The authors \citep{modha1996minimum} soon applied these ideas to provide a version of Bernstein's inequality, a scalar version of Theorem \ref{thm:Data-dependent-Bernstein}. Since then, mixing and the method of blocks have been used by numerous authors (\citep{meir2000nonparametric}, \citep{mohri2008rademacher}, \citep{steinwart2009fast}, \citep{agarwal2012generalization}, \citep{shalizi2013predictive}, and others). We follow a similar approach, with the distinction that our data consist of a sequence of vectors $X_{1},X_{2},...$ in a Hilbert space, and we combine Bernstein's inequality with empirical estimates of its variance term.


{\bf Contributions~} In summary our main contributions are: {\bf 1)} We present novel empirical Bernstein inequalities for a sequence of vectors in a Hilbert space; {\bf 2)} We apply these inequalities to derive estimation bounds for the covariance and cross-covariance matrices of the process, showing improvement over recent bounds in the context of stochastic dynamical system and Koopman operator regression (KOR); {\bf 3)} We use our EBI to prove risk bounds for learning stochastic processes, which due to its empirical nature avoids the need for (typically unverifiable) regularity assumptions; {\bf 4)} We present experiments illustrating our theory, and, notably, show that our bounds help understanding generalization in moderate sample-size regimes, and can serve as practical model selection tool in learning dynamical systems.

\section{Theoretical results}\label{sec:theory}
The objective of this section is to bound the error incurred when estimating the mean of a random vector by its average on an observed trajectory. 
We study the norm of the random variable \begin{equation}
\frac{1}{n}\sum_{t=1}^{n}\left( X_{t}{-}\mathbb{E}\left[ X_{t}%
\right] \right), \label{main deviation} \end{equation} where $\mathbf{X}=\left( X_{1},...,X_{n}\right) $ is a vector
of random variables in a separable Hilbert space $\RKHS$, representing the observations along the trajectory.
The principal difficulty is the mutual dependence of the $X_t$. To explain our assumptions to replace independence, we briefly define the $\beta$-mixing coefficients for stochastic processes and describe the method of blocks to approximate a sum of dependent random vectors by a sum of independent block-averages. After that, we state our main results and conclude this section with a sketch of their proofs.
\subsection{Backgrounds on \texorpdfstring{$\beta$} \texorpdfstring{$-$}mixing coefficients and the method of blocks} \label{sec:mixing_blocks}

The key idea for handling dependent data observed from a random, temporal process is that, while subsequent observations may be strongly dependent, they often become approximately independent when separated by sufficiently large time intervals. Such processes tend to forget their distant past. 

To estimate the mean of a random vector in Hilbert space from the
observation of a single trajectory $\mathbf{X}=\left( X_{1},...,X_{n}\right) {\sim} \mu
$ of a largely unknown random process, for $\tau {\in} \mathbb{N}$, we define 
the mixing coefficients \citep{bradley2005basic},
\begin{equation*}
\begin{aligned}    
\beta _{\mu}\left( \tau \right) =\sup_{j\in \mathbb{N}}\sup_{B\in
\Sigma \left( \left[ 1,j\right] \cup \left[ j{+}\tau,\infty \right) \right)
}\big\vert \mu_{[ 1,j] \cup [j +\tau ,\infty)
}(B)\, - \mu _{\left[ 1,j\right]}\times \mu _{\left[ j +\tau
,\infty \right)}(B)\big\vert.
\end{aligned}
\end{equation*}%
Here, $\Sigma \left( \left[ 1,j\right] \cup \left[ j{+}\tau ,\infty \right)
\right) $ is the set of events depending on the $X_{t}$ with $t\in \left[ 1,j%
\right] $ and $t\in \left[ j{+}\tau ,\infty \right) $, the measure $\mu _{%
\left[ 1,j\right] \cup \left[ j{+}\tau ,\infty \right) }$ is the joint
distribution of these variables, and $\mu _{\left[ 1,j\right] }\times \mu _{%
\left[ j{+}\tau ,\infty \right) }$ is the product measure, where events in $%
\Sigma \left( \left[ 1,j\right] \right) $ and $\Sigma \left( \left[ j{+}\tau
,\infty \right) \right) $ are independent (we write $\mu_I$ for the joint distribution of ${X_i:i \in I}$). By definition, this independence
assumption incurs a penalty of $\beta _{\mu}\left( \tau \right) $.
For $m$ events, mutually separated by $\tau $ time increments, the penalty
of assuming them to be independent then increases to $\left( m-1\right)
\beta _{\mu}\left( \tau \right) $. The mixing coefficients are
necessarily non-increasing. Typical assumptions are algebraic mixing, $\beta
_{\mu}\left( \tau \right) \approx \tau ^{-p}$, or exponential mixing $%
\beta _{\mu}\left( \tau \right) \approx \exp \left( -p\tau \right) $
for $p>0$.

Let us assume that $n$ is an even multiple of some $\tau$, that is, $n = 2m\tau$. Following the seminal work of \citep{yu1994rates} we divide
the entire time interval into two sequences of blocks of length $\tau $,
where the blocks in each sequence are mutually separated by $\tau $. Thus
one sequence is $I_{1},I_{2},...,I_{m}$, where each $I_{k}$ has $\tau $
points and the distance between different $I_{k}$ and $I_{l}$ is at least $\tau $. The other sequence $I_{1}^{\prime },I_{2}^{\prime
},..., I_{m}^{\prime }$ has the same properties and fills the gaps left by
the first sequence.  In other word for $\tau \in \mathbb{N}$ and $k \in [m]$ these index sets are $I_{k}=\left\{ 2\left(
k{-}1\right) \tau {+}1,...,\left( 2k{-}1\right) \tau \right\} $ and $I_{k}^{\prime
}=\left\{ \left( 2k{-}1\right) \tau {+}1,...,2k\tau \right\} $.

If we drop the normalizing factor $1/n$, which can always be re-inserted in our bounds, the random variable in (\ref{main deviation}) can then be written as
\begin{equation}
\sum_{i=1}^{n}\left( X_{i}{-}\mathbb{E}\left[ X_{i}\right] \right) \!=\sum_{k {= }1}^{m}(Y_{k}{-}\mathbb{E}\left[ Y_{k}\right])\!+\sum_{k = 1}^{m}(Y_{k}^{\prime
}{-}\mathbb{E}\left[ Y_{k}^{\prime
}\right]),  \label{Decomposition}
\end{equation}%
where the $Y_{1},...,Y_{m}$ and $Y_{1}^{\prime },...,Y_{m}^{\prime }$ are
the "block sums"%
\begin{equation}
Y_{k}=\sum_{i\in I_{k}} X_{i} \text{ and }Y_{k}^{\prime }=\sum_{i\in
I_{k}^{\prime }} X_{i} .
\label{Withinblockaverages}
\end{equation}%
Clearly the $Y_{i}$ are mutually separated by $\tau $ time increments, so
they may be assumed mutually independent at a penalty of $\left( m-1\right)
\beta _{\mu}\left( \tau \right) $, and the same holds for the $%
Y_{i}^{\prime }$. To express these independencies we write $\Pr_{I}$ for the
probability measure $\mu _{I_{1}}\times ...\times \mu _{I_{m}}$ on $\Sigma
\left( I_{1}\cup ...\cup I_{m}\right) $ and $\Pr_{I^{\prime }}$ for $\mu
_{I_{1}^{\prime }}\times ...\times \mu _{I_{m}^{\prime }}$ on $\Sigma \left(
I_{1}^{\prime }\cup ...\cup I_{m}^{\prime }\right) $. Then the above ideas
are summarized by the following lemma, with a detailed proof in Appendix 
\ref{app:mixing}.

\begin{restatable}{lemma}
{BlockingLemma}\label{lem:blocking}
Let $X_{i}$ have values in a normed space $%
\left( \mathcal{X},\left\Vert \cdot \right\Vert \right) $ and let $F,F^{\prime }:%
\mathcal{X}^{n}\rightarrow \mathbb{R}$, where $F$ is $\Sigma \left(
I_{1}\cup ...\cup I_{m}\right) $-measurable, and $F^{\prime }$ is $\Sigma
\left( I_{1}^{\prime }\cup ...\cup I_{m}^{\prime }\right) $-measurable. Then 
\begin{align*}
& \Pr \left\{ \left\Vert \sum_{i=1}^{n}\left( X_{i}-\mathbb{E}%
\left[ X_{i}\right] \right) \right\Vert >F\left( \mathbf{X}\right)
+F^{\prime }\left( \mathbf{X}\right) \right\}  \\
& \leq \Pr_{I}\left\{ \left\Vert \sum_{k =
1}^{m}(Y_{k}-\mathbb{E}\left[ Y_{k}\right])\right\Vert >F\left( \mathbf{X}\right) \right\} \\
&+\Pr_{I^{\prime
}}\left\{ \left\Vert \sum_{k = 1}^{m}(Y_{k}^{\prime
}-\mathbb{E}\left[ Y_{k}^{\prime
}\right])\right\Vert
>F^{\prime }\left( \mathbf{X}\right) \right\} {+}2\left( m{-}1\right) \beta _{\mu}\left( \tau \right) ,
\end{align*}%
where the $Y_{k}$ and $Y_{k}^{\prime }$ are given by (\ref%
{Withinblockaverages}).
\end{restatable}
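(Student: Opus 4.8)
\emph{Proof sketch.} The plan is to reduce the dependent sum to two independent block-sums and to pay a controlled mixing penalty for this reduction. Concretely, I would combine the block decomposition (\ref{Decomposition}) with a union bound via the triangle inequality, and then quantify, through a telescoping coupling argument, the cost of replacing the true joint law of the blocks by the product of their marginals.

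First, set $A := \sum_{k=1}^m (Y_k - \mathbb{E}[Y_k])$ and $B := \sum_{k=1}^m (Y_k' - \mathbb{E}[Y_k'])$, so that $\sum_{i=1}^n (X_i - \mathbb{E}[X_i]) = A + B$ by (\ref{Decomposition}). The triangle inequality gives $\|A+B\| \le \|A\| + \|B\|$, so $\|A+B\| > F + F'$ forces $\|A\| > F$ or $\|B\| > F'$; hence the event on the left-hand side is contained in $\{\|A\| > F(\mathbf{X})\} \cup \{\|B\| > F'(\mathbf{X})\}$, and a union bound yields
\[
\Pr\{\|A+B\| > F + F'\} \le \Pr\{\|A\| > F\} + \Pr\{\|B\| > F'\}.
\]
Because each $Y_k$ is a function of the block $(X_i)_{i \in I_k}$ and $F$ is $\Sigma(I_1 \cup \dots \cup I_m)$-measurable, the event $\{\|A\| > F\}$ lies in $\Sigma(I_1 \cup \dots \cup I_m)$; symmetrically $\{\|B\| > F'\}$ lies in $\Sigma(I_1' \cup \dots \cup I_m')$. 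Each probability above is therefore equal to the probability of a block-measurable event under the true law $\mu$.

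The core step is to show that for every $E \in \Sigma(I_1 \cup \dots \cup I_m)$ one has $\Pr(E) \le \Pr_I(E) + (m-1)\beta_\mu(\tau)$, and analogously for the primed blocks. Writing $Z_k := (X_i)_{i \in I_k}$, I would introduce hybrid laws $\nu_j$ on the $m$-fold product of block spaces that keep $Z_1, \dots, Z_j$ jointly distributed as under $\mu$ while making $Z_{j+1}, \dots, Z_m$ independent with their true marginals, so that $\nu_1 = \Pr_I$ and $\nu_m = \mu_{I_1 \cup \dots \cup I_m}$. Telescoping $\Pr(E) - \Pr_I(E) = \sum_{j=1}^{m-1} (\nu_{j+1}(E) - \nu_j(E))$ and bounding each summand in total variation, the $j$-th difference reduces (after marginalizing out the common independent factors) to comparing the joint law of $(Z_1, \dots, Z_j)$ and $Z_{j+1}$ against their product. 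Since the last index of $I_j$ is $(2j{-}1)\tau$ and the first index of $I_{j+1}$ is $2j\tau + 1$, these two groups are separated by a gap of at least $\tau$, so by the definition of the $\beta$-mixing coefficient, together with the fact that total variation does not increase under marginalization, each summand is at most $\beta_\mu(\tau)$. This gives $(m-1)\beta_\mu(\tau)$; applying it to both $\{\|A\| > F\}$ and $\{\|B\| > F'\}$ and inserting into the union bound produces the claimed inequality with total penalty $2(m-1)\beta_\mu(\tau)$.

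I expect the telescoping argument to be the main obstacle. Two points need care: the measurability and coupling bookkeeping ensuring that each $\nu_j$ is well defined and that the $j$-th single-step difference genuinely depends only on the coordinates over which the $\beta$-mixing supremum is taken (so that the non-increasing coefficient $\beta_\mu(\tau)$, rather than a larger quantity, controls it), and the verification that the block geometry indeed produces gaps of length at least $\tau$ at every peeling step. The remaining ingredients, namely the triangle and union bounds and the passage from a total-variation estimate to the one-sided event inequality $\Pr(E) \le \Pr_I(E) + (m-1)\beta_\mu(\tau)$, are routine.
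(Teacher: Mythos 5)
Your proposal is correct and follows essentially the same route as the paper: the triangle inequality plus a union bound reduces the claim to two block-measurable events, and the paper's auxiliary lemma establishes exactly your core step $\Pr(E)\le \Pr_I(E)+(m-1)\beta_\mu(\tau)$ by a telescoping total-variation argument through hybrid product measures, each step controlled by the $\beta$-mixing definition via Fubini. The only (inessential) difference is the direction of the telescoping: your hybrids keep the leading blocks jointly distributed and make the trailing ones independent, whereas the paper peels single blocks off the front of a jointly distributed tail; both rely on the same gap-of-$\tau$ geometry and marginalization argument.
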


So the problem of bounding the norm of the dependent sum is reduced to
bounding the norms of two independent sums, albeit with an effective sample
size reduced by a factor of $2\tau $.  The interlaced sequences of blocks are a standard method to port
bounds from the independent to the dependent case. The novelty here is the introduction of $F$ and $F^{\prime }$, needed for our empirical bounds.

For an unknown process, explored only by observation of a single
trajectory, the coefficients $\beta _{\mu}\left( \tau \right) $ are
fixed largely based on plausibility, making $\tau $ very
uncertain. Any bound on the estimation error should therefore depend as little as possible on $\tau $, which determines the effective sample size $%
m=n/\left( 2\tau \right) $. 

\subsection{Bernstein inequalities for vector-valued processes}
\label{sec:Bern_ineq}
Bernstein-type concentration inequalities for functions of $m$ independent variables bound an estimation error by the sum of two terms, a rapidly decreasing term of order $\frac{1}{m}$ and another term, which decreases as $\sqrt{V/m}$, where $V$ is related to the variance of the variables. For the blocking technique, this variance becomes the average of variances of within-block averages, $\frac{1}{\tau}Y_{k}$s and $\frac{1}{\tau}Y'_{k}$s.  Bernstein's inequality can exploit the fact that the correlation of temporally separated variables often decreases orders of magnitude faster than they attain approximate independence. The mixing time $\tau $ then enters mainly in the rapidly decreasing term of order $\tau /n$. This fact has also been pointed out by~\citep{ziemann2024noise}.

For a specified $\tau$ define the set $\Stau \subseteq \left[ n\right] \times \left[ n\right] $  by $
\Stau=\bigcup_{k=1}^{m}\left( I_{k}\times I_{k}\right) \cup \left( I_{k}^{\prime
}\times I_{k}^{\prime }\right)$, where $|\Stau| = 2m\tau^{2}$,  and 
\begin{equation}
\Corrtau{X} = \frac{1}{|\Stau|}\sum_{\left(
t,s\right) \in \Stau}\EE[ \scalarp{X_{t},X_{s}}] {-}\scalarp{\EE[X_{t}], \EE[X_{s}]} 
\end{equation}

The following is our first result.

\begin{restatable}{theorem}
{thmdataBern}\label{thm:Data-dependent-Bernstein}Let $m,\tau \in \mathbb{N}$, $%
n=2m\tau $, and let $\mathbf{X}=\left( X_{1},...,X_{n}\right) $ be a vector
of random variables in a separable Hilbert-space $\RKHS$, satisfying $\left\Vert
X_{t}\right\Vert \leq c$ for all $t$. Let $\delt = \delta{-}2(\frac{n}{2\tau}{-}1)\beta _{\mu}(\tau) > 0$. Then with probability
at least $1{-}\delta $ we have

\begin{equation*}
\begin{aligned}
\!\left\Vert \frac{1}{n}\sum_{t=1}^{n}\left( X_{t}{-}\mathbb{E}\left[ X_{t}%
\right] \right) \right\Vert \leq 
\sqrt{\frac{2\tau\Corrtau{X}}{n}\left(1 {+} 2\ln \frac{2}{\delt} \right)}{+}\frac{8\tau c}{3n}\ln \frac{2}{\delt}.
\end{aligned}
\end{equation*}
\end{restatable}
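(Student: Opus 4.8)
The plan is to pass from the dependent sum to two independent families of block sums via Lemma~\ref{lem:blocking}, control each family by a vector-valued Bernstein inequality expressed through the \emph{population} variance of the blocks, recognise that variance as $\Corrtau{X}$, and finally recombine the two contributions. Throughout I set $\eta := \delt/2$ and write the centred block sums $Z_k := Y_k - \EE[Y_k]$ and $Z_k' := Y_k' - \EE[Y_k']$, with $Y_k,Y_k'$ as in~(\ref{Withinblockaverages}); since each block contains $\tau$ terms with $\|X_t\|\le c$, we have $\|Y_k\|\le \tau c$ and hence $\|Z_k\|,\|Z_k'\|\le 2\tau c =: M$.

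The analytic heart of the argument is a Bernstein inequality for a sum $S=\sum_{k=1}^m Z_k$ of independent, zero-mean vectors in $\RKHS$ with $\|Z_k\|\le M$: writing $V:=\sum_k \EE\|Z_k\|^2$, I will use that, with probability at least $1-\eta$, $\|S\|\le \sqrt{V\,(1+2\ln(1/\eta))}+\tfrac{2M}{3}\ln(1/\eta)$. The decisive feature here --- and the step I expect to be the main obstacle --- is the additive $1$ inside the root. It originates from the identity $\EE\|S\|^2=\sum_k\EE\|Z_k\|^2=V$ (valid because the $Z_k$ are independent and centred), which by Jensen gives $\EE\|S\|\le\sqrt V$; this deterministic ``mean'' contribution must be \emph{merged} with the genuine Bernstein tail rather than merely added to it. Concretely I would obtain the tail by writing $\|S\|=\sup_{\|u\|\le 1}\sum_k\scalarp{u,Z_k}$, noting that the associated weak variance $\sup_{\|u\|\le1}\sum_k\EE[\scalarp{u,Z_k}^2]$ is bounded by $V$ and each summand by $M$, and applying a one-sided Talagrand/Bousquet concentration inequality for the supremum of a centred empirical process (Pinelis' Hilbert-space inequality is an alternative route). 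Tracking constants so as to reproduce exactly the coefficients $1+2\ln$ and $2M/3$ is the delicate part.

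Next I connect $V$ to $\Corrtau{X}$. Expanding the square gives, for each block, $\EE\|Z_k\|^2=\sum_{t,s\in I_k}\big(\EE[\scalarp{X_t,X_s}]-\scalarp{\EE[X_t],\EE[X_s]}\big)$, and similarly for the primed blocks. Writing $V_I:=\sum_k\EE\|Z_k\|^2$ and $V_{I'}:=\sum_k\EE\|Z_k'\|^2$ and recalling that $\Stau=\bigcup_k (I_k\times I_k)\cup(I_k'\times I_k')$ with $|\Stau|=2m\tau^2$, summation over $k$ yields $V_I+V_{I'}=|\Stau|\,\Corrtau{X}=2m\tau^2\,\Corrtau{X}$.

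Finally I assemble the pieces. I apply the vector Bernstein bound to the $I$-family at level $\eta=\delt/2$ with variance $V_I$, and to the $I'$-family at level $\delt/2$ with variance $V_{I'}$; the resulting constant (hence suitably measurable) thresholds serve as $F$ and $F'$ in Lemma~\ref{lem:blocking}. The total failure budget is $\delt/2+\delt/2+2(m-1)\beta_{\mu}(\tau)=\delta$, using $m=n/(2\tau)$ and the definition of $\delt$. On the complementary event, $\|\sum_{t}(X_t-\EE[X_t])\|\le F+F'$ with $F+F'=(\sqrt{V_I}+\sqrt{V_{I'}})\sqrt{1+2\ln(2/\delt)}+\tfrac{4M}{3}\ln(2/\delt)$. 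Bounding $\sqrt{V_I}+\sqrt{V_{I'}}\le\sqrt{2(V_I+V_{I'})}=2\tau\sqrt{m\,\Corrtau{X}}$, substituting $M=2\tau c$, dividing by $n=2m\tau$, and simplifying with $1/m=2\tau/n$ reproduces the two claimed terms. The only slack introduced beyond the Bernstein step is the factor $\sqrt2$ in $\sqrt{V_I}+\sqrt{V_{I'}}\le\sqrt{2(V_I+V_{I'})}$, which is precisely what symmetrises the two interlaced families into the single quantity $\Corrtau{X}$.
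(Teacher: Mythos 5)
Your overall architecture coincides with the paper's proof: pass to the two interlaced block families via Lemma~\ref{lem:blocking} with constant thresholds $F,F'$, spend $\delt/2$ on each family plus $2(m-1)\beta_{\mu}(\tau)$ for the decoupling, identify the total block variance $V_I+V_{I'}=\sum_k\EE\Vert Y_k-\EE[Y_k]\Vert^2+\sum_k\EE\Vert Y_k'-\EE[Y_k']\Vert^2=|\Stau|\,\Corrtau{X}=2m\tau^2\Corrtau{X}$, and recombine via $\sqrt{V_I}+\sqrt{V_{I'}}\le\sqrt{2(V_I+V_{I'})}$. All of that is correct and is exactly what the paper does. The genuine gap is the step you yourself flag as delicate: the per-family inequality $\Vert S\Vert\le\sqrt{V\,(1+2\ln(1/\eta))}+\tfrac{2M}{3}\ln(1/\eta)$ with the additive $1$ \emph{inside} the square root. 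This is not established by the tools you invoke, nor by the paper's. The McDiarmid route (the paper's Theorem~\ref{thm:Bernstein-inequality}) controls $\Vert S\Vert-\EE\Vert S\Vert$, and Jensen with independence gives $\EE\Vert S\Vert\le\sqrt V$; adding the two yields $\sqrt V\bigl(1+\sqrt{2\ln(1/\eta)}\bigr)+\tfrac{2M}{3}\ln(1/\eta)$, and since $(1+\sqrt{2L})^2=1+2L+2\sqrt{2L}>1+2L$ for all $L>0$, your merged form is \emph{strictly stronger}; the merge can only be bought at the price of a factor $\sqrt2$, via $1+\sqrt{2L}\le\sqrt{2(1+2L)}$. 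Your proposed repairs do not close this: Bousquet's inequality carries the variance proxy $\sigma^2+2b\,\EE Z\ge V$, so it is worse than the McDiarmid route here, and Pinelis--Sakhanenko-type bounds that control $\Vert S\Vert$ directly (with no mean term) come with a prefactor $2$ and coefficients of the form $2\sqrt{\cdot}\,\ln(2/\eta)$, again not the constants you need.

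Consequently, what your plan actually establishes (after repairing the Bernstein step with the unmerged form) is
\begin{equation*}
\left\Vert \frac1n\sum_{t=1}^{n}\left(X_t-\EE[X_t]\right)\right\Vert \le \sqrt{\frac{2\tau\Corrtau{X}}{n}}\left(1+\sqrt{2\ln\frac{2}{\delt}}\right)+\frac{8\tau c}{3n}\ln\frac{2}{\delt},
\end{equation*}
which is weaker than the stated theorem by at most a factor $\sqrt2$ in the first term, since it is bounded by $\sqrt{\tfrac{4\tau\Corrtau{X}}{n}\bigl(1+2\ln\tfrac{2}{\delt}\bigr)}$ but not by $\sqrt{\tfrac{2\tau\Corrtau{X}}{n}\bigl(1+2\ln\tfrac{2}{\delt}\bigr)}$. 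It is worth noting that the paper's own appendix proof has the same character: its dependent Bernstein theorem carries the coefficient $\bigl(\sqrt2+2\sqrt{\ln(2/\delta)}\bigr)$, i.e.\ precisely the unmerged form, so the stated constant of Theorem~\ref{thm:Data-dependent-Bernstein} is tighter than what that argument literally yields. So your blueprint is essentially the paper's proof, and the extra tightness you tried to extract at the Bernstein step --- the $1$ inside the root at no multiplicative cost --- is asserted rather than proved; you should either prove the bound with the $\bigl(1+\sqrt{2\ln(2/\delt)}\bigr)$ coefficient, which the blocking argument genuinely supports, or supply a genuinely sharper Hilbert-space Bernstein inequality as a separate result.
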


If the failure probability $\delta$ is fixed, $\tau $ must be sufficiently
large to satisfy $\delta > 2(\frac{n}{2\tau}{-}1)\beta _{\mu}(\tau)$ and result from the specific assumptions on the decay of mixing coefficients.  The variance surrogate, $\Corrtau{X}$, can be bounded by $c^{2}$, so the entire term is at worst of order $ \sqrt{\tau /n}$, but it may become arbitrarily small depending on the joint distribution of the $X_{t}$.

If the law of the process is unknown, then the previous result is not
satisfactory, since in addition to the uncertain but unavoidable, mixing
assumptions we now also need assumptions on the behavior of correlations,
unless we want to return to the worst-case bound. Fortunately $\Corrtau{X}$ can be estimated from the same trajectory and the
estimates can be combined with Theorem \ref{thm:Data-dependent-Bernstein} to give empirical Bernstein-type inequalities, an idea which has
been successfully applied to a variety of problems \citep{audibert2007variance,maurer2009empirical,audibert2009exploration,burgess2020engineered,jin2022policy,tolstikhin2013pac,shivaswamy2010empirical,peel2010empirical,peel2013empirical}.

We give two versions, using a biased variance estimate for general processes and an unbiased estimate for stationary processes. The symmetric structure of $\Stau$ implies, that $\sum_{\left(
t,s\right) \in \Stau}\scalarp{\EE[X_{t}], \EE[X_{s}]} \geq 0$, so that the centered correlations in $\Corrtau{X}$ can be bounded by uncentered ones.
Using this biased estimator 
\begin{equation}
\label{eq:BEcorr}
\BECorrtau{X} = \frac{1}{|\Stau|}\sum_{\left(
t,s\right) \in \Stau}\scalarp{X_{t},X_{s}},
\end{equation} where $|\Stau| = 2m\tau^{2}$, then leads to the following.

\begin{restatable}{theorem}{thmempBernbiased}
\label{thm:empirical-Bernstein-biased}Under the conditions of Theorem %
\ref{thm:Data-dependent-Bernstein} we have for $\delta \in (0,1)$, and $\delt = \delta{-}2(\frac{n}{2\tau}{-}1)\beta _{\mu}(\tau) > 0$ with probability at least $1{-}\delta$ that
\begin{equation*}
\begin{aligned}
\left\Vert \frac{1}{n}\sum_{i}\left( X_{i}{-}\mathbb{E}\left[ X_{i}\right]
\right) \right\Vert \leq \sqrt{\frac{2\tau\BECorrtau{X}}{n}\left( 1{+}2\ln(\frac{4}{\delt})\right)}{+}\frac{32\tau c}{3n}\ln
\frac{4}{\delt}.
\end{aligned}
\end{equation*}
\end{restatable}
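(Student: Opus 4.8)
The plan is to combine Theorem~\ref{thm:Data-dependent-Bernstein} with a one-sided concentration bound for the empirical surrogate $\BECorrtau{X}$, trading the unknown variance proxy $\Corrtau{X}$ for the observable $\BECorrtau{X}$. The first ingredient is purely deterministic: I would show $\Corrtau{X}\le\EE[\BECorrtau{X}]$. Indeed $\Corrtau{X}=\EE[\BECorrtau{X}]-\tfrac{1}{|\Stau|}\sum_{(t,s)\in\Stau}\scalarp{\EE[X_t],\EE[X_s]}$, and the block structure $\Stau=\bigcup_{k}(I_k\times I_k)\cup(I_k'\times I_k')$ makes the subtracted sum equal to $\tfrac{1}{|\Stau|}\sum_k\big(\norm{\sum_{i\in I_k}\EE[X_i]}^2+\norm{\sum_{i\in I_k'}\EE[X_i]}^2\big)\ge 0$; this is the observation announced just before \eqref{eq:BEcorr}.

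The second ingredient controls the lower tail of $\BECorrtau{X}$. Using $\sum_{(t,s)\in\Stau}\scalarp{X_t,X_s}=\sum_k(\norm{Y_k}^2+\norm{Y_k'}^2)$, the surrogate is an average of the block terms $B_k:=\norm{Y_k}^2/\tau^2$, each lying in $[0,c^2]$ since $\norm{Y_k}\le\tau c$, and these are mutually independent within each interlaced sequence up to the usual $\beta$-mixing penalty. A lower-tail Bernstein/Bennett inequality, with the per-block variance bounded by $\mathrm{Var}(B_k)\le c^2\,\EE[B_k]$ so that the variance proxy is re-expressed through $\EE[\BECorrtau{X}]$ itself, produces a self-referential estimate; inverting the resulting quadratic in $\sqrt{\EE[\BECorrtau{X}]}$ and using $\sqrt{a+b}\le\sqrt a+\sqrt b$ gives a bound of the shape $\sqrt{\EE[\BECorrtau{X}]}\le\sqrt{\BECorrtau{X}}+O(c\sqrt{\tau\ln(1/\delt)/n})$.

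Chaining the two ingredients yields $\sqrt{\Corrtau{X}}\le\sqrt{\BECorrtau{X}}+O(c\sqrt{\tau\ln(1/\delt)/n})$ with high probability. Substituting this into the slow term $\sqrt{2\tau\Corrtau{X}/n\,(1+2\ln\tfrac{2}{\delt})}$ of Theorem~\ref{thm:Data-dependent-Bernstein} and distributing the square root, the leading piece keeps $\BECorrtau{X}$ while the extra contribution $\sqrt{2\tau/n\,(1+2\ln\tfrac{2}{\delt})}\cdot O(c\sqrt{\tau\ln(1/\delt)/n})=O(\tau c\,\ln(1/\delt)/n)$ folds into the fast $O(\tau/n)$ term. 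This explains why the slow term of Theorem~\ref{thm:empirical-Bernstein-biased} retains exactly $\BECorrtau{X}$ while the constant of the fast term grows from $\tfrac{8\tau c}{3}$ to $\tfrac{32\tau c}{3}$.

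The delicate point---and the reason the statement carries the residual budget $\delt=\delta-2(\tfrac{n}{2\tau}-1)\beta_{\mu}(\tau)$ together with the argument $4/\delt$ rather than $2/\delt$---is to charge the mixing penalty only once. I would therefore not union-bound two separate blocking arguments but route everything through the data-dependent form of Lemma~\ref{lem:blocking}, taking $F$ and $F'$ to be the empirical bounds assembled from the per-sequence quantities $\widetilde{V}^{(1)}_{\tau}=\tfrac{1}{m\tau^2}\sum_k\norm{Y_k}^2$ and $\widetilde{V}^{(2)}_{\tau}=\tfrac{1}{m\tau^2}\sum_k\norm{Y_k'}^2$, which are measurable for the correct block $\sigma$-algebras, and recombining through $\BECorrtau{X}=\tfrac12(\widetilde{V}^{(1)}_{\tau}+\widetilde{V}^{(2)}_{\tau})$ only at the end. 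Under $\Pr_{I}$ and $\Pr_{I'}$ each per-sequence inequality becomes a purely independent empirical Bernstein bound that internally splits its budget between a mean-deviation event and a variance-concentration event; dividing $\delt$ into four equal parts (two sequences, two events each) produces the $\ln(4/\delt)$ factor. The main obstacle is exactly this interlocking: because $\BECorrtau{X}$ mixes both interlaced sequences it is measurable for neither single block and cannot be inserted directly into $F,F'$, so the per-sequence split is essential, and closing the self-referential variance estimate with fully explicit constants is the remaining technical chore.
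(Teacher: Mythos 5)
Your proposal is correct and, in its final (paragraph-4) form, is essentially the paper's own proof. The paper likewise routes everything through Lemma~\ref{lem:blocking} exactly once, taking $F\left(\mathbf{X}\right)=\sqrt{\sum_{k}\left\Vert Y_{k}\right\Vert^{2}}\left(1+\sqrt{2\ln\left(4/\delta\right)}\right)+\tfrac{16\tau c}{3}\ln\left(4/\delta\right)$ and the analogous $F'$ built from the $Y_{k}'$ (each measurable for its own block $\sigma$-algebra), bounds each under $\Pr_{I}$, $\Pr_{I'}$ by the independent-data empirical Bernstein inequality of Proposition~\ref{prop:empirical-biased}, whose internal mean/variance union bound produces exactly the four-way budget split (two sequences, two events each) and hence the $\ln\left(4/\delt\right)$, and only at the end recombines $\sum_{k}\left\Vert Y_{k}\right\Vert^{2}+\sum_{k}\left\Vert Y_{k}'\right\Vert^{2}=|\Stau|\,\BECorrtau{X}$ via $\sqrt{a}+\sqrt{b}\le\sqrt{2}\sqrt{a+b}$; your paragraph-1 positivity observation is the paper's elementary estimate $\sqrt{\sum_{i}\EE\left\Vert X_{i}-\EE\left[X_{i}\right]\right\Vert^{2}}\le\sqrt{\sum_{i}\EE\left\Vert X_{i}\right\Vert^{2}}$ in Appendix~\ref{app:emp_bounds}. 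Note that your paragraphs 1--3, taken literally, would have charged the mixing penalty twice by union-bounding a variance event against the conclusion of Theorem~\ref{thm:Data-dependent-Bernstein}; you correctly identify and repair this yourself. The only substantive divergence is the tool used for the lower tail of the empirical variance: you propose a scalar Bernstein/Bennett bound for the block variables $\left\Vert Y_{k}\right\Vert^{2}/\tau^{2}\in[0,c^{2}]$ with variance bounded by $c^{2}$ times the mean, followed by inversion of the self-referential quadratic, whereas the paper invokes the ready-made self-bounding concentration inequality of Theorem~\ref{thm:selfbound} applied to $f\left(\mathbf{x}\right)=\sum_{k}\left\Vert y_{k}\right\Vert^{2}$ with $a=b=(\tau c)^{2}$, which yields $\sqrt{\EE\left[f\right]}\le\sqrt{f}+\tau c\sqrt{2\ln\left(1/\delta\right)}$ in one step. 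Both tools are valid, since the quantity being concentrated is real-valued; yours is the classical empirical-Bernstein derivation, while the paper's avoids the quadratic inversion and makes the constant bookkeeping (the passage from $\tfrac{16c}{3}$ per sequence to the stated $\tfrac{32\tau c}{3}$) immediate.
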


If the observed, uncentered correlations decay quickly relative to $\tau $
the first term can be nearly as small as $\sqrt{1/n}$, but no smaller. {Because we have at least $n$ elements in the diagonal of the whole matrix so $\BECorrtau{X}$ can not be smaller than $\min_{t} \left\Vert X_{t} \right\Vert^{2}_{\RKHS}/\tau$}.

We give an improved estimate for stationary processes. A process is called stationary if the joint distributions satisfy $\mu
_{I}=\mu _{I{+}t}$ for any $t \in \mathbb{N}$. In this case, we can estimate the unbiased variance using a u-statistic as follows:
\begin{equation}
\label{eq:Ecorr}
\UECorrtau{X} = \frac{1}{|\Stau|}\left(\!\sum_{\left(
t,s\right) \in \Stau}\!\scalarp{X_{t},X_{s}} {-}\frac{1}{m{-}1}\sum_{\left(
t,s\right) \in \Stautilde} \!\scalarp{X_{t},X_{s}}\right)
\end{equation}  where $|\Stau| = 2m\tau^{2}$, and $|\Stautilde| = 2m(m-1)\tau^2$. Here \begin{equation*}
\Stautilde=\bigcup_{k\neq l:k,l\in \left[ m\right] }\left( I_{k}\times
I_{l}\right) \cup \left( I_{k}^{\prime }\times I_{l}^{\prime }\right) .
\end{equation*}

\begin{restatable}{theorem}{thmempBernunbiased}
\label{thm:empirical-Bernstein-unbiased}Under the conditions of Theorem %
\ref{thm:Data-dependent-Bernstein}, if the process is also
stationary, we have for $\delta \in \left(
0,2/e\right) $ and $\delt = \delta{-}2(\frac{n}{2\tau}{-}1)\beta _{\mu}(\tau) > 0$ with probability at least $1{-}\delta$ that
\begin{equation*}
\begin{aligned}
\left\Vert \frac{1}{n}\sum_{i}\left( X_{i}{-}\mathbb{E}\left[ X_{i}\right]
\right) \right\Vert \leq \sqrt{\frac{2\tau\UECorrtau{X}}{n}\left( 1{+}2\ln(
\frac{4}{\delt})\right)} {+}\frac{22\tau c}{n}\ln(
\frac{4}{\delt}).
\end{aligned}
\end{equation*}
\end{restatable}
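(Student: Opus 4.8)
The plan is to run the argument behind Theorem~\ref{thm:Data-dependent-Bernstein} through a single application of the blocking Lemma~\ref{lem:blocking}, inserting an empirical replacement for the true variance surrogate separately inside each of the two independent block-worlds. First I would record the algebraic identities tying $\UECorrtau{X}$ to the block sums of (\ref{Withinblockaverages}): the sums over $\Stau$ and $\Stautilde$ collapse to $\sum_k\|Y_k\|^2+\sum_k\|Y_k'\|^2$ and $\sum_{k\ne l}\scalarp{Y_k,Y_l}+\sum_{k\ne l}\scalarp{Y_k',Y_l'}$, so that $\UECorrtau{X}=\tfrac{1}{2\tau^2}(\widehat V_Y+\widehat V_{Y'})$ with $\widehat V_Y=\tfrac{1}{2m(m-1)}\sum_{k\ne l}\|Y_k-Y_l\|^2$ the ordinary unbiased sample variance of the $\Sigma(I_1\cup\dots\cup I_m)$-measurable family $\{Y_k\}$, and similarly for the $\Sigma(I_1'\cup\dots\cup I_m')$-measurable $\{Y_k'\}$. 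Under $\Pr_I$ the $Y_k$ are independent, and stationarity makes them identically distributed, whence $\EE_{\Pr_I}\widehat V_Y=\mathrm{Var}(Y)$ and $\Corrtau{X}=\tfrac{1}{2\tau^2}(\mathrm{Var}(Y)+\mathrm{Var}(Y'))$; this exact unbiasedness of the u-statistic, which fails for non-stationary blocks, is precisely where the stationarity hypothesis is used.

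The engine is then applied world by world. Under $\Pr_I$ the vector Bernstein inequality underlying Theorem~\ref{thm:Data-dependent-Bernstein} controls $\|\sum_k(Y_k-\EE Y_k)\|$ by $\sqrt{m\,\mathrm{Var}(Y)\,(1+2\ln\tfrac{4}{\delt})}$ plus a fast term, at confidence $1-\delt/4$; independently, a Hilbert-space empirical-standard-deviation bound gives, with probability $1-\delt/4$, that $\sqrt{\mathrm{Var}(Y)}\le\sqrt{\widehat V_Y}+2\tau c\sqrt{2\ln(4/\delt)/(m-1)}$ (here $\|Y_k\|\le\tau c$ bounds the diameter by $2\tau c$). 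Intersecting these two events makes the purely empirical quantity $F$, built from $\widehat V_Y$, an upper bound for $\|\sum_k(Y_k-\EE Y_k)\|$ with $\Pr_I$-probability at least $1-\delt/2$, and likewise for $F'$ under $\Pr_{I'}$. The two "$\delt/4$" events per world are what turn the $\ln(2/\delt)$ of Theorem~\ref{thm:Data-dependent-Bernstein} into $\ln(4/\delt)$. Feeding $F,F'$ into Lemma~\ref{lem:blocking} then costs only the single mixing penalty $2(m-1)\beta_\mu(\tau)$ already folded into $\delt$, giving total failure probability $\delta$.

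On the favourable event I would substitute $\sqrt{\mathrm{Var}(Y)}\le\sqrt{\widehat V_Y}+\varepsilon$, $\varepsilon=2\tau c\sqrt{2\ln(4/\delt)/(m-1)}$, into the square-root term, so $\sqrt{m\,\mathrm{Var}(Y)\,b}\le\sqrt{mb}\,\sqrt{\widehat V_Y}+\varepsilon\sqrt{mb}$ with $b=1+2\ln\tfrac{4}{\delt}$, and the same for the primed world. Combining the two square roots through $\sqrt{\widehat V_Y}+\sqrt{\widehat V_{Y'}}\le\sqrt{2(\widehat V_Y+\widehat V_{Y'})}=2\tau\sqrt{\UECorrtau{X}}$ and using $n=2m\tau$ (so $1/m=2\tau/n$) reassembles the leading term as exactly $\sqrt{\tfrac{2\tau\UECorrtau{X}}{n}(1+2\ln\tfrac{4}{\delt})}$. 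The correction $\varepsilon\sqrt{mb}/n$ per world is $O(\tau c\,\ln(4/\delt)/n)$ because $\varepsilon\sqrt m=O(\tau c\sqrt{\ln(4/\delt)})$ and $\sqrt b=O(\sqrt{\ln(4/\delt)})$; together with the Bernstein fast terms these $O(\tau c\,\ln(4/\delt)/n)$ pieces are summed and bounded by $\tfrac{22\tau c}{n}\ln\tfrac{4}{\delt}$. The hypothesis $\delta\in(0,2/e)$ guarantees $\ln(4/\delt)>1$, which is exactly what lets the several leftover additive constants be absorbed into that single fast term.

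The main obstacle is the one-sided empirical-standard-deviation inequality $\sqrt{\mathrm{Var}(Y)}\le\sqrt{\widehat V_Y}+O(\tau c\sqrt{\ln(4/\delt)/m})$ for $\RKHS$-valued blocks. A naive bounded-difference (McDiarmid) estimate on $\sqrt{\widehat V_Y}$ is useless: since $\sqrt{\widehat V_Y}$ is $1/\sqrt{m-1}$-Lipschitz in the product norm but a single block can move by the full diameter $2\tau c$, McDiarmid only yields an $m$-independent $O(\tau c)$ deviation. The genuine fluctuation is $O(\tau c/\sqrt m)$ and must be captured by a variance-sensitive argument---a Bernstein-type lower tail for the variance u-statistic, or equivalently the convexity/self-bounding of $\sqrt{\widehat V_Y}$---whose effective variance proxy is proportional to $\tau^2c^2\,\mathrm{Var}(Y)$; this proportionality is what makes $\mathrm{Var}(Y)$ cancel and delivers a deviation in the standard deviation that is scale-free in $\mathrm{Var}(Y)$ and correctly of order $1/\sqrt m$. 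Establishing this Hilbert-space refinement of the Maurer--Pontil variance bound, with the right direction of the inequality (so that the concavity of the square root does not work against us), is the crux that keeps the mixing-inflated error in the fast $O(\tau/n)$ term while leaving $\sqrt{\UECorrtau{X}}$ in the slow term.
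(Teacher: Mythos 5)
Your proposal is correct and follows essentially the same route as the paper: the blocking Lemma~\ref{lem:blocking} with data-dependent $F,F'$, a per-world union of the vector Bernstein inequality with a one-sided empirical standard-deviation bound at confidence $\delt/4$ each, the identity $\UECorrtau{X}=\tfrac{1}{2\tau^2}(\widehat V_Y+\widehat V_{Y'})$ whose unbiasedness uses stationarity, and reassembly via $\sqrt{a}+\sqrt{b}\le\sqrt{2(a+b)}$. The ``crux'' you defer is handled in the paper by exactly the tool you name: the self-bounding concentration inequality (Theorem~\ref{thm:selfbound}, from \citep{maurer2018empirical}) applied to the u-statistic $f(\mathbf{x})=\sum_{i\neq j}\|x_i-x_j\|^2$, for which one checks $b=8(m-1)(\tau c)^2$ and $D^2f\le 16(m-1)(\tau c)^2 f$, yielding the $O(\tau c\sqrt{\ln(1/\delta)/m})$ deviation on the standard deviation that your plan requires.
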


Think of $\left[ n\right] \times \left[ n\right] $ as the surface of a chessboard with fields of side length $\tau $. Then $m=4$, and $\Stau$ is the union of the squares on the white diagonal, and $\Stautilde$ is the union of all other white squares (see Figure \ref{fig:blocks}). If $\left( s,t\right) \in \Stau$ then $s$ and $t$ are no more than $\tau {-}1$ apart, while for $\left( s,t\right) \in \Stautilde$ they are at least $\tau $ apart.
\begin{figure}[htbp]
    \centering
\includegraphics[width=0.8\linewidth]{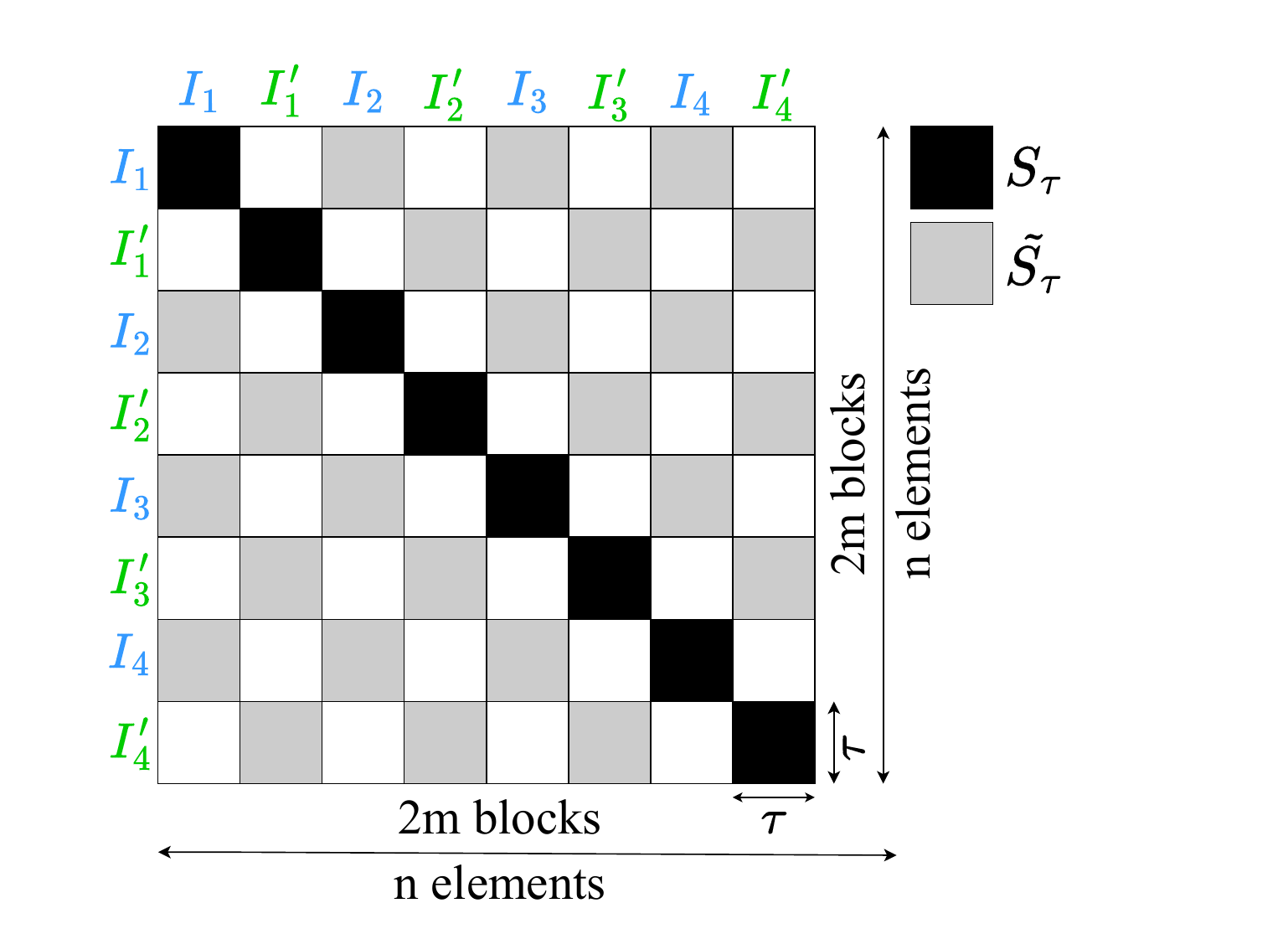}
    \caption{The blocks belonging to $\Stau$ and $\Stautilde$.}
    \label{fig:blocks}
\end{figure}
Notice that the passage to empirical bounds incurs an additional constant
factors only in the bound for the last term.
\subsection{Proofs}
\textbf{\textit{Sketch of proof}} (detailed in the supplementary file). As explained in Section \ref{sec:mixing_blocks} the method of blocks provides a tool to port bounds from the independent to the dependent case, so we first establish inequalities for independent data.

A concentration inequality of \citep[Theorem 3.8]{McDiarmid98} quite easily leads to the
following Bernstein-type inequality for independent, centered vectors $Y_{i}$
in Hilbert space, satisfying $\left\Vert Y_{i}-\mathbb{E}%
 [Y_{i}]\right\Vert \leq c$.%
\begin{equation}
\begin{aligned}
\Pr \biggl\{ \left\Vert \sum_{k}Y_{k}-\mathbb{E}\left[ Y_{k}\right]
\right\Vert >\sqrt{\sum_{k}\mathbb{E}\left\Vert Y_{k}-\mathbb{E}\left[ Y_{k}%
\right] \right\Vert ^{2}}\left(1+\sqrt{2\ln \left( 1/\delta \right) }%
\right) +\frac{4c}{3}\ln \left( 1/\delta \right) \biggl\} <\delta .
\end{aligned}
\label{template thm 1}
\end{equation}

The additional 1 in $1+\sqrt{2\ln{(\frac{1}{\delta})}}$ arises from a bound of the expected norm by the root of the variance. To obtain empirical bounds, we want to combine this with
estimates of the variance term. Without additional assumptions, we obtain
from a concentration inequality for real-valued functions that

\begin{equation*}
\begin{aligned}
\sqrt{\sum_{k}\mathbb{E}\left\Vert Y_{k}-\mathbb{E}\left[ Y_{k}\right]
\right\Vert ^{2}}&\leq \sqrt{\sum_{k}\left\Vert Y_{k}\right\Vert ^{2}}+c\sqrt{2\ln \left(
1/\delta \right) }.
\end{aligned}
\end{equation*}

Combining this with (\ref{template thm 1}) in a union bound gives, with probability at least $%
1-\delta $ 
\begin{equation*}
\begin{aligned}
\Pr \biggl\{ \left\Vert \sum_{k}Y_{k}-\mathbb{E}\left[ Y_{k}\right]
\right\Vert >\sqrt{\sum_{k}\left\Vert Y_{i}\right\Vert ^{2}}\left( 1+\sqrt{%
2\ln \left( 2/\delta \right) }\right) +\frac{16c}{3}\ln \left( 2/\delta
\right) \biggl\} <\delta .  \label{template thm 2}
\end{aligned}
\end{equation*}%
This is our independent template for Theorem 2.

To obtain Theorem 3, we can use stationarity of the process, which means that
the $Y_{k}$ (or the $Y_{k}^{\prime }$) can now be assumed to have identical
distribution. In this case, a slightly more involved argument gives the
estimate
\begin{equation*}
\begin{aligned}    
\sqrt{\sum_{k}\mathbb{E}\left\Vert Y_{k}-\mathbb{E}\left[ Y_{k}\right]
\right\Vert ^{2}}\leq \sqrt{\frac{1}{2\left( m-1\right) }\sum_{k,l:k\neq
l}\left\Vert Y_{k}-Y_{l}\right\Vert ^{2}}+4c\sqrt{2\ln \left( 1/\delta
\right) }.
\end{aligned}
\end{equation*}

Again, a union bound with (\ref{template thm 1}) gives the independent template for Theorem 3. 

Now we port these inequalities to the dependent case using Lemma \ref{lem:blocking}. For Theorem 1, we define
\begin{equation*}
\begin{aligned}
F\left( \mathbf{X}\right)  =\sqrt{\sum_{k=1}^{m}\mathbb{E}\left\Vert
Y_{k}-\mathbb{E}\left[ Y_{k}%
\right]\right\Vert ^{2}}\left( 1+\sqrt{2\ln \left( 2/\delta \right) }\right) +\frac{4\tau c}{3}\ln \left( 2/\delta \right)  
\end{aligned}
\end{equation*}
with $F^{\prime }\left( \mathbf{X}\right)$ defined analogously, replacing $\overline{Y}_{k}$ by $Y^\prime_k$. 
Substitution of these definitions in the body of Lemma \ref{lem:blocking} , using  (\ref{template thm 1}) to bound the two independent probabilities, 
some algebraic simplifications and reinsertion of the overall factor of $1/n=1/\left(
2m\tau \right) $\ give Theorem 1.

Above $F$ and $F^{\prime }$ were simply constant functions. This is
different for the empirical bounds, where they contain the variance
estimates. To obtain Theorem 2, we let  
\[
F\left( \mathbf{X}\right) =\sqrt{\sum_{k=1}^{m}\left\Vert Y_{k}\right\Vert
^{2}}\left( 1+\sqrt{2\ln \left( 4/\delta \right) }\right) +\frac{16c\tau}{3}\ln
\left( 4/\delta \right) ,
\]%
which is $\Sigma \left( I_{1}\cup ...\cup I_{m}\right) $-measurable, and
replace $Y_{k}$ by $Y_{k}^{\prime }$ for the analogous definition of $%
F^{\prime }\left( \mathbf{X}\right) $. Then, using Lemma \ref{lem:blocking} and (%
\ref{template thm 2}), unraveling the definitions, and some simplifications give Theorem 2. Similarly, 
\begin{equation*}
\begin{aligned}
F\left( \mathbf{X}\right) =\sqrt{\frac{1}{2\left( m{-}1\right) }\sum_{k,l:k\neq
l}\left\Vert Y_{k}-Y_{l}\right\Vert ^{2}}\left( 1{+}\sqrt{2\ln \left( 4/\delta
\right) }\right) +11c\tau\ln \left( 4/\delta \right),
\end{aligned}
\end{equation*}
and the corresponding $F^{\prime }\left( \mathbf{X}\right) $ give Theorem 3.

\section{Applications}
In this section, we address two important fields of study where our bound can lead to new advances. First, we note that the concentration inequality is naturally linked to covariance estimation, which plays an important role in machine learning and statistics \citep{markowitz1952portfolio,vonstorch1999principal,schäfer2005shrinkage, mollenhauer2022kernel}. 
The second application concerns data-driven dynamical systems and, in particular, transfer operators, which are also widely used in science and engineering~\cite[see e.g.][and references therein]{Brunton2022,tuckerman2023statistical}.

{\bf Covariance estimation~}
We apply our results to the estimation of covariance operators on the Reproducing Kernel Hilbert Space (RKHS) $\RKHS$ with an associated kernel function $k:\X\times\X \to \R$. Letting $\phi:\X \to \RKHS$ be a {\em feature map} such that $k(x,x^\prime) {=} \scalarp{\phi(x), \phi(x^\prime)}$ for all $x, x^\prime \in \X$, we aim to bound the Hilbert-Schmidt norm  estimation error. Hence, in this setting, the observed vectors are operators, and to apply our bounds, one needs to replace $X_{t}$ by the rank-one operator $Y_t {=}\phi(X_{t})\otimes \phi(X_{t})$. Then, $\BECorrtau{Y}$ and $\UECorrtau{Y}$ can be easily computed by using entries of the kernel matrix.
If the process is not stationary,
we can only estimate the ergodic average of covariance operators, that is 
$\left( 1/n\right) \sum_{t}\mathbb{E}\left[  \phi(X_{t})\otimes
\phi(X_{t}) \right] $, which becomes $C{=}\mathbb{E}\left[
\phi(X_{1})\otimes \phi(X_{1}) \right] $ for stationary processes. 
The transcription of the previous results fortunately, is affected by simply replacing all
$\Cxy{=} 
\left( 1/n\right) \sum_{t}\mathbb{E}%
\left[  \phi(X_{t})\otimes \phi(X_{t+1}) \right] $ because the
inner products change; the block variables are then only separated by $\tau
-1$ instead of $\tau $, and we also need to observe one more point on the
trajectory.

To demonstrate the application of our empirical Bernstein's inequalities to estimate the covariance and the improvements it introduces, we first adapt Pinelis and Sakhanenko's concentration inequality for random variables in a separable Hilbert space (see \citep[][Proposition 2]{caponnetto2007}) to trajectory data, using the method of blocks and $\beta$-mixing, as it was suggested in \citep{kostic2023sharp}, see Appendix \ref{app:cov_est}

Assuming that the data is sampled from the stationary distribution, our bounds in Theorems   \ref{thm:empirical-Bernstein-biased} and \ref{thm:empirical-Bernstein-unbiased} apply. Compared to worst-case bounds, the improvement lies in the correlation factors $\UECorrtau{X}$ and $\BECorrtau{X}$ 
in \eqref{eq:Ecorr} and \eqref{eq:BEcorr}, respectively, affecting the slow term in the bound. Before we illustrate this improvement empirically, we discuss a related application. 

{\bf Learning dynamical systems~} 
Recent advances in the statistical theory for Koopman operator learning have highlighted the significant impact of covariance estimation on the ability to generalize when forecasting and interpreting dynamical systems from data-driven models~\citep{Kostic2022, Philipp2024}. While statistical learning theory has been thoroughly developed for kernel methods~\citep{kostic2023sharp,kostic2024consistent} under $\beta$-mixing assumptions, an important gap remains between the learning rates and generalization bounds, and the practical performance of the methods. This is particularly interesting when comparing recent deep learning advances with kernel methods~\citep{kostic2024dpnets}. Here, we briefly review the transfer operator learning and then present novel contributions to this field based on our EBI.

For a {\em time-homogeneous} Markov chain with an invariant (stationary) distribution $\im$ the (stochastic) \textit{Koopman operator} $\Koop \colon \Lii\to\Lii$  is given by \\
\(
\mbox{ }\quad[\Koop f](x) {:=} \displaystyle{\int_{\X}} p(x, dy)f(y) {= }\mathbb{E}\left[f(X_{t {+} 1}) \middle | X_{t} = x\right],
\)
where $\quad f\in\Lii,x\in\X$.

In many practical cases, $\Koop$ is unknown, but data from one or multiple system trajectories are available. A framework for operator regression learning was introduced in~\citep{Kostic2022} to estimate the Koopman operator on $\Lii$ within an RKHS using an associated feature map $\fH:\X \to \RKHS$. In this vector-valued regression, the risk functional is defined as 
\(
    \Risk(G) = \textstyle{\EE_{X\sim\pi,X^+\sim p(\cdot\vert X))} \norm{\phi(Y) - G^{*}\phi(X)}^{2}}_\RKHS,
\)
and the task is to learn $\Koop$ by minimizing the risk over some class of operators $G\colon{\RKHS\to\RKHS}$ using a dataset of consecutive states  $\Data := (x_{i}, x^+_{i})_{i = 1}^{n}$. Typical scenario is to obtain the states from a single trajectory of the process after reaching the equilibrium distribution, that is $X_0\sim \pi$, $X^+_i\equiv X_{i+1} \sim p(\cdot\,\vert\,X_i)$, $i=2,\ldots,n$, and the popular estimator in this setting is the Reduced Rank Regression (RRR) one $\ERRR$ obtained by minimizing regularized empirical risk \(
    {\ERisk}_\lambda(\Estim) := \textstyle{\tfrac{1}{n}\sum_{i \in[n]}} \norm{\phi(x^+_{i}) - \Estim^{*}\phi(x_i)}^{2}_{\RKHS} {+} \lambda\|\Estim\|^2_{\text{HS}},
\) over operators $\Estim$ of rank at most $r$, that is $\ERRR = \ECreg^{-1/2}\SVDr{\ECreg^{-1/2}\ECxy}$ is computed via $r$-truncated SVD $\SVDr{\cdot}$ and  {\em input} and {\em cross} empirical covariances $\ECx \,= \,\textstyle{\tfrac{1}{n}\sum_{i \in[n]}}\, \phi(x_{i}){\otimes} \phi(x_{i})$
and
$\ECxy\,= \,\textstyle{\tfrac{1}{n}\sum_{i \in[n]}}\, \phi(x_{i}){\otimes }\phi(x^+_{i})$, respectively, while
$\ECx_{\reg} = \ECx + \reg I_{\RKHS}$, c.f. \citep{Kostic2022,kostic2023sharp}.

The recent works \citep{Li2022, kostic2023sharp} on the mini-max optimal learning rate for Koopman operator regression in i.i.d. setting crucially rely on Pinelis and Sakhanenko's Inequality.  
As observed above, applying the method of blocks and $\beta$-mixing extends the i.i.d. analysis of transfer operator regression to realistic scenarios of learning from data trajectories of a stationary process. However, the overall approach relies on several (in practice unverifiable) assumptions and results in an unfavorable impact of mixing on the learning rates.

In the following, we present a novel risk bound for the reduced-rank Tikhonov estimator that circumvents the need for regularity assumptions, in \citep{Li2022,kostic2023sharp,kostic2024consistent,kostic2024learning}.
 
\begin{restatable}{theorem}{TikhonovRiskBound}
\label{thm:tikhonov_risk_bound}
Let $\mathbf{X}=(X_{t})_{t=1}^{n}$ be a stationary Markov chain with distribution $\mu$, and the risk definitions as above noting that $\pi=\mu_{1}$. Denote $\EEstim_{r, \lambda}$ be a minimizer of reduced-rank Tikhonov regularized empirical risk
and $\mathbf{Y} = (\fH(X_{t})\otimes\fH(X_{t}))_{t=1}^{n}$, $\mathbf{Z} = (\fH(X_{t})\otimes\fH(X_{t+1}))_{t=1}^{n}$, and $\mathbf{W} = (\|\fH(X_{t})\|^2)_{t=1}^{n}$. Assume $n =2m\tau$, and exists $\bcon\,>\,0$  such that $\norm{\fH(X_t)}^2{\leq }\bcon$ a.s. for all $t$. Let $\delta {\geq} 0$ and assume $\widehat{\delta}_{\mu}(\tau,\lambda) := 0.5\,\delta/\|\EEstim_{r, \lambda}\|{-}2(\frac{n}{2\tau}{-}1)\beta _{\mu}(\tau) > 0$. \\
Then, with probability at least $1 {-} \delta$ we have for every $\EEstim_{r, \lambda}$ such that $\|\EEstim_{r, \lambda}\|_{HS} {\geq} 1$
\begin{equation*}\label{eq:tikhonov_bound}
\begin{aligned}
|\Risk(\EEstim_{r, \lambda}){-}\ERisk(\EEstim_{r, \lambda})| &{\leq }
\frac{128\bcon\tau\|\EEstim_{r, \lambda}\|(\sqrt{r}{+} \|\EEstim_{r, \lambda}\|)}{3n}\ln{\frac{12}{\widehat{\delta}_{\mu}(\tau,\lambda)}}{+}\frac{14\bcon\tau^2
}{3n-2\tau
}\ln{\frac{12}{\widehat{\delta}_{\mu}(\tau,\lambda)}} \\
&{+} \sqrt{\frac{32\|\EEstim_{r, \lambda}\|^4\BECorrtau{Y}\tau}{n}\left(1
{+}2\ln\frac{12}{\widehat{\delta}_{\mu}(\tau,\lambda)}\right)} 
{+} \sqrt{\frac{2\overline{V}_{\tau}(\mathbf{W})\tau}{n}\ln{\frac{12}{\widehat{\delta}_{\mu}(\tau,\lambda)}}}
\\
&{+} \sqrt{\frac{8r\|\EEstim_{r, \lambda}\|^2\BECorrtau{Z}\tau}{n}\left(1{+}2\ln\frac{12}{\widehat{\delta}_{\mu}(\tau,\lambda)}\right)} ,
\end{aligned}
\end{equation*}

where $\overline{V}_{\tau}(\mathbf{W}) = \frac{1}{m(m-1)\tau^2} \sum_{1\leq i< j\leq m} (\overline{W}_{i} - \overline{W}_{j})^2 + (\overline{W'}_{i} - \overline{W'}_{j})^2$, $\overline{W}_{j}=\sum_{i\in I_{j}}W_i$ and $\overline{W'}_{j}=\sum_{i\in I_{j}^{\prime }}W_i$. $\BECorrtau{\mathbf{Y}}$ and $\BECorrtau{\mathbf{Z}}$ were defined before.

\end{restatable}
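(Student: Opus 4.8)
The plan is to reduce the generalization gap $\Risk(\EEstim_{r,\lambda})-\ERisk(\EEstim_{r,\lambda})$ to three empirical‑mean deviations, bound each one with the appropriate inequality from Section~\ref{sec:Bern_ineq}, and then reassemble the pieces. Writing $\Cx=\EE[\fH(X_t)\otimes\fH(X_t)]$, $\Cy=\EE[\fH(X_{t+1})\otimes\fH(X_{t+1})]$ and $\Cxy=\EE[\fH(X_t)\otimes\fH(X_{t+1})]$ for the stationary population (cross‑)covariances, and $\ECx,\ECy,\ECxy$ for their empirical counterparts computed from $\Data$, both risks are quadratic forms in $G$:
\[
\Risk(G)=\tr(\Cy)-2\tr(G^{*}\Cxy)+\tr(GG^{*}\Cx),\qquad
\ERisk(G)=\tr(\ECy)-2\tr(G^{*}\ECxy)+\tr(GG^{*}\ECx).
\]
Subtracting and using $\tr(A^{*}B)=\langle A,B\rangle_{\mathrm{HS}}$ gives
\[
\Risk(G)-\ERisk(G)=\big[\tr(\Cy)-\tr(\ECy)\big]
-2\,\langle G,\,\Cxy-\ECxy\rangle_{\mathrm{HS}}
+\langle GG^{*},\,\Cx-\ECx\rangle_{\mathrm{HS}}.
\]
By stationarity the first bracket equals $\EE[\norm{\fH(X_1)}^{2}]-\tfrac1n\sum_t W_t$, a scalar mean deviation of the sequence $\mathbf{W}$; the factor $\Cxy-\ECxy$ is the $\HS{\RKHS}$‑valued mean deviation of $\mathbf{Z}=(\fH(X_t)\otimes\fH(X_{t+1}))_t$; and $\Cx-\ECx$ is the mean deviation of $\mathbf{Y}=(\fH(X_t)\otimes\fH(X_t))_t$.

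Next I bound the three deviations. Each of $\fH(X_t)\otimes\fH(X_t)$ and $\fH(X_t)\otimes\fH(X_{t+1})$ has Hilbert–Schmidt norm at most $\bcon$ almost surely, so the biased empirical Bernstein inequality (Theorem~\ref{thm:empirical-Bernstein-biased}), applied in $\HS{\RKHS}$, controls $\hnorm{\Cx-\ECx}$ through $\BECorrtau{Y}$ and $\hnorm{\Cxy-\ECxy}$ through $\BECorrtau{Z}$, each contributing a slow $O(\sqrt{\tau/n})$ term and a fast $O(\tau/n)$ term with constant $\bcon$. For the scalar, bounded sequence $\mathbf{W}$ I use the stationary unbiased inequality (Theorem~\ref{thm:empirical-Bernstein-unbiased}), which is exactly what produces the u‑statistic surrogate $\overline V_{\tau}(\mathbf{W})$ and its associated fast term. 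To pass from these norm bounds to the bilinear and quadratic trace contributions I apply Cauchy–Schwarz in the HS inner product, $|\langle A,B\rangle_{\mathrm{HS}}|\le\hnorm{A}\,\hnorm{B}$, together with the structural inequalities coming from $\rank\EEstim_{r,\lambda}\le r$: the rank bound relating the HS and operator norms of $\EEstim_{r,\lambda}$ is what introduces the $\sqrt r$ factor in the cross term, while $\hnorm{GG^{*}}\le\hnorm{G}^{2}$ yields the $\norm{\EEstim_{r,\lambda}}^{2}$ multiplier in the quadratic term. This is precisely the origin of the $2\sqrt r\,\norm{\EEstim_{r,\lambda}}$ and $\norm{\EEstim_{r,\lambda}}^{2}$ coefficients appearing on the cross‑ and input‑covariance terms of the bound.

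A union bound over the three events (one per sequence $\mathbf{W},\mathbf{Z},\mathbf{Y}$) replaces each confidence level by one‑third of it, which is the source of the uniform $12/\widehat{\delta}_{\mu}(\tau,\lambda)$ inside every logarithm. Collecting the slow terms then gives the three square‑root contributions built on $\overline V_{\tau}(\mathbf{W})$, $\BECorrtau{Z}$ and $\BECorrtau{Y}$, and collecting the fast $O(\tau/n)$ terms yields the first two summands, with the coefficient $\norm{\EEstim_{r,\lambda}}\big(\sqrt r+\norm{\EEstim_{r,\lambda}}\big)$ arising as the sum of the cross ($\sqrt r\,\norm{\EEstim_{r,\lambda}}$) and quadratic ($\norm{\EEstim_{r,\lambda}}^{2}$) multipliers; the $\mathbf{W}$‑deviation supplies the remaining, separately written fast term.

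The main obstacle is that $\EEstim_{r,\lambda}$ is learned from the very trajectory on which the deviations are measured, so the multipliers $\norm{\EEstim_{r,\lambda}}$ are random and the three bounds cannot simply be evaluated at a fixed operator. I would resolve this with a dyadic peeling/stratification over the scale of $\norm{\EEstim_{r,\lambda}}_{\mathrm{HS}}\ge1$: partition into shells $\norm{\EEstim_{r,\lambda}}_{\mathrm{HS}}\in[2^{j},2^{j+1})$ for $j\ge0$, prove the estimate on each shell at confidence $\delta\,2^{-(j+1)}$, and union bound over $j$ (the budgets sum to $\delta$). On the shell containing the realized estimator one has $2^{-(j+1)}\ge 0.5/\norm{\EEstim_{r,\lambda}}$, which is exactly the $0.5\,\delta/\norm{\EEstim_{r,\lambda}}$ entering $\widehat{\delta}_{\mu}(\tau,\lambda)$ after subtracting the $2(\tfrac{n}{2\tau}-1)\beta_{\mu}(\tau)$ mixing penalty inherited from the blocking Lemma~\ref{lem:blocking}. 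Coupling this data‑dependent rescaling of the confidence with the empirical variance surrogates, and keeping the constants honest through the Cauchy–Schwarz and rank steps, is the delicate part; everything else is the bookkeeping already carried out in Theorems~\ref{thm:Data-dependent-Bernstein}–\ref{thm:empirical-Bernstein-unbiased}.
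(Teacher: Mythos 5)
Your proposal is correct and follows essentially the same route as the paper's own proof: the identical trace/H\"older decomposition of $\Risk-\ERisk$ into the three deviations associated with $\mathbf{W}$, $\mathbf{Y}$, $\mathbf{Z}$, the biased Hilbert-space EBI (Theorem~\ref{thm:empirical-Bernstein-biased}) for the two operator-valued sequences, a scalar empirical Bernstein bound with u-statistic variance $\overline{V}_{\tau}(\mathbf{W})$ for the trace term, a three-way union bound producing the $12/\widehat{\delta}_{\mu}(\tau,\lambda)$ factors, and a stratification over the random norm giving the substitutions $\gamma \to 2\|\EEstim_{r,\lambda}\|_{\mathrm{HS}}$ and $\delta \to 0.5\,\delta/\|\EEstim_{r,\lambda}\|_{\mathrm{HS}}$. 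The only cosmetic differences are that the paper handles the scalar $\mathbf{W}$ term by adapting the Maurer--Pontil empirical Bernstein inequality under the blocking Lemma~\ref{lem:blocking} rather than via Theorem~\ref{thm:empirical-Bernstein-unbiased} (both yield $\overline{V}_{\tau}(\mathbf{W})$ up to constants), and it implements the peeling step by citing the Ivanov--Tikhonov equivalence together with the stratification Lemma~\ref{lem:adaptation} of Anthony--Bartlett applied to the uniform bound of Theorem~\ref{thm:ivanov_risk_bound}, which is the same geometric-budget argument you carry out by hand.
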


{We remark that the only non-computable part in the bound is the mixing coefficient in the log terms, while the correlation coefficients weigh the powers of the estimator's norm. This is particularly interesting because the bound holds in the non-asymptotic regime, and even with moderate sample sizes, it reveals the impact of hyperparameters, namely the ridge parameter $\reg > 0$ and rank parameter $r$, on the risk concentration. We illustrate this feature in the practical problem of model selection when learning molecular dynamics. Further, note that our approach can easily relax the restrictive stationarity assumptions at the cost of an additive term quantifying the distance of the initial distribution from the equilibrium one, see Appendix \ref{app:LDS_theory} for a discussion. This is crucial for scenarios where data is collected out of equilibrium, a challenge not easily addressed by approaches like \citep{Kostic2022, kostic2024consistent}.}

\section{Experiments}\label{sec:exp} 
In this section, we showcase the improvements of our empirical Bernstein inequality for covariance estimation and for learning dynamical systems with moderate sample sizes.

\begin{figure}[h!]
    \centering
\includegraphics[width=\textwidth]{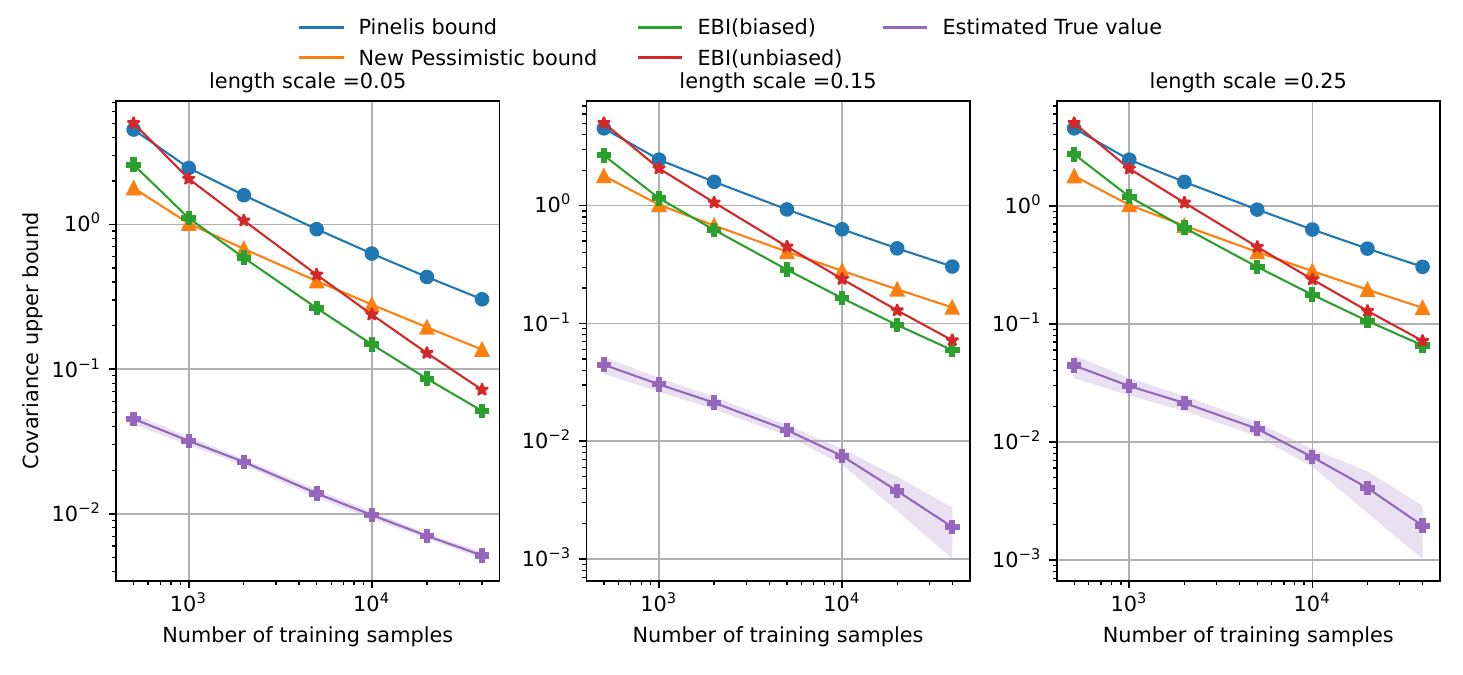}
\vspace{-.3truecm}
    \caption{Covariance upper bound as a function of the number of training points for three different length scales of Gaussian kernel in logarithmic scale. The failure probability is assumed to be 0.05 and the plots have been averaged over 30 independent simulations.}
    \label{fig:OU_cov_bounds}
\end{figure}

{\bf Covariance estimation using samples from Ornstein–Uhlenbeck process~~} In the first experiment, we illustrate the quantitative improvement of our EBI in comparison to its pessimistic non-empirical version, noting that a slightly different form has been used in \citep{Kostic2022}. 
We use the proposed empirical inequalities to determine the concentration of the covariance operator in the Hilbert-Schmidt norm. The usefulness of the new bounds can be particularly exploited when the process decorrelates much faster than it attains independence. To demonstrate this, we use 1D equidistant sampling of the Ornstein–Uhlenbeck process, obtained by integrating 
\(X_{t} = e^{-1}X_{t-1} {+} \sqrt{1-e^{-2}}\,\epsilon_t,\)
where $\{\epsilon_t\}_{t\geq 1}$ are i.i.d. samples from the standard Gaussian distribution. For this process, it is well-known~\citep{Pavliotis2014} that the invariant distribution, $\im$, coincides $\mathcal{N}(0,1)$. One point that makes the use of this as a toy example beneficial is the fact that this process belongs to exponential mixing processes $%
\beta _{\mu}\left( \tau \right) \approx \exp \left( -p\tau \right) $ where $p$ is the gap between first and second eigenvalues of its transfer operator, which is also known $p = 1 {-} 1/e$. We apply a Gaussian kernel with different length scales to map our data to the RKHS with the corresponding feature map, $\fH(x_{t})$. \\
Initially, we set a probability threshold for the failure of the inequalities. Subsequently, with this fixed failure probability in mind, we determined the appropriate mixing time $\tau$ for a given sample size $n$, defined as the smallest value satisfying $\delt$. We opted for $\tau$ due to its optimality since we observed through experimental validation that there is a consistent monotonic increase in the relationship between the empirical bounds and $\tau$, across various training set sizes and different failure probabilities. For further details, please refer to Appendix \ref{app:cov_est_exp_res}
In Figure \ref{fig:OU_cov_bounds}, we plotted empirical upper bounds for covariance estimation for different numbers of training points across three different choices of length scales over 50 independent simulations. We compared the new data-dependent upper bounds with the pessimistic bounds obtained using Pinelis and Sakhanenko's and Theorem 4 concentration inequality. The details of computing the true error value are explained in Appendix \ref{app:cov_est_true_error}. 

Figure \ref{fig:OU_cov_bounds} shows that one can significantly overestimate when using classical Bernstein-type inequalities pessimistically, especially as the number of training points grows. In other words, the slow term in the classical Bernstein inequalities may not be too slow, especially for processes where elements decorrelate rapidly. Importantly, notice that the slopes of the EBI bounds indicate a faster rate $\approx 1/n$ for the moderate sample size regime than the rate $1/\sqrt{n}$ which is asymptotically optimal.

As noted above, the sample covariance operator matches the square of the kernel matrix. For a better understanding, we can examine the extreme cases of the two variance proxy estimates in Appendix \ref{app:cov_est_ext_exp}.

\begin{figure}[h!]
    \centering
    \begin{subfigure}[b]{0.9\textwidth} 
        \centering
\includegraphics[width=\linewidth]{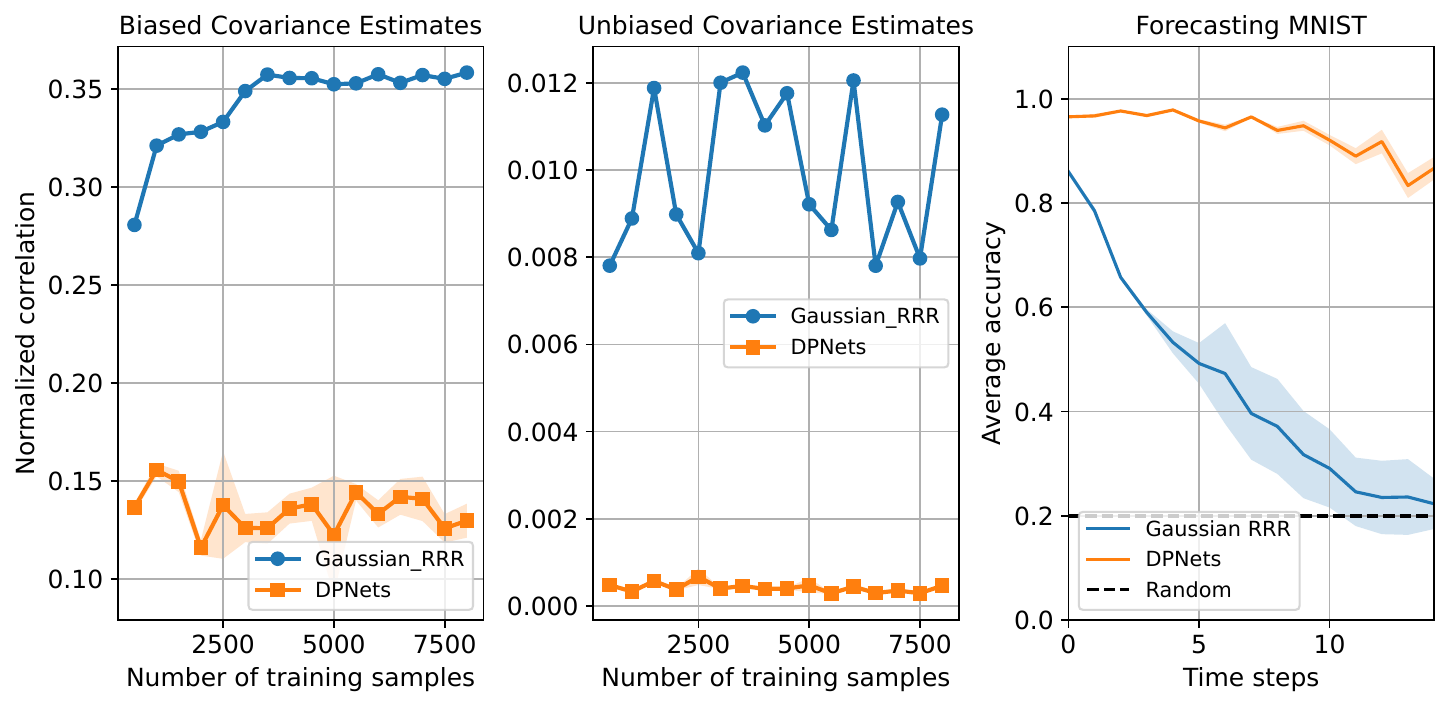}
    \end{subfigure}
    \vspace{0.2cm} 
    \begin{subfigure}[b]{0.9\textwidth} 
        \centering        \includegraphics[width=\linewidth]{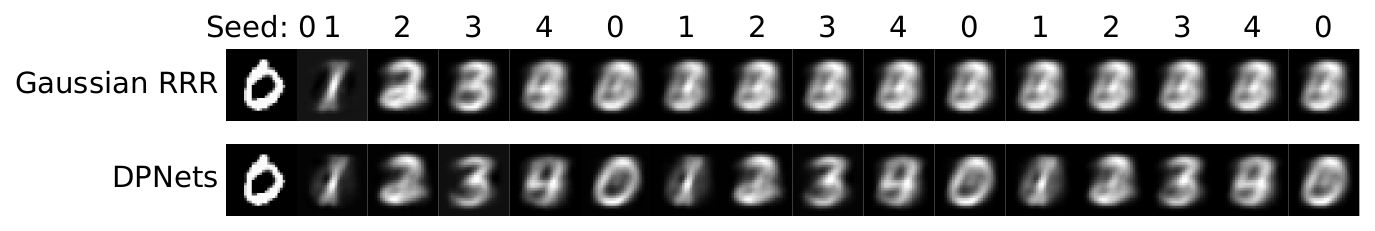}
    \end{subfigure}
    \vspace{-.3truecm}
    \caption{Performance evaluation of rank-5 RRR estimators using Gaussian and DPNet kernels on MNIST with $\eta = 0.1$: normalized correlations and forecast accuracy. }
    \label{fig:MNIST_main}
\end{figure}

{\bf Noisy ordered MNIST~~} Now we design an experiment involving a stochastic process with $K$ states, $S_{1}, S_{2}, \ldots, S_{K}$, corresponding to each integer class of the MNIST dataset. At each time step of a trajectory, we 
sample an MNIST image from that class, with the classes appearing in an ordered sequence, subject to a small perturbation probability. More precisely, the probability of transitioning from $S_{i}$ to $S_{(i{+}1)\,{\rm mod}\,{K}}$ is $1 - \eta$, while the transition to any other state occurs uniformly at random with probability $\eta$. 
Notice the invariant distribution $\im$ is uniform in the above setting. Moreover, the mixing time is $\tau \geq \eta^{-1} \ln \left(1/\epsilon \right)$ for a distance $\epsilon$ from the invariant distribution, $\im$ \citep{levin2017markov}. This provides an estimate of the mixing coefficients $\beta_{\mu} \left( \tau \right) \approx \exp \left(-\eta \tau \right)$. \\
The purpose of this experiment is to emphasize the importance of selecting an appropriate representation for learning the dynamics and to demonstrate different behaviors in terms of the decorrelation rate of the representation. To this end, we selected the first 5 classes of the MNIST dataset as training data points and we compared the Reduced Rank Regression (RRR) estimator in \citep{Kostic2022} with rank $5$, using two different kernels. The first representation is the Gaussian kernel, which is known to be a universal kernel. The second kernel is linear in the space parametrized by a neural network $\phi_{\boldsymbol{\theta}} \in \R^5$ trained according to Deep Projection Neural Network (DPNet) \citep{kostic2024dpnets}, which is designed to minimize the representation error for the operator regression task by minimizing the empirical risk. 
While for the Gaussian kernel, We performed hyperparameter tuning on validation data points, resulting in a length scale of $784$ and a regularization parameter of $10^{-7}$, for DPNet we use a CNN architecture; see Appendix \ref{app:MNIST} for more information. \\
We trained the two transfer operator estimators on different samples. Figure \ref{fig:MNIST_main} demonstrates that the normalized correlations, $\BECorrtau{X}$, and $\UECorrtau{X}$, are effective estimators of the generalization performance of the learned models. We applied min-max normalization to the kernel matrices. The rightmost figure in the first row shows the average accuracy of all forecasts of test points for each model, varying the number of forecasting steps. In this process, the model predicts an image, and we use an oracle CNN to predict the label. We then match this predicted label with the true label and compute the accuracy accordingly. In the second row, we illustrated how this forecasting works for a single sample of the test set. As expected, only the forecasts using the DPNet kernel maintained a sharp (and correct) shape for the predicted digits over a long horizon. In contrast, the Gaussian kernels were less effective in capturing visual structures, and their predictions quickly lost resemblance to digits. \\
The results indicate that models with superior representations tend to perform better and exhibit better generalization. We repeated the procedure with two different values of $\eta = 0.2$ and $0.05$. The results can be found in Appendix C.2. 

\textbf{EBI-based model selection}
In this experiment, we demonstrate that minimizing the bound Theorem \ref{thm:tikhonov_risk_bound} can serve as an effective criterion for selecting Koopman models. We applied this approach to a realistic simulation of the small molecule Alanine Dipeptide, c.f.~\citep{wehmeyer2018time}. Sixteen RRR estimators, each associated with a different kernel's length scale and/or ridge regularization parameter, were trained, and their forecasting RMSE was assessed using 2000 initial conditions from a test dataset. In this procedure, for the beta-mixing coefficient, we used an estimate via the second eigenvalue of the Koopman operator based on the independent studies \citep{ deepTICA}. Figure \ref{fig:ala_hpo} presents these errors, highlighting the model with the lowest empirical risk bound (as defined in Theorem \ref{thm:tikhonov_risk_bound}), which is one of the best estimators.
\begin{figure}[t!]
   \centering
\includegraphics[width=0.6\columnwidth]{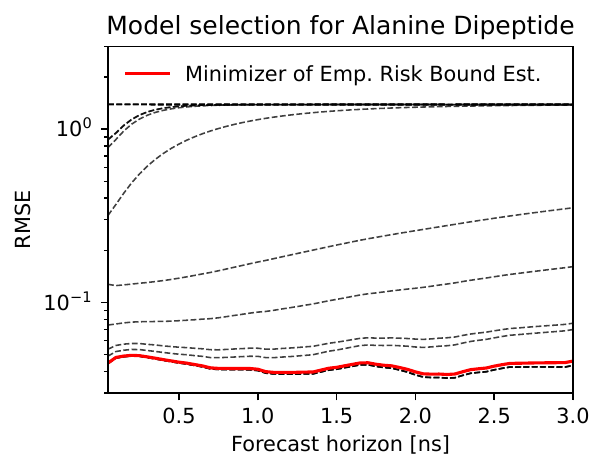}
\vspace{-.35truecm}
{\caption{Forecasting RMSE on the Alanine Dipeptide dataset for 16 different RRR estimators, each corresponding to a different kernel's length scale and/or regularization parameter, which shows how the best model, according to the empirical risk bound metric, also attains on of the best forecasting performances.}
\label{fig:ala_hpo}}
\end{figure}

\section{Conclusion}\label{sec:concl}

Motivated by recent progress on stochastic dynamical systems and the fact that, in realistic scenarios therein, data are neither i.i.d. nor in the stationary regime, we derived empirical Bernstein inequalities for a general class of stochastic processes in Hilbert space. Bernstein inequality is a key component in previous studies of learning dynamical systems \citep{Kostic2022, kostic2024consistent} and we showed that one can retain the guarantees in these papers while relaxing the restrictive assumptions on the data sequence. Notably, our inequalities expose the effect of correlations in improving the bounds. Further, we developed an alternative approach to risk bounds of reduced rank regression, which is less restrictive and more practical than the existing ones, as also illustrated numerically. A limitation of our work is that our bounds in their current form are still not adapted to exploit the effective dimension of the RKHS in deriving the minimax rate and theoretically proving the impact of the correlation coefficients on the excess risk. Future work could address this question as well as investigate the application of our bound to vector-valued problems in the context of partial differential equations.

\bibliographystyle{apalike}
{
\bibliography{bibliography}
}
\newpage
\appendix

\begin{center}
{\Large \bf Supplementary Material} 
\end{center}

\section{Hilbert space-valued concentration for dependent, non-stationary sequences.}

First, we give a version of Bernstein's inequality for independent vector-valued random variables. Then we specialize to the dependent case under $%
\beta $-mixing assumptions.

\subsection{Vector valued concentration for independent variables}

\begin{theorem}
\label{thm:vector-concentration}Suppose that the $X_{i}$ are $m$ independent
mean zero random variables with values in a Hilbert-space $H$, satisfying $%
\left\Vert X_{i}\right\Vert \leq c$. Then for $\delta >0$%
\begin{equation*}
\Pr \left\{ \left\Vert \sum_{i}X_{i}\right\Vert >\sqrt{\sum_{i}\mathbb{E}%
\left\Vert X_{i}\right\Vert ^{2}}\left( 1{+}\sqrt{2\ln \left( 1/\delta \right) 
}\right) {+}\frac{4c}{3}\ln \left( 1/\delta \right) \right\} <\delta .
\end{equation*}
\end{theorem}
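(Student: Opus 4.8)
The plan is to apply a Bernstein-type bounded-differences inequality (the cited \citep[Theorem 3.8]{McDiarmid98}) to the scalar functional $f(X_1,\dots,X_m) = \norm{\sum_i X_i}$, which is a function of the $m$ independent variables $X_1,\dots,X_m$. Such an inequality controls $\Pr\{f > \EE f + t\}$ by an exponential of Bernstein form, roughly $\exp(-t^2/(2(v + bt/3)))$, in terms of a bounded-differences constant $b$ and a variance proxy $v$. Thus the proof reduces to three pieces: (i) bounding $\EE f$; (ii) identifying $b$; and (iii) bounding $v$. Matching the target form suggests we should aim for $\EE f \le \sqrt{\sum_i \EE\norm{X_i}^2}$, $v = \sum_i \EE\norm{X_i}^2$, and $b = 2c$, after which a routine inversion yields the claim.

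For the expectation, I would use Jensen's inequality followed by orthogonality: since the $X_i$ are independent and mean zero, $\EE\scalarp{X_i,X_j} = \scalarp{\EE X_i, \EE X_j} = 0$ for $i \ne j$, so $\EE\norm{\sum_i X_i}^2 = \sum_i \EE\norm{X_i}^2$ and hence $\EE f \le \sqrt{\EE f^2} = \sqrt{\sum_i \EE\norm{X_i}^2}$; this is precisely what produces the additive ``$1$'' in the factor $1 + \sqrt{2\ln(1/\delta)}$. The bounded-differences constant is immediate from the triangle inequality: replacing a single $X_k$ by an independent copy changes $f$ by at most $\norm{X_k - X_k'} \le 2c$, so $b = 2c$.

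The delicate step, and the main obstacle, is controlling the variance proxy $v$ entering Theorem 3.8, a sum over coordinates of (conditional) variances of $f$. Fixing $S_{-k} = \sum_{i\ne k} X_i$ and viewing $f$ as $g(X_k) = \norm{S_{-k} + X_k}$, I would bound the conditional variance via the symmetrization identity $\VV(g(X_k)\mid S_{-k}) = \tfrac12\EE[(g(X_k)-g(X_k'))^2\mid S_{-k}] \le \tfrac12\EE\norm{X_k - X_k'}^2$ for an independent copy $X_k'$; mean-zeroness again kills the cross term, giving $\EE\norm{X_k}^2$ uniformly in $S_{-k}$, so that $v \le \sum_k \EE\norm{X_k}^2$. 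The one point requiring attention is matching this bound to the precise variance functional used in \citep[Theorem 3.8]{McDiarmid98} (for instance a supremum versus an expectation of conditional variances). With $v = \sum_i \EE\norm{X_i}^2$ and $b = 2c$ in hand, inverting the Bernstein tail $\exp(-t^2/(2(v+bt/3))) = \delta$ yields $f \le \EE f + \sqrt{2v\ln(1/\delta)} + \tfrac{2b}{3}\ln(1/\delta)$ with probability at least $1-\delta$; combined with the expectation bound this is exactly the stated inequality, and this final inversion is routine Bernstein algebra posing no difficulty.
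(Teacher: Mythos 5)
Your proposal is correct and follows essentially the same route as the paper's proof: both apply McDiarmid's Theorem 3.8 to $f(\mathbf{x})=\left\Vert \sum_i x_i\right\Vert$, obtain the difference bound $2c$ from the triangle inequality, bound the (supremum of the) conditional variances by $\tfrac12\sum_k \mathbb{E}\left\Vert X_k-X_k'\right\Vert^2=\sum_k \mathbb{E}\left\Vert X_k\right\Vert^2$ using mean-zeroness, and finish with Jensen's inequality plus orthogonality to control $\mathbb{E}\left\Vert \sum_i X_i\right\Vert$. The one subtlety you flag (supremum versus expectation of conditional variances) is handled exactly as you suggest, since your symmetrization bound holds uniformly in the fixed coordinates, so it dominates the supremum in the paper's variance functional.
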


A well-known bound of Pinelis and Sakhanenko \citep[Proposition
2]{caponnetto2007}, when specialized to bounded vectors, would be 
\begin{equation*}
2\sqrt{\sum_{i}\mathbb{E}\left\Vert X_{i}\right\Vert ^{2}}\ln \left( \frac{2%
}{\delta }\right) {+}4c\ln \left( \frac{2}{\delta }\right) ,
\end{equation*}
which is slightly worse, not only with respect to constants but also in its
dependence on the confidence parameter $\delta $.

To prove Theorem \ref{thm:vector-concentration} we use the version of
Bernstein's inequality is below. For $\mathbf{z}=\left( z_{1},...,z_{m}\right)
\in \mathcal{X}^{m}$, $k\in \left[ m\right] $ and $y\in \mathcal{X}$ define
the substitution $S_{y}^{k}\mathbf{z}=\left(
z_{1},...,z_{k{-}1},y,z_{k{+}1},...,z_{m}\right) $.

\begin{theorem}
\label{thm:Bernstein-inequality} (see \citep{McDiarmid98}, Theorem 3.8 or 
\citep{maurer2012thermodynamics}, Theorem 11) Let $\mathbf{X}=\left(
X_{1},...,X_{m}\right) $ be a vector of independent random variables, with
values in $\mathcal{X}$ and $f:\mathcal{X}^{m}\rightarrow \mathbb{R}$.
Assume that $\forall k\in \left[ m\right] ,$ $\mathbf{x}\in \mathcal{X}^{m}$%
, $f\left( \mathbf{x}\right) {-}\mathbb{E}\left[ f\left( S_{X_{k}}^{k}\mathbf{x%
}\right) \right] \leq b$. Let denote 
\begin{equation*}
V=\frac{1}{2}\sup_{\mathbf{x}\in \mathcal{X}^{m}}\sum_{k=1}^{m}\mathbb{E}%
\left[ \left( f\left( S_{X_{k}}^{k}\mathbf{x}\right) {-}f\left(
S_{X_{k}^{\prime }}^{k}\mathbf{x}\right) \right) ^{2}\right] .
\end{equation*}%
Then for $t>0$ 
\begin{equation*}
\Pr \left\{ f\left( \mathbf{X}\right) {-}\mathbb{E}\left[ f\left( \mathbf{X}%
\right) \right] >t\right\} \leq \exp \left( \frac{-t^{2}}{2V{+}2bt/3}\right) .
\end{equation*}
\end{theorem}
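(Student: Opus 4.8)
} The plan is to bound the upper tail by the exponential-moment (Chernoff) method and to control the cumulant generating function through the entropy (tensorization) method. Writing $g := f(\mathbf{X}) - \mathbb{E}[f(\mathbf{X})]$ and $\psi(s) := \ln \mathbb{E}[e^{sg}]$ for $s > 0$, Markov's inequality gives $\Pr\{g > t\} \le \exp(\psi(s) - st)$. It therefore suffices to prove the Bernstein-type bound $\psi(s) \le \frac{Vs^2}{2(1 - bs/3)}$ on $0 < s < 3/b$: minimizing $\psi(s) - st$ over $s$ with the choice $s = t/(V + bt/3)$ then gives exactly $\exp(-t^2/(2V + 2bt/3))$, as one verifies by substitution (at the optimum $1 - bs/3 = V/(V + bt/3)$).

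To obtain the bound on $\psi$, I would apply the entropy functional $\mathrm{Ent}(Z) = \mathbb{E}[Z\ln Z] - \mathbb{E}[Z]\ln\mathbb{E}[Z]$ to $Z = e^{sf}$ and use its subadditivity over the independent coordinates, $\mathrm{Ent}(e^{sf}) \le \sum_{k=1}^m \mathbb{E}\big[\mathrm{Ent}^{(k)}(e^{sf})\big]$, where $\mathrm{Ent}^{(k)}$ is the entropy taken over $X_k$ with the remaining coordinates held fixed. The decisive ingredient is a per-coordinate modified log-Sobolev estimate controlling each $\mathrm{Ent}^{(k)}(e^{sf})$ by the local fluctuation of $f$ in its $k$-th argument. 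Symmetrizing with an independent copy $X_k'$, using the one-sided hypothesis $f(\mathbf{x}) - \mathbb{E}_{X_k}[f(S_{X_k}^k\mathbf{x})] \le b$ to control the upward increments, and invoking the elementary inequality $e^u - 1 - u \le \tfrac{u^2/2}{1 - u/3}$ (valid for $u < 3$), this estimate bounds $\mathrm{Ent}^{(k)}(e^{sf})$ by a multiple (depending on $s$ and $b$) of $\mathbb{E}_{X_k, X_k'}[(f(S_{X_k}^k\mathbf{x}) - f(S_{X_k'}^k\mathbf{x}))^2]\,\mathbb{E}[e^{sf}]$; summing over $k$ and taking the supremum over the frozen coordinates produces precisely the proxy $V = \tfrac12\sup_{\mathbf{x}}\sum_k \mathbb{E}[(f(S_{X_k}^k\mathbf{x}) - f(S_{X_k'}^k\mathbf{x}))^2]$.

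With this in hand I would run the Herbst argument. Using the identity $s\psi'(s) - \psi(s) = \mathrm{Ent}(e^{sg})/\mathbb{E}[e^{sg}]$, the tensorized entropy estimate becomes the differential inequality $\tfrac{d}{ds}\big(\psi(s)/s\big) \le \frac{V}{2(1 - bs/3)^2}$. Integrating from $0$, where $\psi(s)/s \to 0$, yields $\psi(s)/s \le \frac{Vs}{2(1 - bs/3)}$, that is the target $\psi(s) \le \frac{Vs^2}{2(1 - bs/3)}$ required by the first step.

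The main obstacle is the per-coordinate modified log-Sobolev estimate, specifically extracting the Bernstein constant $1/3$ while respecting the one-sidedness of the hypothesis: the assumption bounds $f(\mathbf{x}) - \mathbb{E}_{X_k}[f(S_{X_k}^k\mathbf{x})]$ only from above, so the symmetrization and the exponential inequality must be applied on the correct side to keep the local exponential moments finite on $0 < s < 3/b$, and the supremum over the frozen coordinates must be taken so that the coordinatewise terms collapse cleanly into $V$. As a consistency check I would reprove the same cumulant bound via the Doob martingale decomposition $g = \sum_k (\mathbb{E}_k f - \mathbb{E}_{k-1} f)$ (conditioning on the first $k$ and $k-1$ coordinates, respectively), bounding each increment's conditional moment generating function by the scalar Bernstein estimate and summing the conditional variances; this should reproduce $\frac{Vs^2}{2(1 - bs/3)}$ and confirm the constants.
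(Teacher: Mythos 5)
You should first note that the paper does not prove this statement at all: it is imported, with credit, from McDiarmid (Theorem 3.8) and Maurer (Theorem 11) and used as a black box in the proof of the paper's Theorem 1. So your proposal can only be compared with the cited proofs. Your architecture (Chernoff bound, entropy tensorization, a per-coordinate modified log-Sobolev estimate, Herbst integration) is precisely the entropy/thermodynamic route of the second cited source, and it is the natural one for this exact statement: tensorization conditions each coordinate on \emph{all} of the others, which is why the proxy $V$ here is a supremum over a single frozen configuration $\mathbf{x}$ of the sum of fully-conditional variances. (Your Doob-martingale ``consistency check'' is McDiarmid's route; beware that it naturally produces a \emph{sequential} variance proxy in which the future coordinates are averaged out at hybrid configurations, and matching that to this $V$ is not automatic.) Your bookkeeping is also correct: with $\psi(s)\le Vs^2/(2(1-bs/3))$ the choice $s=t/(V+bt/3)$ gives exactly the stated tail, the identity $\frac{d}{ds}(\psi(s)/s)=\mathrm{Ent}(e^{sg})/(s^2\mathbb{E}[e^{sg}])$ is right, and the collapse of the summed conditional variances into $V$ works because each conditional variance is independent of its own coordinate, so the tower property applies.

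The gap is in the one step you yourself flag as the main obstacle, and as sketched it would fail. Symmetrization bounds $\mathrm{Ent}^{(k)}(e^{sf})$ by expressions of the form $s^2\,\mathbb{E}^{(k)}\bigl[\bigl(f(S^{k}_{X_k}\mathbf{x})-f(S^{k}_{X_k'}\mathbf{x})\bigr)^2 e^{sf}\bigr]$, in which the exponential weight sits at the \emph{random} point; to reach your claimed form, a multiple of the conditional variance times $\mathbb{E}^{(k)}[e^{sf}]$, you must decouple the squared increment from $e^{sf}$. The hypothesis is one-sided: it controls upward fluctuations of $f$ in the $k$-th coordinate by $b$, but the squared increment is driven equally by downward fluctuations, which are unbounded; the natural decoupling then produces an additive term of order $b^2$ (Hoeffding-type) rather than the conditional variance, and the Bernstein constant $1/3$ is lost. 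What does work, and is how the cited source in effect proceeds, is to prove the one-dimensional lemma by exponential tilting instead of symmetrization: for a single variable $g$ with $g-\mathbb{E}[g]\le b$ almost surely, one has $\mathrm{Ent}(e^{sg})/\mathbb{E}[e^{sg}]=\int_0^s u\,\mathrm{Var}_u(g)\,du$, where $\mathrm{Var}_u$ is the variance under the tilted law proportional to $e^{ug}$; the derivative of $\mathrm{Var}_u(g)$ in $u$ is the tilted third central moment, which is at most $b\,\mathrm{Var}_u(g)$ because $g$ is bounded above by $\mathbb{E}[g]+b$ and the tilted mean is nondecreasing in $u$; hence $\mathrm{Var}_u(g)\le e^{ub}\,\mathrm{Var}(g)\le \mathrm{Var}(g)/(1-ub/3)^{3}$, and $\int_0^s u(1-ub/3)^{-3}du=s^2/\bigl(2(1-bs/3)^2\bigr)$, which is exactly the per-coordinate constant your Herbst step requires. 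If you replace your symmetrization sketch by this tilting argument, the rest of your proposal (tensorization, the collapse into $V$, Herbst, Chernoff) goes through verbatim and yields the theorem with the stated constants.
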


The quantity $V$ is the "maximal sum of conditional variances" (\citep%
{McDiarmid98}): we compute the variance of $f$ in the $k$-th variable (here $%
X_{k}^{\prime }$ is an independent copy of $X_{k})$, holding all the other
variables of $\mathbf{x}$ fixed, then we sum over $k$ and finally take the
supremum in $\mathbf{x}$.

\begin{proof}[Proof of Theorem \protect\ref{thm:vector-concentration}]
We will apply Theorem \ref{thm:Bernstein-inequality} with $\X = \{ x \in \RKHS : \|x\| \leq c\}$. Define $f\left( \mathbf{x}\right) =\left\Vert \sum_{i}x_{i}\right\Vert $ and
note that for $y,y^{\prime }\in \RKHS$, $f\left( S_{y}^{k}\mathbf{x}\right)
-f\left( S_{y^{\prime }}^{k}\mathbf{x}\right) \leq \left\Vert y-y^{\prime
}\right\Vert $. This implies that $f\left( \mathbf{x}\right) {-}\mathbb{E}%
\left[ f\left( S_{X_{k}}^{k}\mathbf{x}\right) \right] \leq 2c$ and also%
\begin{equation*}
V=\frac{1}{2}\sup_{\mathbf{x}\in \mathcal{X}^{m}}\sum_{k=1}^{m}\mathbb{E}%
\left[ \left( f\left( S_{X_{k}}^{k}\mathbf{x}\right) {-}f\left(
S_{X_{k}^{\prime }}^{k}\mathbf{x}\right) \right) ^{2}\right] \leq \frac{1}{2}%
\sum_{k=1}^{m}\mathbb{E}\left[ \left\Vert X_{k}{-}X_{k}^{\prime }\right\Vert
^{2}\right] =\sum_{i=1}^{m}\mathbb{E}\left\Vert X_{i}\right\Vert ^{2}.
\end{equation*}%
By Bernstein's inequality, Theorem \ref{thm:Bernstein-inequality}, for $%
t>0$,%
\begin{equation*}
\Pr \left\{ \left\Vert \sum_{i}X_{i}\right\Vert {-}\mathbb{E}\left[ \left\Vert
\sum_{i}X_{i}\right\Vert \right] >t\right\} \leq \exp \left( \frac{-t^{2}}{%
2\sum_{i}\mathbb{E}\left\Vert X_{i}\right\Vert ^{2}{+}4ct/3}\right) .
\end{equation*}%
Solving for $t$ gives for $\delta >0$%
\begin{equation*}
\Pr \left\{ \left\Vert \sum_{i}X_{i}\right\Vert \leq \mathbb{E}\left[
\left\Vert \sum_{i}X_{i}\right\Vert \right] {+}\sqrt{2\sum_{i}\mathbb{E}%
\left\Vert X_{i}\right\Vert ^{2}\ln \left( 1/\delta \right) }{+}\frac{4c}{3}%
\ln \left( 1/\delta \right) \right\} \leq \delta \text{.}
\end{equation*}%
Using Jensen's inequality, independence, and the mean-zero assumption to
bound $\mathbb{E}\left[ \left\Vert \sum_{i}X_{i}\right\Vert \right] \leq 
\sqrt{\sum_{i}\mathbb{E}\left\Vert X_{i}\right\Vert ^{2}}$ completes the
proof.
\end{proof}

\subsection{Empirical bounds}\label{app:emp_bounds}

If we drop the mean-zero assumption, then the inequality in Theorem \ref{thm:vector-concentration} reads%
\begin{equation*}
\left\Vert \sum_{i}\left( X_{i}-\mathbb{E}\left[ X_{i}\right] \right)
\right\Vert \leq \sqrt{\sum_{i}\mathbb{E}\left\Vert X_{i}-\mathbb{E}\left[
X_{i}\right] \right\Vert ^{2}}\left( 1{+}\sqrt{2\ln \left( 1/\delta \right) }%
\right) {+}\frac{4c}{3}\ln \left( 1/\delta \right) .
\end{equation*}%
In many applications, we observe the $X_{i}$, but we do not know the
expectations and wish to use the inequality to estimate $\sum_{i}\mathbb{E}%
\left[ X_{i}\right] $. In this case, it is useful to have an empirical
estimate of the variance term on the right-hand side. We give two such
estimates, a simple biased one, and an unbiased one for i.i.d data. Both use
the following concentration inequality, which can be found in \citep%
[Corollary 10]{maurer2018empirical}.

\begin{theorem}
\label{thm:selfbound} 
Define an operator $D^{2}$ acting on measurable
functions $f:\mathcal{X}^{m}\rightarrow 
\mathbb{R}
$ by 
\begin{equation*}
D^{2}f\left( \mathbf{x}\right) =\sum_{k}\left( f\left( \mathbf{x}\right)
-\inf_{y\in \mathcal{X}}f\left( S_{y}^{k}\mathbf{x}\right) \right) ^{2},
\end{equation*}%
and let $\mathbf{X}=\left( X_{1},...,X_{m}\right) $ be a vector of
independent variables with values in $\mathcal{X}$. Now suppose $f:\mathcal{X%
}^{m}\rightarrow 
\mathbb{R}
$ satisfies $f\left( \mathbf{x}\right) -\inf_{y\in \mathcal{X}}f\left(
S_{y}^{k}\mathbf{x}\right) \leq b$ for all $k\in \left\{ 1,...,m\right\} $
and all $\mathbf{x}\in \mathcal{X}^{m}$, and for some $a>0$%
\begin{equation}
D^{2}f\left( \mathbf{x}\right) \leq af\left( \mathbf{x}\right) ,\forall 
\mathbf{x}\in \mathcal{X}^{m}\text{.}  \label{selfbound condition}
\end{equation}%
Then for all $\delta >0$ with probability at least $1-\delta $%
\begin{equation*}
\sqrt{\mathbb{E}\left[ f\left( \mathbf{X}\right) \right] }\leq \sqrt{f\left( 
\mathbf{X}\right) }{+}\sqrt{2\max \left\{ a,b\right\} \ln \left( 1/\delta
\right) }.
\end{equation*}
\end{theorem}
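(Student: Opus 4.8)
The plan is to read the stated inequality as a \emph{lower-tail} statement about $Z := f(\mathbf{X})$ (note that $f \ge 0$ is forced, since $0 \le D^2 f \le a f$): it asserts that $Z$ cannot fall far below $\mathbb{E}[Z]$. So I would first prove a sub-Gaussian lower-tail bound and then convert it algebraically. Concretely, the target is to show that for every $t > 0$,
\[
\Pr\{\mathbb{E}[Z] - Z > t\} \le \exp\!\left(-\frac{t^2}{2\max\{a,b\}\,\mathbb{E}[Z]}\right).
\]
Setting the right-hand side equal to $\delta$ gives, with probability at least $1-\delta$, the bound $\mathbb{E}[Z] - Z \le \sqrt{2\max\{a,b\}\,\mathbb{E}[Z]\ln(1/\delta)}$. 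Writing $u = \sqrt{\mathbb{E}[Z]}$ and $v = \sqrt{Z}$, this reads $u^2 - v^2 \le \sqrt{2\max\{a,b\}\ln(1/\delta)}\,u$, a quadratic inequality in the nonnegative unknown $u$; solving it and using $\sqrt{p+q}\le\sqrt{p}+\sqrt{q}$ yields $u \le v + \sqrt{2\max\{a,b\}\ln(1/\delta)}$, which is exactly the claimed conclusion $\sqrt{\mathbb{E}[f(\mathbf{X})]} \le \sqrt{f(\mathbf{X})} + \sqrt{2\max\{a,b\}\ln(1/\delta)}$.

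For the lower-tail bound itself I would use the entropy (Herbst) method. Set $Z_k := \inf_{y\in\mathcal{X}} f(S_y^k \mathbf{X})$, so that $Z_k$ does not depend on the $k$-th coordinate, $0 \le Z - Z_k \le b$ (the lower bound by taking $y = X_k$, the upper bound by hypothesis), and $\sum_k (Z - Z_k)^2 = D^2 f(\mathbf{X}) \le a Z$ by \eqref{selfbound condition}. For $\lambda < 0$ put $G(\lambda) = \ln\mathbb{E}[e^{\lambda Z}]$. Subadditivity (tensorization) of entropy gives $\mathrm{Ent}(e^{\lambda Z}) \le \sum_k \mathbb{E}[\mathrm{Ent}_k(e^{\lambda Z})]$, where $\mathrm{Ent}_k$ is the entropy in $X_k$ with the other coordinates frozen. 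Since each $Z_k$ is measurable with respect to the frozen coordinates and $Z \ge Z_k$, a one-coordinate modified logarithmic Sobolev inequality bounds $\mathrm{Ent}_k(e^{\lambda Z}) \le \mathbb{E}_k[e^{\lambda Z}\,\psi(\lambda(Z - Z_k))]$ with $\psi(x) = e^x - x - 1$.

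For $\lambda < 0$ the argument $\lambda(Z - Z_k)$ is nonpositive, so $\psi(\lambda(Z - Z_k)) \le \tfrac12\lambda^2(Z - Z_k)^2$; summing over $k$ and feeding in $\sum_k (Z-Z_k)^2 \le a Z$ together with the increment bound $Z - Z_k \le b$ controls the per-coordinate curvature. Using convexity of $G$ (so $G'(\lambda) \le G'(0) = \mathbb{E}[Z]$ for $\lambda < 0$) and integrating the resulting differential inequality in the usual Herbst fashion turns this into the sub-Gaussian estimate $G(\lambda) - \lambda\mathbb{E}[Z] \le \tfrac12\lambda^2\max\{a,b\}\,\mathbb{E}[Z]$. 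A Chernoff bound on $\mathbb{E}[e^{\lambda(\mathbb{E}[Z] - Z)}]$, optimized over $\lambda < 0$ (the optimizer being $\lambda = -t/(\max\{a,b\}\mathbb{E}[Z])$), then produces the displayed lower-tail inequality.

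The main obstacle is the variance-proxy bookkeeping that packages the two hypotheses into the single constant $\max\{a,b\}$: the second-order estimate $\psi(x)\le x^2/2$ (valid for $x\le 0$) supplies the factor $a$ through $\sum_k(Z-Z_k)^2 \le a Z$, while the boundedness $Z - Z_k \le b$ is what controls the regime where $|\lambda|$ is not small, and keeping the combined effective curvature at $\max\{a,b\}$ rather than a looser $a+b$ requires care. The other technical ingredient is the one-coordinate modified log-Sobolev inequality in this \emph{infimum} formulation (as opposed to the more familiar resampled ``leave-one-out'' version); both are standard in the self-bounding literature, but verifying them for the operator $D^2$ as defined here is the delicate step.
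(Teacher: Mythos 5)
Note first that the paper does not actually prove Theorem~\ref{thm:selfbound}: it is imported verbatim from \citep[Corollary 10]{maurer2018empirical}, so your attempt can only be compared against the standard proof in that literature. Your top-level architecture is exactly the right one, and it matches how such statements are obtained there: the square-root form is equivalent to the sub-Gaussian lower tail $\Pr\{\mathbb{E}[Z]-Z>t\}\leq\exp\bigl(-t^{2}/(2\max\{a,b\}\,\mathbb{E}[Z])\bigr)$, your observation that $f\geq 0$ is forced is correct, and your quadratic conversion ($u^{2}-v^{2}\leq cu$ implies $u\leq v+c$ via $\sqrt{c^{2}+4v^{2}}\leq c+2v$) is sound.

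The genuine gap is a sign error at the pivotal entropy step. The variational bound $\mathrm{Ent}(Y)\leq\mathbb{E}[Y(\log Y-\log u)-Y+u]$ with $Y=e^{\lambda Z}$ and $u=e^{\lambda Z_{k}}$ gives $\mathrm{Ent}_{k}(e^{\lambda Z})\leq\mathbb{E}_{k}\bigl[e^{\lambda Z}\,\psi\bigl(-\lambda(Z-Z_{k})\bigr)\bigr]$, with the argument $-\lambda(Z-Z_{k})$, not $\lambda(Z-Z_{k})$ as you wrote. For $\lambda<0$ and $Z\geq Z_{k}$ this argument is \emph{nonnegative}, and there $\psi(x)\geq x^{2}/2$, so the estimate $\psi\leq x^{2}/2$ is unavailable precisely in the regime you invoke it. Worse, the inequality your computation would effectively deliver, $\mathrm{Ent}_{k}(e^{\lambda Z})\leq\tfrac{\lambda^{2}}{2}\mathbb{E}_{k}[e^{\lambda Z}(Z-Z_{k})^{2}]$ for $\lambda<0$, is false in general: it makes no reference to the increment bound $b$, yet taking $Z$ uniform on $\{0,L\}$ in the $k$-th coordinate, $Z_{k}=0$ and $\lambda=-1$, the left side tends to $\tfrac12\ln 2$ as $L\to\infty$ while the right side tends to $0$. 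The correct substitute is $\psi(x)\leq\frac{\psi(-\lambda b)}{\lambda^{2}b^{2}}\,x^{2}$ on $[0,-\lambda b]$ (monotonicity of $\psi(x)/x^{2}$), which is exactly where the hypothesis $Z-Z_{k}\leq b$ enters; it yields the differential inequality $\lambda G'(\lambda)-G(\lambda)\leq\frac{a\,\psi(-\lambda b)}{b^{2}}\,G'(\lambda)$, and extracting the clean constant $\max\{a,b\}$ from its integration (e.g.\ by rescaling to unit increments and invoking the lower-tail theorem for weakly self-bounding functions of Boucheron--Lugosi--Massart, which is the engine behind the cited corollary) is the genuinely delicate part that your sketch replaces with an inequality that points the wrong way. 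So: right plan, true target tail bound, but the step carrying the whole proof is mis-signed and its stated justification would not survive being written out.
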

\bigskip
For the simple biased estimate, we use the elementary inequality 
\begin{equation}
\sqrt{\sum_{i}\mathbb{E}\left\Vert X_{i}-\mathbb{E}\left[ X_{i}\right]
\right\Vert ^{2}}=\sqrt{\sum_{i}\left( \mathbb{E}\left\Vert X_{i}\right\Vert
^{2}-\left\Vert \mathbb{E}\left[ X_{i}\right] \right\Vert ^{2}\right) }\leq 
\sqrt{\sum_{i}\mathbb{E}\left\Vert X_{i}\right\Vert ^{2}}.
\label{elementary variance estimate}
\end{equation}%
Let $f\left( \mathbf{x}\right) =\sum_{i}\left\Vert x_{i}\right\Vert ^{2}$.
The minimizer in $y$ of $f\left( S_{y}^{k}\mathbf{x}\right) $ is clearly $%
y=0 $. Thus $f\left( \mathbf{x}\right) -\inf_{y\in \mathcal{X}}f\left(
S_{y}^{k}\mathbf{x}\right) \leq \left\Vert x_{k}\right\Vert ^{2}\leq c^{2}$
and 
\begin{equation*}
D^{2}f\left( \mathbf{x}\right) \leq \sum_{k}\left\Vert x_{k}\right\Vert
^{4}\leq c^{2}\sum_{k}\left\Vert x_{k}\right\Vert ^{2}=c^{2}f\left( \mathbf{x%
}\right) .
\end{equation*}%
The theorem above, with $a=b=c^{2}$ gives with probability at least $%
1-\delta $ that%
\begin{equation*}
\sqrt{\sum_{i}\mathbb{E}\left\Vert X_{i}\right\Vert ^{2}}\leq \sqrt{%
\sum_{i}\left\Vert X_{i}\right\Vert ^{2}}{+}c\sqrt{2\ln \left( 1/\delta
\right) }.
\end{equation*}%
A union bound with Theorem \ref{thm:vector-concentration} gives

\begin{proposition}
\label{prop:empirical-biased}Under the conditions of Theorem \ref%
{thm:vector-concentration}, but with uncentered $X_{i}$, we have for $%
0<\delta <1$ with probability at least $1-\delta $ in $\mathbf{X}$%
\begin{equation*}
\left\Vert \sum_{i}\left( X_{i}-\mathbb{E}\left[ X_{i}\right] \right)
\right\Vert \leq \sqrt{\sum_{i}\left\Vert X_{i}\right\Vert ^{2}}\left( 1{+}%
\sqrt{2\ln \left( 2/\delta \right) }\right) {+}\frac{16c}{3}\ln \left(
2/\delta \right).
\end{equation*}
\end{proposition}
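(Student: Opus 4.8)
The plan is to assemble the bound from two independent-data ingredients, each invoked at confidence $1-\delta/2$ and combined by a union bound, so that the confidence parameter appears as $2/\delta$ in the logarithms. The first ingredient is the uncentered form of Theorem~\ref{thm:vector-concentration} displayed above: applied with failure probability $\delta/2$ it bounds $\|\sum_i (X_i - \mathbb{E}[X_i])\|$ by the \emph{population} variance proxy $\sqrt{\sum_i \mathbb{E}\|X_i - \mathbb{E}[X_i]\|^2}$ times $(1+\sqrt{2\ln(2/\delta)})$, plus the fast term $\tfrac{4c}{3}\ln(2/\delta)$. The remaining work is to replace this population quantity by its observable counterpart.

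For that replacement I would first drop the centering through the elementary inequality (\ref{elementary variance estimate}), giving $\sqrt{\sum_i \mathbb{E}\|X_i-\mathbb{E}[X_i]\|^2}\le\sqrt{\sum_i \mathbb{E}\|X_i\|^2}$, and then concentrate the uncentered second moment with the self-bounding inequality of Theorem~\ref{thm:selfbound} applied to $f(\mathbf{x})=\sum_i\|x_i\|^2$. Both hypotheses are immediate: the infimum of $f(S_y^k\mathbf{x})$ over $y$ is attained at $y=0$, so $f(\mathbf{x})-\inf_y f(S_y^k\mathbf{x})=\|x_k\|^2\le c^2$ gives $b=c^2$, and the same identity yields $D^2 f(\mathbf{x})=\sum_k\|x_k\|^4\le c^2 f(\mathbf{x})$, so condition (\ref{selfbound condition}) holds with $a=c^2$. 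Theorem~\ref{thm:selfbound} with $a=b=c^2$ at failure probability $\delta/2$ then delivers $\sqrt{\sum_i \mathbb{E}\|X_i\|^2}\le \sqrt{\sum_i \|X_i\|^2}+c\sqrt{2\ln(2/\delta)}$.

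On the intersection of the two events, which has probability at least $1-\delta$, I would substitute the self-bound into the first ingredient and expand. Writing $L=\ln(2/\delta)$, the product $\big(\sqrt{\sum_i\|X_i\|^2}+c\sqrt{2L}\big)\big(1+\sqrt{2L}\big)$ reproduces the advertised leading term $\sqrt{\sum_i\|X_i\|^2}\,(1+\sqrt{2L})$ and leaves the residue $c\sqrt{2L}+2cL$; adding $\tfrac{4c}{3}L$ from the first ingredient collects the lower-order part into $c\sqrt{2L}+\tfrac{10c}{3}L$.

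The only genuine obstacle is turning this residue into the clean constant $\tfrac{16c}{3}L$. This is where the restriction $0<\delta<1$ is used: it forces $L=\ln(2/\delta)>\ln 2>\tfrac12$, hence $\sqrt{2L}\le 2L$, so that $c\sqrt{2L}\le 2cL$ and the residue is at most $\tfrac{16c}{3}L=\tfrac{16c}{3}\ln(2/\delta)$. Everything else is routine substitution, so I expect the proof to be short once the two concentration statements are in place.
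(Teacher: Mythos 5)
Your proposal is correct and follows essentially the same route as the paper: the uncentered form of Theorem~\ref{thm:vector-concentration}, the elementary inequality (\ref{elementary variance estimate}), and Theorem~\ref{thm:selfbound} applied to $f(\mathbf{x})=\sum_i\|x_i\|^2$ with $a=b=c^2$, combined via a union bound at confidence $\delta/2$ each. Your closing algebra — bounding the residue $c\sqrt{2L}+\tfrac{10c}{3}L$ by $\tfrac{16c}{3}L$ using $L=\ln(2/\delta)>\ln 2>\tfrac12$ — correctly supplies the constant-collection step that the paper leaves implicit.
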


For identically distributed $X_{i}$ we can do better, since then%
\begin{equation}
\sqrt{\sum_{i}\mathbb{E}\left\Vert X_{i}-\mathbb{E}\left[ X_{i}\right]
\right\Vert ^{2}}=\sqrt{\frac{1}{2\left( m-1\right) }\mathbb{E}\sum_{i,j:i\neq
j}\left\Vert X_{i}-X_{j}\right\Vert ^{2}},
\label{unbiased variance estimator}
\end{equation}%
and we can apply Theorem \ref{thm:selfbound} to 
\begin{equation*}
f\left( \mathbf{x}\right) =\sum_{i,j:i\neq j}\left\Vert x_{i}-x_{j}\right\Vert
^{2}.
\end{equation*}%
Then for any $k\in \left[ m\right] $ and $y\in \mathcal{X}$ we have 
\begin{equation*}
f\left( \mathbf{x}\right) -f\left( S_{y}^{k}\mathbf{x}\right)
=2\sum_{i:i\neq k}\left( \left\Vert x_{i}-x_{k}\right\Vert ^{2}-\left\Vert
x_{i}-y\right\Vert ^{2}\right) .
\end{equation*}%
It is easy to see that the minimizer in $y$ of $f\left( S_{y}^{k}\mathbf{x}%
\right) $ is the average $y_{k}=1/\left( m-1\right) \sum_{j:j\neq k}x_{j}$.
A computation gives \begin{equation*}
\sum_{i:i\neq k}\left\Vert x_{i}-y_{k}\right\Vert ^{2}=\sum_{i:i\neq
k}\left\Vert x_{i}-x_{k}\right\Vert ^{2}-\frac{1}{m-1}\left\Vert
\sum_{i:i\neq k}\left( x_{i}-x_{k}\right) \right\Vert ^{2}
\end{equation*}%
Thus%
\begin{equation*}
f\left( \mathbf{x}\right) -\inf_{y}f\left( S_{y}^{k}\mathbf{x}\right) =\frac{%
2}{m-1}\left\Vert \sum_{i:i\neq k}\left( x_{i}-x_{k}\right) \right\Vert
^{2}\leq 8\left( m-1\right) c^{2},
\end{equation*}%
Thus, $b = 8\left( m-1\right) c^{2}$, and%
\begin{eqnarray*}
D^{2}f\left( \mathbf{x}\right)  &=&\frac{4}{\left( m-1\right) ^{2}}%
\sum_{k}\left\Vert \sum_{i:i\neq k}\left( x_{i}-x_{k}\right) \right\Vert ^{4}
\\
&=&\frac{4\left( m-1\right) ^{4}}{\left( m-1\right) ^{2}}\sum_{k}\left(
\left\Vert \frac{1}{m-1}\sum_{i:i\neq k}\left( x_{i}-x_{k}\right)
\right\Vert ^{2}\right) ^{2} \\
&\leq &\frac{4\left( m-1\right) ^{4}}{\left( m-1\right) ^{2}}\sum_{k}\left( 
\frac{1}{m-1}\sum_{i:i\neq k}\left\Vert x_{i}-x_{k}\right\Vert ^{2}\right)
^{2} \\
&\leq &16c^{2}\left( m-1\right) \sum_{k}\sum_{i:i\neq k}\left\Vert
x_{i}-x_{k}\right\Vert ^{2}.
\end{eqnarray*}%
The first inequality follows from Jensen's inequality, the second is
obtained by bounding $1/\left( m-1\right) \sum_{i:i\neq k}\left\Vert
x_{i}-x_{k}\right\Vert ^{2}\leq 4c^{2}$. Thus $D^{2}f\left( \mathbf{x}%
\right) \leq 16\left( m-1\right) c^{2}f\left( \mathbf{x}\right)$ where $a = 16\left( m-1\right) c^{2}$.

It follows from Theorem \ref{thm:selfbound} that with probability at
least $1-\delta $%
\begin{equation*}
\sqrt{\sum_{i}\mathbb{E}\left\Vert X_{i}-\mathbb{E}\left[ X_{i}\right]
\right\Vert ^{2}}\leq \sqrt{\frac{1}{2\left( m-1\right) }\sum_{i,j:i\neq
j}\left\Vert X_{i}-X_{j}\right\Vert ^{2}}{+}4c\sqrt{\ln \left( 1/\delta
\right) }
\end{equation*}%
and the union bound with Theorem \ref{thm:vector-concentration} gives
(with some rather crude estimates) for $\delta <2/e$ with probability at
least $1-\delta $%
\begin{eqnarray*}
\left\Vert \sum_{i}\left( X_{i}-\mathbb{E}\left[ X_{i}\right] \right)
\right\Vert  &>&\sqrt{\frac{1}{2\left( m-1\right) }\sum_{i,j:i\neq j}\left\Vert
X_{i}-X_{j}\right\Vert ^{2}}\left( 1{+}\sqrt{2\ln \left( 2/\delta \right) }%
\right)  \\
&&{+}(4\sqrt{2}+4+\frac{4}{3}) c\ln \left( 2/\delta \right) .
\end{eqnarray*}%
Using $4\sqrt{2}+4+\frac{4}{3} = 10.99<11 $ we obtain

\begin{proposition}
\label{Proposition empirical unbiased}Let $\mathbf{X}=\left(
X_{1},...,X_{m}\right) $ be a vector of i.i.d. random variables with values in
a Hilbert space, satisfying $\left\Vert X_{1}\right\Vert \leq c$ almost
surely. Then for $0<\delta <2/e$ with probability at least $1-\delta $ in $%
\mathbf{X}$%
\begin{equation*}
\left\Vert \sum_{i}\left( X_{i}-\mathbb{E}\left[ X_{i}\right] \right)
\right\Vert \leq \sqrt{\frac{1}{2\left( m-1\right) }\sum_{i,j:i\neq j}\left\Vert
X_{i}-X_{j}\right\Vert ^{2}}\left( 1{+}\sqrt{2\ln \left( 2/\delta \right) }%
\right) {+}11c\ln \left( 2/\delta \right) .
\end{equation*}
\end{proposition}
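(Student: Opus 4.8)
The plan is to marry the vector-valued Bernstein bound of Theorem \ref{thm:vector-concentration} with a high-probability, one-sided empirical control of its variance factor, and then glue the two together by a union bound. Writing $Z_i = X_i - \mathbb{E}[X_i]$, the uncentered form of Theorem \ref{thm:vector-concentration} (obtained by dropping the mean-zero hypothesis) applied at confidence $\delta/2$ gives, with probability at least $1-\delta/2$,
\[
\Big\| \sum_i Z_i \Big\| \le \sqrt{\sum_i \mathbb{E}\|Z_i\|^2}\,\big(1 + \sqrt{2\ln(2/\delta)}\big) + \tfrac{4c}{3}\ln(2/\delta).
\]
The non-empirical factor $\sqrt{\sum_i \mathbb{E}\|Z_i\|^2}$ is the only obstruction to a computable bound, so the task reduces to estimating it from the sample. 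Here the identically-distributed assumption is what I would exploit: it converts this variance into a population U-statistic via the identity \eqref{unbiased variance estimator}, $\sqrt{\sum_i \mathbb{E}\|Z_i\|^2} = \sqrt{\tfrac{1}{2(m-1)}\,\mathbb{E}\sum_{i\neq j}\|X_i - X_j\|^2}$, whose empirical counterpart $\tfrac{1}{2(m-1)}\sum_{i\neq j}\|X_i-X_j\|^2$ is precisely the unbiased variance estimator appearing in the claim.

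The central step is to bound this population variance above by its empirical version with high probability, which I would obtain from the self-bounding inequality of Theorem \ref{thm:selfbound} applied to $f(\mathbf{x}) = \sum_{i\neq j}\|x_i - x_j\|^2$. Two hypotheses must be verified. First, the bounded-difference constant: the minimizer in $y$ of $f(S_y^k\mathbf{x})$ is the leave-one-out mean $y_k = \tfrac{1}{m-1}\sum_{j\neq k}x_j$, and a short computation yields $f(\mathbf{x}) - \inf_y f(S_y^k\mathbf{x}) = \tfrac{2}{m-1}\big\|\sum_{i\neq k}(x_i - x_k)\big\|^2 \le 8(m-1)c^2 =: b$. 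Second, the self-bounding constant: expanding $D^2 f$, applying Jensen to move the square inside the leave-one-out average, and using $\tfrac{1}{m-1}\sum_{i\neq k}\|x_i-x_k\|^2 \le 4c^2$ produces $D^2 f \le 16(m-1)c^2 f =: af$. With $\max\{a,b\} = 16(m-1)c^2$, Theorem \ref{thm:selfbound} (at confidence $\delta/2$), after dividing through by $\sqrt{2(m-1)}$, delivers with probability at least $1-\delta/2$
\[
\sqrt{\sum_i \mathbb{E}\|Z_i\|^2} \le \sqrt{\tfrac{1}{2(m-1)}\sum_{i\neq j}\|X_i-X_j\|^2} + 4c\sqrt{\ln(2/\delta)}.
\]

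Finally I would union-bound the two events, substitute the second estimate into the first, and expand the product. The leading factor reproduces $\sqrt{\tfrac{1}{2(m-1)}\sum_{i\neq j}\|X_i-X_j\|^2}\,(1+\sqrt{2\ln(2/\delta)})$ verbatim, while the remaining terms amount to $4c\sqrt{\ln(2/\delta)} + 4\sqrt{2}\,c\,\ln(2/\delta) + \tfrac{4c}{3}\ln(2/\delta)$. To fold these into a single $\ln(2/\delta)$ term I would invoke the hypothesis $\delta < 2/e$, which forces $\ln(2/\delta) > 1$ and hence $\sqrt{\ln(2/\delta)} \le \ln(2/\delta)$; collecting the constants gives $4 + 4\sqrt{2} + \tfrac{4}{3} = 10.99 < 11$, yielding the stated $11c\ln(2/\delta)$. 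I expect the genuine difficulty to lie entirely in the self-bounding step: verifying $D^2 f \le af$ with the correct constant, and carefully tracking the repeated factors of $(m-1)$ through the leave-one-out algebra and the two successive Jensen estimates. By contrast, the Bernstein bound and the union-bound assembly are routine; the delicate bookkeeping is concentrated in the pairwise-difference statistic $f$.
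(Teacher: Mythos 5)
Your proposal is correct and follows essentially the same route as the paper's own proof: the uncentered form of Theorem~\ref{thm:vector-concentration}, the U-statistic identity \eqref{unbiased variance estimator} for identically distributed variables, the self-bounding concentration of Theorem~\ref{thm:selfbound} applied to $f(\mathbf{x})=\sum_{i\neq j}\left\Vert x_{i}-x_{j}\right\Vert ^{2}$ with the same constants $b=8(m-1)c^{2}$ and $a=16(m-1)c^{2}$, and the final union bound using $\delta<2/e$ to collect $4+4\sqrt{2}+\tfrac{4}{3}=10.99<11$. There are no gaps; your verification of the variance-estimation step and the constant bookkeeping match the paper's argument exactly.
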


\subsection{Mixing and its consequences} \label{app:mixing}

Let $\mathbf{X}:=\left\{ X_{t}\right\} _{t\in \mathbb{N}}$ be a sequence of
random variables with values in some measurable space $\mathcal{X}$. For a
set $I\subseteq $ $\mathbb{N}$ use $\Sigma \left( I\right) $ for the $\sigma 
$-field generated by $\left\{ X_{i}\right\} _{i\in I}$ and $\mu _{I}$ for
the joint distribution of $\left\{ X_{i}\right\} _{i\in I}$. Then the
definition of the mixing coefficients $\beta_{\mu} \left( \tau \right) $ for $\tau
\in \mathbb{N}$ reads \citep{bradley2005basic}%
\begin{equation*}
\beta_{\mu}\left( \tau \right) :=\sup_{j\in \mathbb{N}}\sup_{B\in
\Sigma \left( \left[ 1,j\right] \cup \left[ j{+}\tau ,\infty \right) \right)
}\left\vert \mu _{\left[ 1,j\right] \cup \left[ j{+}\tau ,\infty \right)
}\left( B\right) -\mu _{\left[ 1,j\right] }\times \mu _{\left[ j{+}\tau
,\infty \right) }\left( B\right) \right\vert .
\end{equation*}

In the sequel, we let $T,m,\tau \in \mathbb{N}$ with $T=2m\tau $ the length
of the trajectory observed. We call $\tau $ the "mixing time" with the
intuitive understanding that events separated by more than $\tau $ can be
regarded as independent, with a penalty in probability of order $\beta_{\mu}\left( \tau \right) $.

For $j\in \left[ m\right] $ define the index sets $I_{j}=\left\{ 2\left(
j-1\right) \tau {+}1,...,\left( 2j-1\right) \tau \right\} $ and $I_{j}^{\prime
}=\left\{ \left( 2j-1\right) \tau {+}1,...,2j\tau \right\} $. Then $\left[ T%
\right] =\bigcup_{k=1}^{m}I_{k}\cup I_{k}^{\prime }$. We also set

\begin{lemma}
\label{lem:betamix_nonstationary}For $m\in \mathbb{N}$ and $B\in \Sigma \left( I_{1}\cup
...\cup I_{m}\right) $%
\begin{equation*}
\left\vert \mu _{I_{1}\cup ...\cup I_{m}}\left( B\right) -\mu _{I_{1}}\times
...\times \mu _{I_{m}}\left( B\right) \right\vert \leq \left( m-1\right)
\beta_{\mu}\left( \tau \right) ,
\end{equation*}%
and for $B^{\prime }\in \Sigma \left( I_{1}^{\prime }\cup ...\cup
I_{m}^{\prime }\right) $%
\begin{equation*}
\left\vert \mu _{I_{1}^{\prime }\cup ...\cup I_{m}^{\prime }}\left(
B^{\prime }\right) -\mu _{I_{1}^{\prime }}\times ...\times \mu
_{I_{m}^{\prime }}\left( B^{\prime }\right) \right\vert \leq \left(
m-1\right) \beta_{\mu}\left( \tau \right) ,
\end{equation*}
\end{lemma}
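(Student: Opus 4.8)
The plan is to decouple the blocks one at a time through a telescoping (hybrid measure) argument, paying a single mixing penalty $\beta_\mu(\tau)$ at each step. First I would introduce, for $k=0,1,\ldots,m$, the hybrid measures on $\Sigma(I_1\cup\cdots\cup I_m)$ given by
\[
Q_k = \mu_{I_1}\times\cdots\times\mu_{I_k}\times\mu_{I_{k+1}\cup\cdots\cup I_m},
\]
so that $Q_0=\mu_{I_1\cup\cdots\cup I_m}$ is the fully joint law and $Q_m=\mu_{I_1}\times\cdots\times\mu_{I_m}$ is the fully factorized one. The triangle inequality then gives
\[
\big|\mu_{I_1\cup\cdots\cup I_m}(B)-(\mu_{I_1}\times\cdots\times\mu_{I_m})(B)\big|\le\sum_{k=1}^{m}\big|Q_{k-1}(B)-Q_k(B)\big|,
\]
and it remains to bound each summand by $\beta_\mu(\tau)$.

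The key observation is that $Q_{k-1}$ and $Q_k$ share the identical first $k-1$ factors $\mu_{I_1}\times\cdots\times\mu_{I_{k-1}}$ and differ only in whether the block $I_k$ is coupled to, or decoupled from, the tail $I_{k+1}\cup\cdots\cup I_m$. I would make this precise by disintegration: fixing the coordinates in $I_1,\ldots,I_{k-1}$ turns $B$ into a slice $B_{\mathbf{x}}$ that is $\Sigma(I_k\cup\cdots\cup I_m)$-measurable, whence by Fubini
\[
\big|Q_{k-1}(B)-Q_k(B)\big|\le\int\big|\mu_{I_k\cup\cdots\cup I_m}(B_{\mathbf{x}})-(\mu_{I_k}\times\mu_{I_{k+1}\cup\cdots\cup I_m})(B_{\mathbf{x}})\big|\,d(\mu_{I_1}\times\cdots\times\mu_{I_{k-1}})(\mathbf{x}).
\]
Each integrand is thus a genuine two-way decoupling of $I_k$ from its tail.

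To bound the integrand I would invoke the definition of $\beta_\mu(\tau)$ with split point $j=(2k-1)\tau$, the last index of $I_k$. Then $I_k\subseteq[1,j]$, while the index arithmetic shows the tail begins at $2k\tau+1=j+\tau+1$, so $I_{k+1}\cup\cdots\cup I_m\subseteq[j+\tau,\infty)$; hence every slice $B_{\mathbf{x}}\in\Sigma([1,j]\cup[j+\tau,\infty))$. Since $\mu_{I_k\cup\cdots\cup I_m}$ and $\mu_{I_k}\times\mu_{I_{k+1}\cup\cdots\cup I_m}$ are precisely the marginals of $\mu_{[1,j]\cup[j+\tau,\infty)}$ and $\mu_{[1,j]}\times\mu_{[j+\tau,\infty)}$ onto these coordinates, and marginalization does not increase the supremal deviation, the integrand is at most $\beta_\mu(\tau)$ uniformly in $\mathbf{x}$. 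Finally, the step $k=m$ is trivial because $Q_{m-1}=Q_m$ (no tail remains to decouple), so only $m-1$ summands survive, yielding the claimed bound $(m-1)\beta_\mu(\tau)$. The primed statement follows verbatim, replacing $I_k$ with $I_k'$ and checking that consecutive primed blocks are again separated by at least $\tau$.

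I expect the main obstacle to be the careful justification of the disintegration step, namely confirming that slicing $B$ over the first $k-1$ blocks produces a $\Sigma(I_k\cup\cdots\cup I_m)$-measurable event to which the mixing bound applies uniformly, together with the index bookkeeping that guarantees the relevant blocks are separated by at least $\tau$ so that $\beta_\mu(\tau)$, rather than a larger coefficient, is the correct penalty at each step.
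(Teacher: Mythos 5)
Your proposal is correct and takes essentially the same route as the paper: the paper's proof uses the identical telescoping through the hybrid measures $\mu_{I_1}\times\cdots\times\mu_{I_{k-1}}\times\mu_{I_k\cup\cdots\cup I_m}$, bounds each consecutive difference by $\beta_{\mu}(\tau)$ via Fubini's theorem and the definition of the mixing coefficient, and sums the $m-1$ nontrivial steps. Your write-up simply makes explicit the disintegration into slices and the index bookkeeping (split point $j=(2k-1)\tau$, separation of at least $\tau$) that the paper compresses into a single sentence.
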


\begin{proof}
By Fubini's Theorem and the definition of the mixing coefficients, we have
for $1\leq k<m$, that%
\begin{equation*}
\left\vert \mu _{I_{1}}\times ...\times \mu _{I_{k}\cup ...\cup I_{m}}\left(
B\right) -\mu _{I_{1}}\times ...\times \mu _{I_{k}}\times \mu _{I_{k{+}1}\cup
...\cup I_{m}}\left( B\right) \right\vert \leq \beta_{\mu}\left(\tau \right).
\end{equation*}%
Then with a telescopic expansion,
\begin{eqnarray*}
&&\left\vert \mu _{I_{1}\cup ...\cup I_{m}}\left( B\right) -\mu
_{I_{1}}\times ...\times \mu _{I_{m}}\left( B\right) \right\vert \\
& = &\left\vert \sum_{k=1}^{m-1} \mu _{I_{1}}\times ...\times \mu
_{I_{k}\cup ...\cup I_{m}}\left( B\right) -\mu _{I_{1}}\times ...\times \mu
_{I_{k}}\times \mu _{I_{k{+}1}\cup ...\cup I_{m}}\left( B\right) \right\vert \\
&\leq &\sum_{k=1}^{m-1}\left\vert \mu _{I_{1}}\times ...\times \mu
_{I_{k}\cup ...\cup I_{m}}\left( B\right) -\mu _{I_{1}}\times ...\times \mu
_{I_{k}}\times \mu _{I_{k{+}1}\cup ...\cup I_{m}}\left( B\right) \right\vert \\
&\leq &\left( m-1\right) \beta_{\mu}\left( \tau \right) .
\end{eqnarray*}%
The argument for $B^{\prime }$ is analogous.\bigskip
\end{proof}

In the sequel we write $\Pr_{I}$ for the probability measure $\mu
_{I_{1}}\times ...\times \mu _{I_{m}}$ on $\Sigma \left( I_{1}\cup ...\cup
I_{m}\right) $ and $\Pr_{I^{\prime }}$ for $\mu _{I_{1}^{\prime }}\times
...\times \mu _{I_{m}^{\prime }}$ on $\Sigma \left( I_{1}^{\prime }\cup
...\cup I_{m}^{\prime }\right) $. The previous lemma can then be stated more
concisely as $\left\vert \Pr_{I}\left( B\right) -\Pr \left( B\right)
\right\vert \leq \left( m-1\right) \beta_{\mu}\left( \tau \right) $
for $B\in \Sigma \left( I_{1}\cup ...\cup I_{m}\right) $ and $\left\vert
\Pr_{I^{\prime }}\left( B\right) -\Pr \left( B\right) \right\vert \leq
\left( m-1\right) \beta_{\mu}\left( \tau \right) $ for $B\in \Sigma
\left( I_{1}^{\prime }\cup ...\cup I_{m}^{\prime }\right) $.

\BlockingLemma*

\begin{proof}
Write $Y_{j}=\sum_{i\in I_{j}}X_{i}$ and $Y_{j}^{\prime }=\sum_{i\in
I_{j}^{\prime }}X_{i}$. Then%
\begin{equation*}
\left\Vert \sum_{i=1}^{n}X_{i}\right\Vert =\left\Vert
\sum_{j=1}^{m}Y_{j}{+}\sum_{j=1}^{m}Y_{j}^{\prime }\right\Vert \leq \left\Vert
\sum_{j=1}^{m}Y_{j}\right\Vert {+}\left\Vert \sum_{j=1}^{m}Y_{j}^{\prime
}\right\Vert ,
\end{equation*}%
Thus%
\begin{align*}
& \Pr \left\{ \left\Vert \sum_{i=1}^{n}X_{i}\right\Vert >F\left( \mathbf{X}%
\right) {+}F^{\prime }\left( \mathbf{X}\right) \right\} \\
& \leq \Pr \left\{ \left\Vert \sum_{j=1}^{m}Y_{j}\right\Vert {+}\left\Vert
\sum_{j=1}^{m}Y_{j}^{\prime }\right\Vert >F\left( \mathbf{X}\right)
{+}F^{\prime }\left( \mathbf{X}\right) \right\} \\
& \leq \Pr \left\{ \left\Vert \sum_{j=1}^{m}Y_{j}\right\Vert >F\left( 
\mathbf{X}\right) \right\} {+}\Pr \left\{ \left\Vert
\sum_{j=1}^{m}Y_{j}^{\prime }\right\Vert >F^{\prime }\left( \mathbf{X}%
\right) \right\} .
\end{align*}%
The conclusion then follows from applying Lemma \ref{lem:betamix_nonstationary} to the
events $B=\left\{ \left\Vert \sum_{j=1}^{m}Y_{j}\right\Vert >F\left( \mathbf{%
X}\right) \right\} \in \Sigma \left( I_{1}\cup ...\cup I_{m}\right) $ and $%
B^{\prime }=\left\{ \left\Vert \sum_{j=1}^{m}Y_{j}^{\prime }\right\Vert
>F^{\prime }\left( \mathbf{X}\right) \right\} \in \Sigma \left(
I_{1}^{\prime }\cup ...\cup I_{m}^{\prime }\right) $. 
\bigskip
\end{proof}

\subsection{Concentration for dependent variables}\label{app:con_ineq_dependent}

Now let $X_{i}$ be a possibly dependent sequence of mean-zero random vectors
in $H$. Write $Y_{j}=\sum_{i\in I_{j}}X_{i}$ and $Y_{j}^{\prime }=\sum_{i\in
I_{j}^{\prime }}X_{i}$. Note the bounds $\left\Vert Y_{k}\right\Vert \leq
\tau c$ and $\left\Vert Y_{k}^{\prime }\right\Vert \leq \tau c$ and that the 
$Y_{k}$ are independent under $\Pr_{I}$, as are the $Y_{k}^{\prime }$ under $%
\Pr_{I^{\prime }}$. Define 
\begin{eqnarray*}
F\left( \mathbf{X}\right)  &=&\sqrt{\sum_{k=1}^{m}\mathbb{E}\left\Vert
Y_{k}\right\Vert ^{2}}\left( 1{+}\sqrt{2\ln \left( 2/\delta \right) }\right) {+}%
\frac{4\tau c}{3}\ln \left( 2/\delta \right)  \\
F^{\prime }\left( \mathbf{X}\right)  &=&\sqrt{\sum_{k=1}^{m}\mathbb{E}%
\left\Vert Y_{k}^{\prime }\right\Vert ^{2}}\left( 1{+}\sqrt{2\ln \left(
2/\delta \right) }\right) {+}\frac{4\tau c}{3}\ln \left( 2/\delta \right) .
\end{eqnarray*}%

Lemma \ref{lem:blocking} gives%
\begin{eqnarray*}
\Pr \left\{ \left\Vert \sum_{i=1}^{n}X_{i}\right\Vert >F\left( \mathbf{X}%
\right) {+}F^{\prime }\left( \mathbf{X}\right) \right\}  &\leq &\Pr_{I}\left\{
\left\Vert \sum_{j=1}^{m}Y_{j}\right\Vert >F\left( \mathbf{X}\right)
\right\} {+}\Pr_{I^{\prime }}\left\{ \left\Vert \sum_{j=1}^{m}Y_{j}^{\prime
}\right\Vert >F^{\prime }\left( \mathbf{X}\right) \right\} {+}2\left(
m-1\right) \beta_{\mu}\left( \tau \right)  \\
&\leq &\delta /2{+}\delta /2{+}2\left( m-1\right) \beta_{\mu}\left( \tau
\right),
\end{eqnarray*}%
where the second inequality follows from Theorem \ref{thm:vector-concentration}. Substitution of the expressions for $F\left( \mathbf{X}%
\right) $ and $F^{\prime }\left( \mathbf{X}\right) $ and using $\sqrt{a}{+}%
\sqrt{b}\leq \sqrt{2}\sqrt{a{+}b}$, we obtain%
\begin{eqnarray*}
&&\Pr \left\{ \left\Vert \sum_{i=1}^{n}X_{i}\right\Vert >\sqrt{%
\sum_{k=1}^{m}\left( \mathbb{E}\left\Vert Y_{k}\right\Vert ^{2}{+}\mathbb{E}%
\left\Vert Y_{k}^{\prime }\right\Vert ^{2}\right) }\left( \sqrt{2}{+}2\sqrt{%
\ln \left( 2/\delta \right) }\right) {+}\frac{8\tau c}{3}\ln \left( 2/\delta
\right) \right\}  \\
&<&\delta {+}2\left( m-1\right) \beta_{\mu}\left( \tau \right) \text{.}
\end{eqnarray*}%
Now $\mathbb{E}\left\Vert Y_{k}\right\Vert ^{2}=\sum_{\left( t,s\right) \in
I_{k}\times I_{k}}\mathbb{E}\left[ \left\langle X_{t},X_{s}\right\rangle %
\right] $ and $\mathbb{E}\left\Vert Y_{k}^{\prime }\right\Vert
^{2}=\sum_{\left( t,s\right) \in I_{k}^{\prime }\times I_{k}^{\prime }}%
\mathbb{E}\left[ \left\langle X_{t},X_{s}\right\rangle \right] $. Thus 
\begin{equation*}
\sum_{k=1}^{m}\left( \mathbb{E}\left\Vert Y_{k}\right\Vert ^{2}{+}\mathbb{E}%
\left\Vert Y_{k}^{\prime }\right\Vert ^{2}\right) =\sum_{\left( t,s\right)
\in S_{\tau}}\mathbb{E}\left[ \left\langle X_{t},X_{s}\right\rangle \right] ,
\end{equation*}%
where $S_\tau$ is the set of index pairs%
\begin{equation}
S_\tau=\bigcup_{k=1}^{m}\left( I_{k}\times I_{k}\right) \cup \left( I_{k}^{\prime
}\times I_{k}^{\prime }\right) ,  \label{Define S}
\end{equation}%
which is a disjoint union of $\tau \times \tau $-squares along the diagonal
of the $T\times T$ matrix. If $\left( t,s\right) \in S_\tau$ then the times $s$
and $t$ are never more than $\tau $ apart. We have proved

\begin{theorem}
\label{Theorem Dependent Bernstein}Let $m,\tau \in \mathbb{N}$, $T=2m\tau $,
and let $\mathbf{X}=\left( X_{1},...,X_{T}\right) $ be a vector of mean zero
random variables in a separable Hilbert-space $H$, satisfying $\left\Vert
X_{t}\right\Vert \leq c$. Let $\delta >0$. Then with a probability of at least $%
1-\delta -2\left( m-1\right) \beta_{\mu}\left( \tau \right) $ we have%
\begin{equation*}
\left\Vert \sum_{t=1}^{T}X_{t}\right\Vert \leq \sqrt{\sum_{\left( t,s\right)
\in S_{\tau}}\mathbb{E}\left[ \left\langle X_{t},X_{s}\right\rangle \right] }\left( 
\sqrt{2}{+}2\sqrt{\ln \left( 2/\delta \right) }\right) {+}\frac{8\tau c}{3}\ln
\left( 2/\delta \right) ,
\end{equation*}%
where $S_{\tau}\subseteq \left[ T\right] \times \left[ T\right] $ is given by (\ref%
{Define S}).
\end{theorem}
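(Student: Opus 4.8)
The plan is to reduce the dependent sum to two independent sums of block variables and then invoke the vector-valued Bernstein inequality already established for the independent case. First I would partition $[T]$ into the interlaced blocks $I_1,\dots,I_m$ and $I_1',\dots,I_m'$ of length $\tau$ and set $Y_k = \sum_{i\in I_k} X_i$ and $Y_k' = \sum_{i\in I_k'} X_i$, so that $\sum_{t=1}^T X_t = \sum_k Y_k + \sum_k Y_k'$. Under the product measure $\Pr_I$ the $Y_k$ are independent, and likewise the $Y_k'$ under $\Pr_{I'}$; each is mean zero (inherited from the $X_t$) and satisfies $\|Y_k\| \le \tau c$ by the triangle inequality. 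This is precisely the setup in which Lemma \ref{lem:blocking} lets us trade mutual dependence for an additive penalty $2(m-1)\beta_\mu(\tau)$ in the failure probability.

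Next I would apply Theorem \ref{thm:vector-concentration} separately to the independent family $\{Y_k\}$ (with norm bound $\tau c$) and to $\{Y_k'\}$, each at confidence level $\delta/2$. This produces the two threshold functions $F(\mathbf{X}) = \sqrt{\sum_k \mathbb{E}\|Y_k\|^2}\,\bigl(1 + \sqrt{2\ln(2/\delta)}\bigr) + \tfrac{4\tau c}{3}\ln(2/\delta)$ and its primed analogue. Crucially, because the variance term here is a genuine expectation, $F$ and $F'$ are deterministic constants, hence trivially $\Sigma(I_1\cup\dots\cup I_m)$- and $\Sigma(I_1'\cup\dots\cup I_m')$-measurable; this is what makes the measurability hypotheses of Lemma \ref{lem:blocking} immediate. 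Substituting $F,F'$ into the blocking lemma and bounding the two independent probabilities by $\delta/2$ each yields the overall failure probability $\delta + 2(m-1)\beta_\mu(\tau)$.

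It then remains to collect terms. The two fast terms add to give $\tfrac{8\tau c}{3}\ln(2/\delta)$. For the variance term I would use $\sqrt{a}+\sqrt{b}\le\sqrt{2}\sqrt{a+b}$ to merge the two square roots, and then compute $\mathbb{E}\|Y_k\|^2 = \sum_{(t,s)\in I_k\times I_k}\mathbb{E}[\langle X_t,X_s\rangle]$, so that summing over both block sequences gives exactly $\sum_{(t,s)\in S_\tau}\mathbb{E}[\langle X_t,X_s\rangle]$, with $S_\tau$ the union of the diagonal $\tau\times\tau$ squares. The extra $\sqrt 2$ then combines with the factor $1 + \sqrt{2\ln(2/\delta)}$ (since $\sqrt 2\cdot\sqrt{2\ln(2/\delta)} = 2\sqrt{\ln(2/\delta)}$) to produce exactly the stated coefficient $\sqrt 2 + 2\sqrt{\ln(2/\delta)}$. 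The only genuinely delicate points are bookkeeping ones: splitting the confidence budget symmetrically so the blocking penalty appears exactly once as $2(m-1)\beta_\mu(\tau)$, and recognizing that the pooled second-moment sum is the correlation sum over the diagonal blocks $S_\tau$ rather than over all of $[T]\times[T]$. Since the $X_t$ are already assumed mean zero, no centering step is needed, so I expect no real obstacle beyond this accounting; the substantive work will instead lie in the \emph{empirical} variants (Theorems \ref{thm:empirical-Bernstein-biased}--\ref{thm:empirical-Bernstein-unbiased}), where $F,F'$ become data-dependent and the self-bounding inequality is required.
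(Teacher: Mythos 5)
Your proposal is correct and follows essentially the same route as the paper's own proof: the interlaced blocking via Lemma \ref{lem:blocking} with the constant thresholds $F,F'$, Theorem \ref{thm:vector-concentration} applied to each independent block family at level $\delta/2$, the merge $\sqrt{a}+\sqrt{b}\le\sqrt{2}\sqrt{a+b}$, and the identification $\sum_k\bigl(\mathbb{E}\|Y_k\|^2+\mathbb{E}\|Y_k'\|^2\bigr)=\sum_{(t,s)\in S_\tau}\mathbb{E}[\langle X_t,X_s\rangle]$. All the bookkeeping (confidence split, fast-term constant $\tfrac{8\tau c}{3}$, coefficient $\sqrt{2}+2\sqrt{\ln(2/\delta)}$) matches the paper exactly.
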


\begin{enumerate}
\item If we drop the mean-zero assumption, this becomes Theorem \ref{thm:Data-dependent-Bernstein}.

\item To interpret the first term on the right-hand side, note the
similarity to the variance of $\sum X_{t}$, which would be%
\begin{equation*}
\mathbb{E}\left\Vert \sum X_{t}\right\Vert ^{2}=\sum_{\left( t,s\right) \in %
\left[ T\right] \times \left[ T\right] }\mathbb{E}\left[ \left\langle
X_{t},X_{s}\right\rangle \right] .
\end{equation*}

\item If the $X_{t}$ are independent, we can set $\tau =1$ and $\beta _{%
\mathbf{X}}\left( \tau \right) =0$, and we recover Theorem \ref{thm:vector-concentration} up to a constant factor of $\sqrt{2}$ on the first,
and $2$ on the second term.
\end{enumerate}

In the proof above, the functions $F\left( \mathbf{X}\right) $ and $F^{\prime
}\left( \mathbf{X}\right) $ were constants. But let 
\begin{equation*}
F\left( \mathbf{X}\right) =\sqrt{\sum_{k=1}^{m}\left\Vert \sum_{i\in
I_{k}}X_{i}\right\Vert ^{2}}\left( 1{+}\sqrt{2\ln \left( 4/\delta \right) }%
\right) {+}\frac{16c\tau}{3}\ln \left( 4/\delta \right) ,
\end{equation*}%
which is $\Sigma \left( I_{1}\cup ...\cup I_{m}\right) $-measurable, and
replace $I_{k}$ by $I_{k}^{\prime }$ for the analogous definition of $%
F^{\prime }\left( \mathbf{X}\right) $. Then Lemma \ref{lem:blocking} and
Proposition \ref{prop:empirical-biased} give a proof of Theorem \ref%
{thm:empirical-Bernstein-biased}. In the same way, defining 
\begin{equation*}
F\left( \mathbf{X}\right) =\sqrt{\frac{1}{2\left( m-1\right) }\sum_{k,l:k\neq
l}\left\Vert \sum_{i\in I_{k}}X_{i}-\sum_{j\in I_{l}}X_{j}\right\Vert ^{2}}%
\left( 1{+}\sqrt{2\ln \left( 4/\delta \right) }\right) {+}11c\tau\ln \left( 4/\delta
\right) ,
\end{equation*}%
together with the corresponding $F^{\prime }\left( \mathbf{X}\right) $,
yields the proof of Theorem \ref{thm:empirical-Bernstein-unbiased}.

\section{Theoretical result for learning dynamical systems} \label{app:LDS_theory}
In this section, we begin by briefly reviewing the notations and definitions pertinent to learning dynamical systems. Next, we establish bounds on the deviation of the true risk from the empirical risk, uniformly across a prescribed set of HS operators on $\RKHS$. Finally, we demonstrate how this analysis, originally formulated for Ivanov regularization, can be extended to Tikhonov regularization.
\subsection{Notation and background}
Let us first review some fundamental concepts related to Markov chains and Koopman operators. Consider $\mathbf{X}:= \left\{X_{t} \colon t\in \mathbb{N} \right\}$, a collection of random variables taking values in a measurable space $(\mathcal{X}, \Sigma_{\mathcal{X}})$, known as the state space. We define $\mathbf{X}$ as a {\em Markov chain} if $\ensuremath{\mathbb P}\{ X_{t+1} \in B \vert X_{[t]} \} = \ensuremath{\mathbb P}\{X_{t + 1} \in B \vert X_t \}$. Additionally, $\mathbf{X}$ is termed {\em time-homogeneous} if there exists a function $p\colon \mathcal{X} \times \Sigma_{\mathcal{X}} \to [0,1]$, known as the {\it transition kernel}, such that for all $(x, B) \in \mathcal{X} \times \Sigma_{\mathcal{X}}$ and each $t \in \mathbb{N}$,
\[
\mathbb{P}\left\{X_{t + 1} \in B \middle| X_{t} = x \right\} = p(x,B).
\]

A broad class of Markov chains includes those with an \textit{invariant measure} $\pi$, which satisfies $\pi(B) = \int_{\mathcal{X}} \pi(dx)p(x,B)$ for $B\in\Sigma_{\mathcal{X}}$. For a {\em time-homogeneous} Markov chain with an invariant (stationary) distribution $\im$ the (stochastic) \textit{Koopman operator} $\Koop \colon \Lii\to\Lii$  is given by \\
\begin{equation}\label{eq:Koopman_app}
A_{\pi} f(x) := \int_{\mathcal{X}} p(x, dy)f(y) = \mathbb{E}\left[f(X_{t {+} 1}) \middle | X_{t} = x\right], \quad f \in L^2_{\pi}(\mathcal{X}), x \in \mathcal{X}.
\end{equation}

In many practical scenarios, the Koopman operator $\Koop$ is unknown, but data from one or more trajectories are available. The framework for operator regression learning introduced in~\citep{Kostic2022} estimates the Koopman operator on $\Lii$ within an RKHS, using an associated feature map $\fH: \X \to \RKHS$. In this vector-valued regression, the population risk functional is defined as
\begin{equation}    
\Risk(G) = \EE_{X\sim\pi,X^+\sim p(\cdot\vert X))}[\norm{\phi(X^+) - G^{}\phi(X)}^{2}_\RKHS],
\end{equation}

and the goal is to learn $\Koop$ by minimizing the risk over a class of operators $G\colon \RKHS \to \RKHS$, using a dataset of consecutive states $\Data := (x_i, x^{+}_{i})_{i=1}^{n}$. A typical setting involves obtaining these states from a single trajectory of the process after it reaches the equilibrium distribution, where $X_0 \sim \pi$ and $X^+_i \equiv X_{i+1} \sim p(\cdot,\vert,X_i)$ for $i = 2,\ldots,n$. A common estimator in this context is the Reduced Rank Regression (RRR) estimator $\EKRR$, obtained by minimizing the regularized empirical risk
\begin{equation}    
{\ERisk}_\lambda(\Estim) := \tfrac{1}{n}\sum_{i \in[n]}\norm{\phi(x^+_i) - \Estim^{}\phi(x_i)}^{2}_{\RKHS} {+} \lambda\|\Estim\|^2_{\text{HS}},
\end{equation}

over operators $\Estim$ of rank at most $r$. The estimator $\ERRR = \ECreg^{-1/2}\SVDr{\ECreg^{-1/2}\ECxy}$ is computed via an $r$-truncated SVD $\SVDr{\cdot}$, where the empirical input and cross covariances are respectively
$$\ECx \,= \,\textstyle{\tfrac{1}{n}\sum_{i \in[n]}}\, \phi(x_{i}){\otimes} \phi(x_{i}), ~~{\rm and}~~\ECxy\,{ =} \,\textstyle{\tfrac{1}{n}\sum_{i \in[n]}}\, \phi(x_{i}){\otimes }\phi(x^+_{i}).$$
In this section we denote as $\ECx_{\reg} = \ECx + \reg I_{\RKHS}$. Further, similar to the relationship between population and empirical risk, the population covariance and cross-covariance are given respectively by
$$ 
\Cx =  \EE_{X \sim \im}[\phi(X) \otimes \phi(X)], ~~{\rm and}~~ \Cxy = \EE_{X\sim\pi,X^+\sim p(\cdot\vert X))}[\phi(X) \otimes \phi(X^+)].$$

\subsection{Uniform bound for Ivanov regularization}

In view of the fact that both Tikhonov regularization and Ivanov regularization versions of the problem are equivalent, we will first focus on the Ivanov regularization formulation, which is more convenient to theoretical analysis\citep{oneto2016tikhonov, luise2019leveraging}. We first state the uniform bound for the Ivanov regularization case. Then, reformulate to Tikhonov regularization since it is more convenient from a computational standpoint. 

\begin{theorem} \label{thm:ivanov_risk_bound}
Let $\mathbf{X}=(X_{t})_{t=1}^{n}$ be a stationary Markov chain with distribution $\mu$, and the risk definitions as above, noting that $\pi=\mu_{1}$.
Denote $\mathcal{G}_{r,\gamma} = \{\Estim \in \mathrm{HS}_r(\mathcal{H}): \|\Estim\|_{\mathrm{HS}} \leq \gamma \}$ and  $\mathbf{Y}=\left( \fH(X_{0})\otimes\fH(X_{0}),...,\fH(X_{n-1})\otimes\fH(X_{n-1})\right) $, $\mathbf{Z}=\left( \fH(X_{0})\otimes\fH(X_{1}),...,\fH(X_{n-1})\otimes\fH(X_{n})\right) $, and $\mathbf{W}=\left( \|\fH(X_{1})\|^2,...,\|\fH(X_{n})\|^2\right) $. Assume $n = 2m\tau$, and exists $\bcon\,>\,0$  such that $\norm{\fH(X_t)}^2{\leq }\bcon$ a.s. for all $t$. Let $\delta > 0$ and assume $\delt = \delta{-}2(\frac{n}{2\tau}{-}1)\beta_{\mu}(\tau) > 0$ and $\deltprime = \delta{-}2(\frac{n}{2\tau}{-}1)\beta_{\mu}(\tau-1) > 0$. Then, with probability at least $1 {-} \delta$ we have for every $\EEstim \in \mathcal{G}_{r,\gamma}$

\begin{equation}
\begin{aligned}
|\Risk(\EEstim)-\ERisk(\EEstim)| &\leq 
\frac{32\gamma^2\bcon\tau}{3n}\ln{\frac{12}{\delt}} +\frac{64\sqrt{r}\gamma\bcon\tau}{3n}\ln{\frac{12}{\deltprime}}+\frac{7\bcon\tau}{3(\frac{n}{2\tau}-1)}\ln{\frac{12}{\delt}} \\
&+ \sqrt{\frac{2\gamma^4\BECorrtau{Y}\tau}{n}\left( 1
+2\ln\frac{12}{\delt}\right)} + \sqrt{\frac{2r\gamma^2\BECorrtau{Z}\tau}{n}\left( 1{+}2\ln\frac{12}{\deltprime}\right)} \\
&+ \sqrt{\frac{2\overline{V}_{\tau}(\mathbf{W})\tau}{n}\ln{\frac{12}{\delt}}}
\end{aligned}
\end{equation}
where $\overline{V}_{\tau}(\mathbf{W}) = \frac{1}{m(m-1)\tau^2} \sum_{1\leq i< j\leq m} (\overline{W}_{i} - \overline{W}_{j})^2 + (\overline{W'}_{i} - \overline{W'}_{j})^2$, $\overline{W}_{j}=\sum_{i\in I_{j}}W_i$ and $\overline{W'}_{j}=\sum_{i\in I_{j}^{\prime }}W_i$. $\BECorrtau{\mathbf{Y}}$ and $\BECorrtau{\mathbf{Z}}$ were defined before.
\end{theorem}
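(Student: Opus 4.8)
The plan is to expand the squared residual defining the risk, match population against empirical contributions term by term, and reduce each resulting deviation to an instance of the empirical Bernstein inequalities proved earlier. Writing the pointwise loss as $\norm{\fH(X^+)-G\fH(X)}^2 = \norm{\fH(X^+)}^2 - 2\scalarp{\fH(X^+),G\fH(X)} + \norm{G\fH(X)}^2$ and subtracting the empirical from the population version, I would split $\Risk(G)-\ERisk(G)$ into three contributions: a $G$-free term $A = \EE\norm{\fH(X^+)}^2 - \tfrac1n\sum_i\norm{\fH(x_i^+)}^2$, a cross term $B$, and a quadratic term $C$. Using the rank-one identity $\scalarp{u,Gv} = \scalarp{u\otimes v, G}_{\rm HS}$, the cross term becomes $B = \scalarp{\Delta_Z, G}_{\rm HS}$ with $\Delta_Z = \EE[\fH(X)\otimes\fH(X^+)] - \tfrac1n\sum_i \fH(x_i)\otimes\fH(x_i^+)$, and the quadratic term becomes $C = \scalarp{\Delta_Y, G^\ast G}_{\rm HS}$ with $\Delta_Y = \EE[\fH(X)\otimes\fH(X)] - \tfrac1n\sum_i \fH(x_i)\otimes\fH(x_i)$. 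The crucial observation is that linearity pulls $G$ entirely outside the random deviations $\Delta_Y,\Delta_Z$, so uniformity over $\mathcal{G}_{r,\gamma}$ needs no covering argument: a single concentration event for each of $\Delta_Y$ and $\Delta_Z$ controls every $G$ simultaneously.

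Next I would estimate $B$ and $C$ by trace duality combined with the rank constraint. Since the nuclear norm satisfies $\norm{G}_{\ast}\le\sqrt r\,\hnorm{G}\le\sqrt r\,\gamma$ for $G$ of rank at most $r$, and $\norm{G^\ast G}_{\ast}=\hnorm{G}^2\le\gamma^2$, I obtain $2\abs{B}\le 2\sqrt r\,\gamma\,\norm{\Delta_Z}_{\rm op}\le 2\sqrt r\,\gamma\,\hnorm{\Delta_Z}$ and $\abs{C}\le\gamma^2\norm{\Delta_Y}_{\rm op}\le\gamma^2\hnorm{\Delta_Y}$. Now $\hnorm{\Delta_Z}$ and $\hnorm{\Delta_Y}$ are precisely the mean-estimation errors for the rank-one operator sequences $Z_t=\fH(X_t)\otimes\fH(X_{t+1})$ and $Y_t=\fH(X_t)\otimes\fH(X_t)$, viewed as elements of the Hilbert space $\HS{\RKHS}$, satisfying $\hnorm{Z_t}\le\bcon$ and $\hnorm{Y_t}\le\bcon$ almost surely. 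Applying the biased empirical Bernstein inequality (Theorem \ref{thm:empirical-Bernstein-biased}) to each sequence produces the variance proxies $\BECorrtau{Z}$, $\BECorrtau{Y}$ and their fast companions. The one subtlety is that a $Z$-block $\sum_{i\in I_k}Z_i$ reaches one step beyond $I_k$, so consecutive $Z$-blocks are separated by only $\tau-1$ time steps; this is why its mixing penalty is governed by $\beta_\mu(\tau-1)$ and the associated threshold is $\deltprime$, whereas the $Y$-blocks keep separation $\tau$ and threshold $\delt$.

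The remaining term $A$ is the deviation of the empirical mean of the \emph{scalar} bounded sequence $W_t=\norm{\fH(X_t)}^2\in[0,\bcon]$. Treating $\R$ as a one-dimensional Hilbert space and invoking stationarity, I would apply the unbiased (U-statistic) empirical Bernstein inequality (Theorem \ref{thm:empirical-Bernstein-unbiased}), whose variance proxy specializes to $\overline{V}_\tau(\mathbf{W})$, built from squared differences of block sums; this yields the $\sqrt{2\overline{V}_\tau(\mathbf{W})\tau/n\,\ln(\cdot)}$ term together with the scalar fast term $\tfrac{7\bcon\tau}{3(m-1)}\ln(\cdot)$. Finally I would combine the three events by a union bound, allocating failure budget $\delta/3$ to each; since each empirical Bernstein already carries a factor $4/(\cdot)$ inside its logarithm, the logarithms become $\ln(12/(\cdot))$, and collecting the three square-root terms and the three fast terms gives the stated inequality.

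The main obstacle is bookkeeping rather than a single hard estimate: one must carry the two distinct mixing thresholds $\delt$ and $\deltprime$ induced by the $\tau$ versus $\tau-1$ block separations, verify the almost-sure bounds $\hnorm{Y_t},\hnorm{Z_t}\le\bcon$ for the operator-valued sequences, and apply the nuclear-/operator-norm duality so that the correct powers of $\sqrt r$ and $\gamma$ attach to each term. A secondary point worth emphasizing is that the bound on $C$ uses $\norm{G^\ast G}_{\ast}=\hnorm{G}^2$ with no extra $\sqrt r$ factor, which is exactly what makes the Ivanov radius $\gamma$ the natural parameter here; the passage to the Tikhonov statement of Theorem \ref{thm:tikhonov_risk_bound} then follows by substituting this radius with the norm of the reduced-rank Tikhonov minimizer via the equivalence of the two regularizations.
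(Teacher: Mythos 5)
Your proposal follows essentially the same route as the paper's proof: the three-way decomposition of $\Risk(\EEstim)-\ERisk(\EEstim)$ into the $G$-free trace term, the cross term, and the quadratic term is exactly the paper's identity $\tr(\Cy-\ECy)+\tr(\Estim\Estim^*(\Cx-\ECx))-2\tr(\Estim^*(\Cxy-\ECxy))$; the H\"older/trace-duality step producing the factors $\gamma^2$ and $2\sqrt{r}\gamma$ is identical; the two operator-valued deviations are handled by Theorem~\ref{thm:empirical-Bernstein-biased} applied to the rank-one sequences $\mathbf{Y}$ and $\mathbf{Z}$ (with the same $\tau$ versus $\tau{-}1$ separation bookkeeping giving $\delt$ and $\deltprime$); and the final union bound with budget $\delta/3$ per event yields the $\ln(12/\cdot)$ factors, all as in the paper. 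The one genuine deviation is the scalar term: the paper does \emph{not} use Theorem~\ref{thm:empirical-Bernstein-unbiased} specialized to $\mathbb{R}$, but instead combines the blocking Lemma~\ref{lem:blocking} with the scalar empirical Bernstein inequality of \citep{maurer2009empirical} (their Theorem~4) applied to the block sums $\overline{W}_k$; that is where the stated fast term $\frac{7\bcon\tau}{3(n/(2\tau)-1)}\ln\frac{12}{\delt}$ and the slow term $\sqrt{\frac{2\overline{V}_{\tau}(\mathbf{W})\tau}{n}\ln\frac{12}{\delt}}$ come from. Your claim that Theorem~\ref{thm:empirical-Bernstein-unbiased} "yields" these expressions is not accurate as stated: its fast term is $22\tau c/n\,\ln(\cdot)$ rather than $\frac{7\bcon\tau}{3(m-1)}\ln(\cdot)$, and its slow term carries the factor $\bigl(1+2\ln(\cdot)\bigr)$ rather than $2\ln(\cdot)$ (also note $\UECorrtau{W}=\overline{V}_{\tau}(\mathbf{W})/2$), so your route proves a theorem of the same shape but with different constants on the scalar piece, not literally the stated inequality. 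This is a constants-level mismatch, not a structural gap; substituting the Maurer--Pontil scalar inequality for Theorem~\ref{thm:empirical-Bernstein-unbiased} in your argument recovers the paper's proof exactly.
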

\begin{proof}
We can restate both risk and the empirical risk as follows:
\begin{equation*}
\begin{aligned}
\Risk(\Estim) & =\EE_{(x,y)\sim \rho} \norm{\fH(y) - \Estim^*\fH(x)}^2 \\ & =
\EE_{(x,y)\sim \rho}[\tr\left(\fH\left(X_{i+1}\right) \otimes \fH\left(X_{i+1}\right)\right)-2\left\langle\fH\left(X_{i+1}\right), \Estim^* \fH\left(x_i\right)\right\rangle_{\mathcal{H}}+\operatorname{tr}\left(\Estim \Estim^* \fH\left(x_i\right) \otimes \fH\left(x_i\right)\right)]\\ & =\tr(\Cy)+\tr\left(\Estim \Estim^* \Cx\right)-2\tr\left(\Estim^* \Cxy\right)
\end{aligned}
\end{equation*}
Similarly, we have $\ERisk(\Estim)=\tr(\ECy)+\tr\left(\Estim \Estim^* \ECx\right)-2\tr\left(\Estim^* \ECxy\right)$.

Let $\mathcal{G}_{r,\gamma} = \{\Estim \in \mathrm{HS}_r(\mathcal{H}): \|\Estim\|_{\mathrm{HS}} \leq \gamma \}$. Then for $\forall \Estim \in \mathcal{G}_{r,\gamma}$ we can easily show:
$$
\begin{aligned}
\Risk(\Estim)-\ERisk(\Estim) & =\tr(\Cy - \ECy)+\tr\left(\Estim \Estim^* (\Cx - \ECx)\right)-2\tr\left(\Estim^* (\Cxy - \ECxy)\right) \\
& \leq \tr(\Cy-\ECy)+\gamma^2\|\Cx-\ECx\|+2 \sqrt{r} \gamma\|\Cxy-\ECxy\|
\end{aligned}
$$
where we have used Hölder's inequality to obtain the last two terms. 

First, we use an empirical Bernstein's inequality for bounded random variables, $W_i = \tr(\fH(X_{i+1})\otimes\fH(X_{i+1})) = \|\fH(X_{i+1})\|^2$ and $|W_i|\leq\bcon$ since

\begin{equation*}
\begin{aligned}
\tr(\Cy - \ECy) & = \tr(\frac{1}{n}\sum_{i=1}^{n}\fH(X_{i+1})\otimes\fH(X_{i+1}) - \EE[\fH(X_{i+1})\otimes\fH(X_{i+1})]) \\
& = \frac{1}{n}\sum_{i=1}^{n}\tr(\fH(X_{i+1})\otimes\fH(X_{i+1}) - \EE[\fH(X_{i+1})\otimes\fH(X_{i+1}))] \\
& = \frac{1}{n} \sum_{i=1}^{n}\tr(\fH(X_{i+1})\otimes\fH(X_{i+1})) - \EE[\tr(\fH(X_{i+1})\otimes\fH(X_{i+1}))].
\end{aligned}
\end{equation*}
Now, Write $\overline{W}_{j}=\sum_{i\in I_{j}}W_i$ and $\overline{W'}_{j}=\sum_{i\in I_{j}^{\prime }}W_i$. Note the bounds $\left\Vert \overline{W}_{k}\right\Vert \leq \tau \bcon$ and $\left\Vert \overline{W}_{k}^{\prime }\right\Vert \leq \tau \bcon$ since we know there exists $\bcon\,>\,0$  such that $\norm{\fH(X_t)}^2{\leq }\bcon$ a.s. for all $t$. In addition, we know that the $\overline{W}_{k}$ are independent under $\Pr_{I}$, as are the $\overline{W}_{k}^{\prime }$ under $%
\Pr_{I^{\prime }}$. 

Define 
\begin{eqnarray*}
F\left( \mathbf{W}\right)  &=& \frac{7m\bcon\tau\ln{\frac{2}{\delta'}}}{3(m-1)} + \sqrt{2mV_m(\overline{W})\ln{\frac{2}{\delta'}}}, V_m(\overline{W}) =  \frac{1}{m(m-1)} \sum_{1\leq i< j\leq m} (\overline{W}_{i} - \overline{W}_{j})^2\\
F^{\prime }\left( \mathbf{W}\right)  &=&\frac{7m\bcon\tau\ln{\frac{2}{\delta'}}}{3(m-1)} + \sqrt{2mV_m(\overline{W'})\ln{\frac{2}{\delta'}}}, V_m(\overline{W'}) =  \frac{1}{m(m-1)} \sum_{1\leq i< j\leq m} (\overline{W'}_{i} - \overline{W'}_{j})^2 .
\end{eqnarray*}%
Lemma \ref{lem:blocking} gives%

\begin{align*}
& \Pr \left\{ \left\Vert \sum_{i=1}^{n}\left( W_i-\mathbb{E}%
\left[ W_i\right] \right) \right\Vert >F\left( \mathbf{W}\right)
+F^{\prime }\left( \mathbf{W}\right) \right\}  \\
& \leq \Pr_{I}\left\{ \left\Vert \sum_{k =
1}^{m}(\overline{W}_{k}-\mathbb{E}\left[ \overline{W}_{k}\right])\right\Vert >F\left( \mathbf{W}\right) \right\} +\Pr_{I^{\prime
}}\left\{ \left\Vert \sum_{k = 1}^{m}(\overline{W}_{k}^{\prime
}-\mathbb{E}\left[ \overline{W}_{k}^{\prime
}\right])\right\Vert
>F^{\prime }\left( \mathbf{W}\right) \right\} +2\left( m-1\right) \beta _{\mathbf{W}}\left( \tau \right) , \\
&\leq \delta /2{+}\delta /2{+}2\left( m-1\right) \beta_{\mu}\left(\tau\right),
\end{align*}%
where the second inequality follows from adapting to arbitrarily bounded random variables of Thm 4 \citep{maurer2009empirical}. Substitution of the expressions for $F\left( \mathbf{W}%
\right) $ and $F^{\prime }\left( \mathbf{W}\right) $ and using  $\sqrt{a}{+}%
\sqrt{b}\leq \sqrt{2}\sqrt{a{+}b}$, and $\delta = 2\delta'$, we obtain%
\begin{eqnarray*}
&&\Pr \left\{ \left\Vert \sum_{i=1}^{n}(W_i-\EE[W_i])\right\Vert > \frac{14m\bcon\tau\ln{\frac{4}{\delta}}}{3(m-1)} + \sqrt{4m(V_m(\overline{W})+V_m(\overline{W'}))\ln{\frac{4}{\delta}}} \right\} \\
&<&\delta+2(m-1)\beta_{\mu}(\tau) 
\end{eqnarray*}%

Let define $\overline{V}_{\tau}(\mathbf{W}) = \frac{1}{\tau^2}(V_m(\overline{W}) + V_m(\overline{W'}))$.
Now, if we divide both sides in the probability by $n$ and substitute $\frac{2\tau}{n} = m$ we have
$$
\tr(\Cy - \ECy) \leq \frac{7\bcon\tau}{3(m-1)}\ln{\frac{4}{\delta - 2(m-1)\beta_{\mu}(\tau)}} + \sqrt{\frac{\overline{V}_{\tau}(\mathbf{W})\ln{\frac{4}{\delta - 2(m-1)\beta_{\mu}(\tau)}}}{m}}.
$$

To bound the second and the third terms, we can use Theorem 2 of this paper with the description in the covariance estimation section (for further details, see propositions in the next section). The result is then followed by a union bound.
\end{proof}

\begin{remark}
Theorem \ref{thm:ivanov_risk_bound} can be used in order to derive an excess risk bound for the well-specified case, following the same reasoning as \citep{Kostic2022, luise2019leveraging}. 
\end{remark}
\subsection{Adaptation of risk bound to Tikhonov regularization}

It is known that there is an equivalence between Ivanov and Tikhonov regularization, in the sense that for each class of estimators satisfying the conditions of Ivanov regularization, there exists a corresponding Tikhonov regularization problem. However, to extend this result, we require additional techniques. The next step involves the following lemma (a restatement of Lemma 15.6 from \cite{anthony1999learning}):

\begin{lemma}\label{lem:adaptation}
Suppose $\operatorname{Pr}$ is a probability distribution and

$$
\left\{E\left(\alpha_{1}, \alpha_{2}, \delta\right): 0<\alpha_{1}, \alpha_{2}, \delta \leq 1\right\}
$$

is a set of events, such that\\
(i) For all $0<\alpha \leq 1$ and $0<\delta \leq 1$,

$$
\operatorname{Pr}\{E(\alpha, \alpha, \delta)\} \leq \delta
$$

(ii) For all $0<\alpha_{1} \leq \alpha \leq \alpha_{2} \leq 1$ and $0<\delta_{1} \leq \delta \leq 1$

$$
E\left(\alpha_{1}, \alpha_{2}, \delta_{1}\right) \subseteq E(\alpha, \alpha, \delta)
$$

Then for $0<a, \delta<1$,

$$
\operatorname{Pr} \bigcup_{\alpha \in(0,1]} E(\alpha a, \alpha, \delta \alpha(1-a)) \leq \delta
$$
\end{lemma}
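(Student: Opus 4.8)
The plan is to use a geometric peeling (stratification) argument over the scale parameter $\alpha$, combined with a union bound. The only ingredients needed are the diagonal control (i) and the monotonicity (ii); the entire difficulty is to discretize the continuum $(0,1]$ of values of $\alpha$ into countably many strata on which (ii) lets us replace a continuum of events by a single diagonal event, and then to split the confidence budget $\delta$ across the strata so that it sums back to exactly $\delta$.

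Concretely, I would fix the ratio $a$ and set the grid points $\alpha_i = a^i$ for $i = 0, 1, 2, \ldots$, so that $1 = \alpha_0 > \alpha_1 > \cdots \to 0$ partition $(0,1]$ into the half-open strata $(a^{i+1}, a^i]$. Every $\alpha \in (0,1]$ lies in exactly one stratum, say $\alpha \in (a^{i+1}, a^i]$, and for such $\alpha$ I claim
\[
E\bigl(\alpha a, \alpha, \delta\alpha(1-a)\bigr) \subseteq E\bigl(a^{i+1}, a^{i+1}, \delta a^i(1-a)\bigr).
\]
To see this, apply (ii) with $\alpha' = a^{i+1}$: from $a^{i+1} < \alpha \leq a^i$ one gets $\alpha a \leq a^{i+1} \leq \alpha$, so $a^{i+1}$ lies between the first two arguments $\alpha a$ and $\alpha$; and from $\alpha \leq a^i$ one gets $\delta\alpha(1-a) \leq \delta a^i(1-a)$, so the confidence argument only increases. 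Both hypotheses of (ii) are thus met, giving the inclusion.

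Taking the union over all $\alpha \in (0,1]$ and absorbing each event into the diagonal event of its stratum,
\[
\bigcup_{\alpha \in (0,1]} E\bigl(\alpha a, \alpha, \delta\alpha(1-a)\bigr) \subseteq \bigcup_{i=0}^{\infty} E\bigl(a^{i+1}, a^{i+1}, \delta a^i(1-a)\bigr),
\]
so a union bound followed by (i) applied to each diagonal event yields
\[
\Pr\Bigl\{\bigcup_{\alpha \in (0,1]} E\bigl(\alpha a, \alpha, \delta\alpha(1-a)\bigr)\Bigr\} \leq \sum_{i=0}^{\infty} \delta a^i(1-a) = \delta(1-a)\sum_{i=0}^{\infty} a^i = \delta,
\]
since the geometric series sums to $1/(1-a)$. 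This is exactly the assertion.

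The step I expect to demand the most care is the bookkeeping in the inclusion: one must check that, for $\alpha$ anywhere inside its stratum, the single grid point $a^{i+1}$ simultaneously satisfies the betweenness constraint $\alpha a \leq a^{i+1} \leq \alpha$ on the first two arguments and the monotonicity constraint $\delta\alpha(1-a) \leq \delta a^i(1-a)$ on the third, so that (ii) genuinely applies. The factor $1-a$ and the power $a^i$ appearing in the confidence $\delta\alpha(1-a)$ are precisely what make the per-stratum budgets a geometric series summing to $\delta$; verifying that these constants line up is the crux, but it is elementary once the grid $\alpha_i = a^i$ is fixed.
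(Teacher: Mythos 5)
Your proof is correct. Note that the paper itself does not prove this lemma at all: it is stated as a restatement of Lemma 15.6 of Anthony and Bartlett (1999) and used as a black box, and your geometric stratification over the strata $(a^{i+1},a^i]$, absorption into the diagonal events via (ii), and the geometric-series union bound via (i) is precisely the standard argument behind that cited result, with all the hypothesis checks ($\alpha a \leq a^{i+1} \leq \alpha$ and $\delta\alpha(1-a) \leq \delta a^i(1-a) \leq 1$) carried out correctly.
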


\TikhonovRiskBound*
\begin{proof}
Based on Theorem 19 in \citep{luise2019leveraging}, we know that $\EEstim_{r,\lambda}$, the minimizer of the rank-reduced Tikhonov regularized problem,
$$
\min_{\Estim \in \mathrm{HS}r(\mathcal{H})} \ERisk^\lambda(\Estim),
$$
with $\lambda$ as a regularization parameter, is also the minimizer of the corresponding Ivanov problem, where $\gamma(\lambda) = \|\EEstim{r,\lambda}\|_{\mathrm{HS}} = \| \ECx_{\lambda}^{-1/2}\SVDr{\ECx_{\lambda}^{-1/2}\ECxy}\|_{\mathrm{HS}}$. However, we cannot simply substitute $\|\EEstim{r,\lambda}\|_{\mathrm{HS}}$ in the formula, as it is a random variable and depends on the data, but using the method outlined in Lemma \ref{lem:adaptation} we can obtain the proof with the choices $\gamma(\lambda) = 2\|\EEstim_{r,\lambda}\|_{\mathrm{HS}}$ and the substitution of $\frac{\delta}{2\|\EEstim{r,\lambda}\|_{\mathrm{HS}}}$ in place of $\delta$.
Indeed, let $Q\left( \gamma ,\delta \right) $ be the right hand side of the
inequality in Theorem \ref{thm:ivanov_risk_bound}, and define the events 
\[
E\left( \alpha _{1},\alpha _{2},\delta \right) :=\left\{ \exists
G:\left\Vert G\right\Vert _{HS}\leq \alpha _{2}^{-1},\left\vert \mathcal{R}%
\left( \hat{G}\right) -\mathcal{\hat{R}}\left( \hat{G}\right) \right\vert
>Q\left( \alpha _{1}^{-1},\delta \right) \right\} .
\]%
Then by Theorem \ref{thm:ivanov_risk_bound} these events satisfy (i) of Lemma \ref{lem:adaptation} and they also
have the monotonicity property (ii).  Using
the lemma with $a=1/2$ then gives the inequality for every $G$ satisfying $\|G\|_{HS}\ge  1$.
 \end{proof}

\begin{remark}
If the process is known to be stationary, we can use the unbiased estimators for both the covariance, $\UECorrtau{Y}$, and the cross-covariance, $\UECorrtau{Z}$, respectively
\end{remark}
\begin{remark}
We can easily adapt the above results for non-stationary processes by simply modifying the definition of risk.
\end{remark}
\section{Experiments} \label{app:exp}
To facilitate the reproducibility of our main experimental results, we have made the code publicly available in an anonymous GitHub repository, which can be accessed via the link \url{https://github.com/erfunmirzaei/EBI4LDS}.
\subsection{Covariance estimation using samples from Ornstein–Uhlenbeck process}\label{app:cov_est}
\subsubsection{Theoretical results}
As mentioned in the section on covariance estimation for estimating the concentration of covariance operators in the Hilbert-Schmidt norm,  the observed vectors are operators, and $X_{t}$
should be replaced by the operator $\fH(X_{t})\otimes \fH(X_{t})$.

Recent studies have relied on Pinelis and Sakhanenko’s inequality, see \citep[][Proposition 2]{caponnetto2007}. As mentioned earlier, the method of blocks and $\beta$-mixing extends this i.i.d. analysis of transfer operator regression to more realistic settings, such as learning from data trajectories of a stationary process. We begin by restating Pinelis and Sakhanenko’s inequality for convenience.

\begin{proposition}\label{prop:con_ineq_ps}
Let $A_i$, $i\in[n]$ be independent random variables with values in a
separable Hilbert space with norm $\norm{\cdot}$. If there exist constants $L>0$ and $\sigma>0$ such that $\forall i \in [n]$, $\|A_i\| \leq \frac{L}{2} $ and $\EE[\|A_i\|^2] \leq \sigma^2$, 
then with probability at least $1-\delta$ 
\begin{equation}\label{eq:con_ineq_ps}
\left\|\frac{1}{n}\sum_{i\in[n]}A_i - \EE[A_i] \right\|\leq 2(\frac{L}{n}{+}\frac{\sigma}{\sqrt{n}})\log(\frac{2}{\delta})
\end{equation}
\end{proposition}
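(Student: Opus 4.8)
The plan is to obtain the stated inequality directly from the cited Pinelis--Sakhanenko bound \citep[Proposition 2]{caponnetto2007}, whose conclusion already has exactly the shape $2(L/n + \sigma/\sqrt{n})\log(2/\delta)$ \emph{provided} the summands obey a Bernstein-type moment condition $\EE\|A_i - \EE A_i\|^m \le \tfrac12 m!\,\sigma^2 L^{m-2}$ for all $m\ge 2$. So the only genuine work is to show that the two boundedness hypotheses of the proposition, namely $\|A_i\|\le L/2$ a.s.\ and $\EE\|A_i\|^2 \le \sigma^2$, imply that moment condition with the \emph{same} constants $L$ and $\sigma$; the rest is a verbatim invocation of the cited result.

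First I would center the variables, writing $\widetilde A_i := A_i - \EE[A_i]$. The triangle inequality together with $\|\EE A_i\| \le \EE\|A_i\| \le L/2$ gives the almost-sure bound $\|\widetilde A_i\| \le \|A_i\| + \|\EE A_i\| \le L/2 + L/2 = L$, and the Hilbert-space identity $\EE\|\widetilde A_i\|^2 = \EE\|A_i\|^2 - \|\EE A_i\|^2$ yields $\EE\|\widetilde A_i\|^2 \le \EE\|A_i\|^2 \le \sigma^2$. These are the two ingredients feeding the moment estimate.

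Next I would verify the moment condition itself. Using the almost-sure bound to extract $m-2$ factors of $L$ and then the second-moment bound, $\EE\|\widetilde A_i\|^m \le L^{m-2}\,\EE\|\widetilde A_i\|^2 \le L^{m-2}\sigma^2$. Since $\tfrac12 m! \ge 1$ for every $m \ge 2$, this is in turn at most $\tfrac12 m!\,\sigma^2 L^{m-2}$, which is precisely the required Bernstein condition. Note that this argument uses only the per-index bounds and not identical distribution, so the conclusion holds for independent but not necessarily identically distributed $A_i$, matching the generality of Pinelis' original estimate underlying the cited proposition.

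With the moment condition established, applying \citep[Proposition 2]{caponnetto2007} to the independent centered summands $\widetilde A_i$ returns \eqref{eq:con_ineq_ps} exactly. I do not expect a substantive obstacle here: all the analytic difficulty (Pinelis' martingale-type exponential inequality for $2$-smooth, in particular Hilbert, spaces) is packaged inside the cited result, and the only remaining care is the constant bookkeeping in the moment bound, which the factor $\tfrac12 m! \ge 1$ absorbs without loss.
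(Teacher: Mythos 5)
Your proposal is correct and is essentially the paper's own route: the paper gives no proof of this proposition at all, presenting it as a restatement "for convenience" of the Pinelis--Sakhanenko bound in Caponnetto and De Vito (2007, Proposition 2), and your argument supplies exactly the routine verification that $\|A_i\|\le L/2$ and $\EE\|A_i\|^2\le\sigma^2$ imply the Bernstein moment condition $\EE\|A_i-\EE[A_i]\|^m\le\tfrac{1}{2}m!\,\sigma^2L^{m-2}$ with the same constants, after which the cited result yields the stated inequality verbatim. Your side remark about independent but non-identically distributed summands is also apt, since the cited restatement is phrased for i.i.d.\ draws while the paper later applies this proposition to block sums that need not be identically distributed, so invoking the underlying Pinelis-type inequality in its general (martingale) form is the right justification for that extra generality.
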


 We now aim to adapt Pinelis and Sakhanenko’s inequality for covariance estimation in data-dependent scenarios.
\begin{proposition}\label{prop:Pinelis_cov}
Let $\mathbf{X}=(X_{t})_{t=1}^{n}$ be a Markov chain with distribution $\mu$ and $\mathbf{Y} = (\fH(X_{t})\otimes\fH(X_{t}))_{t=1}^{n}$. Assume $n =2m\tau$, and exists $\bcon\,>\,0$  such that $\norm{\fH(X_t)}^2{\leq }\bcon$ a.s. for all $t$. Let $\delta \geq 0$ and assume $\delt = \delta{-}2(\frac{n}{2\tau}{-}1)\beta_{\mu}(\tau) > 0$, with probability at least $1 {-} \delta$ we have
\[ 
\norm{ \ECx {-} \Cx }\leq \frac{4\bcon}{m}\ln{\frac{4}{\delt}}{+}\frac{2\bcon}{\sqrt{m}} \ln{\frac{4}{\delt}} ,
\] 

where $\ECx \,= \,\textstyle{\tfrac{1}{n}\sum_{t \in[n]}}\, \phi(x_{t}){\otimes} \phi(x_{t})$ and in general $\Cx \,= \,\textstyle{\tfrac{1}{n}\sum_{t \in[n]}}\, \EE[\phi(X_{t}){\otimes} \phi(X_{t})].$
\end{proposition}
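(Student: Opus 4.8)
The plan is to establish this bound as the non-empirical companion of Theorems~\ref{thm:empirical-Bernstein-biased} and~\ref{thm:empirical-Bernstein-unbiased}: take the Pinelis--Sakhanenko inequality of Proposition~\ref{prop:con_ineq_ps}, which is stated for independent summands, and transport it to the dependent trajectory via the blocking Lemma~\ref{lem:blocking}. The first step is to lift the problem into the Hilbert space $\HS{\RKHS}$ of Hilbert--Schmidt operators: the observed vectors are the rank-one operators $Y_t = \fH(X_t)\otimes\fH(X_t)$, for which $\hnorm{Y_t} = \norm{\fH(X_t)}^2 \leq \bcon$ almost surely. Under this identification $\ECx - \Cx = \tfrac{1}{n}\sum_{t=1}^{n}(Y_t - \EE[Y_t])$ is exactly a normalized, term-wise centered sum of bounded, dependent vectors in a separable Hilbert space, so the machinery of Section~\ref{sec:theory} applies verbatim; note that the centering is done term by term, so no stationarity is needed and $\Cx$ is legitimately the ergodic average.

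Next I would form the block sums $\overline{Y}_k = \sum_{i\in I_k} Y_i$ and $\overline{Y}_k^{\prime} = \sum_{i\in I_k^{\prime}} Y_i$ for $k\in[m]$. These are independent under $\Pr_{I}$ (resp. $\Pr_{I^{\prime}}$) and satisfy $\hnorm{\overline{Y}_k}\leq \tau\bcon$, whence $\EE\hnorm{\overline{Y}_k}^2 \leq \tau^2\bcon^2$. Applying Proposition~\ref{prop:con_ineq_ps} to the $m$ independent operators $\overline{Y}_1,\dots,\overline{Y}_m$ with $L = 2\tau\bcon$, $\sigma = \tau\bcon$ and confidence $\delta' = \delt/2$, then multiplying through by $m$, yields the constant threshold
\[
F = 2\bigl(2\tau\bcon + \tau\bcon\sqrt{m}\bigr)\ln\tfrac{4}{\delt},
\]
for which $\Pr_{I}\{\hnorm{\sum_{k}(\overline{Y}_k - \EE[\overline{Y}_k])} > F\} \leq \delt/2$, and symmetrically for the primed blocks with threshold $F'$.

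Finally, feeding these constant $F,F'$ into Lemma~\ref{lem:blocking}, the two independent tails contribute $\delt/2 + \delt/2$ while the interlacing costs the additive penalty $2(m-1)\beta_{\mu}(\tau) = 2(\tfrac{n}{2\tau}-1)\beta_{\mu}(\tau)$; the definition $\delt = \delta - 2(\tfrac{n}{2\tau}-1)\beta_{\mu}(\tau)$ is engineered precisely so that the total failure probability equals $\delta$. Dividing the resulting bound on $\hnorm{\sum_{t}(Y_t - \EE[Y_t])}$ by $n = 2m\tau$, each of $F/n$ and $F'/n$ equals $(\tfrac{2\bcon}{m} + \tfrac{\bcon}{\sqrt{m}})\ln\tfrac{4}{\delt}$, so their sum reproduces the claimed $(\tfrac{4\bcon}{m} + \tfrac{2\bcon}{\sqrt{m}})\ln\tfrac{4}{\delt}$. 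I do not anticipate a genuine conceptual obstacle, since this is just the block-method transport of an off-the-shelf inequality; the only care required is bookkeeping — keeping the $1/n$-versus-$1/m$ normalizations and the block-sum-versus-block-average scalings straight, splitting the confidence budget $\delt$ evenly across the two interlaced sequences, and absorbing the mixing penalty. It is worth stressing that the slow $\sqrt{\tau/n}$-type behaviour stems entirely from the crude variance surrogate $\EE\hnorm{\overline{Y}_k}^2 \leq \tau^2\bcon^2$; this is exactly the pessimism that the empirical correlation proxies $\BECorrtau{Y}$ and $\UECorrtau{Y}$ of Theorems~\ref{thm:empirical-Bernstein-biased} and~\ref{thm:empirical-Bernstein-unbiased} are designed to replace with data-dependent quantities.
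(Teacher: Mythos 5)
Your proposal is correct and follows essentially the same route as the paper's own proof: lift to rank-one Hilbert--Schmidt operators $Y_t=\fH(X_t)\otimes\fH(X_t)$, form the interlaced block sums with $L=2\tau\bcon$ and $\sigma=\tau\bcon$, apply Proposition~\ref{prop:con_ineq_ps} to each independent block sequence at confidence $\delt/2$, and transport through Lemma~\ref{lem:blocking}, dividing by $n=2m\tau$ at the end. The only cosmetic difference is that the paper verifies $\EE\hnorm{\overline{Y}_k}^2\leq\tau^2\bcon^2$ by expanding the Hilbert--Schmidt inner products into kernel evaluations, whereas you obtain the same variance surrogate directly from the triangle-inequality bound $\hnorm{\overline{Y}_k}\leq\tau\bcon$; the constants and confidence bookkeeping are identical.
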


\begin{proof}
As we mentioned before, for covariance estimation, our random variables are the rank-one operators $Y_t = \phi(X_{t})\otimes \phi(X_{t})$ instead of $X_{t}$.
To prove it, we need to use the same procedure as section \ref{app:con_ineq_dependent} with these new random variables and use Pinelis and Sakhanenko's inequality as stated above. 

Now, Write $\overline{Y}_{j}=\sum_{i\in I_{j}}Y_{i}$ and $\overline{Y'}_{j}=\sum_{i\in I_{j}^{\prime }}Y_{i}$. Note the bounds $\left\Vert \overline{Y}_{k}\right\Vert \leq \tau \bcon$ and $\left\Vert \overline{Y'}_{k}\right\Vert \leq \tau \bcon$ since there exists $\bcon\,>\,0$ such that $\norm{\fH(X_t)}^2{\leq }\bcon$ a.s. for all $t$ . In addition, we know that the $\overline{Y}_{k}$ are independent under $\Pr_{I}$, as are the $\overline{Y'}_{k}$ under $%
\Pr_{I^{\prime }}$. Then, we also have the following:
\begin{equation*}
\begin{aligned}
    \|\overline{Y}_k\|^{2}_{HS} = \scalarp{\sum_{i=1}^{\tau} \fH(X_i)\otimes\fH(X_i),  \sum_{j=1}^{\tau}\fH(X_j)\otimes\fH(X_j)}_{HS} \\
    =  \sum_{i=1}^{\tau} \sum_{j=1}^{\tau} \scalarp{\fH(X_i)\otimes\fH(X_i), \fH(X_j)\otimes\fH(X_j)}_{HS} \\
    =   \sum_{i=1}^{\tau} \sum_{j=1}^{\tau} \tr((\fH(X_i)\otimes\fH(X_i))(\fH(X_j)\otimes\fH(X_j))) \\
    =  \sum_{i=1}^{\tau} \sum_{j=1}^{\tau} \scalarp{\fH(X_i), \fH(X_j)}^2  \leq \tau^2 \bcon^{2}
\end{aligned}
\end{equation*}
Therefore, we can set $ L = L' = 2 \tau \bcon$ and $\sigma = \sigma' = \tau \bcon$ for using for $\overline{Y}_{k}$ and $\overline{Y'}_{k}$. 

Define 
\begin{eqnarray*}
F\left( \mathbf{Y}\right) &=& F^{\prime }\left( \mathbf{Y}\right) =(4 \tau \bcon{+}2\tau \bcon\sqrt{m})\log(\frac{2}{\delta'})
\end{eqnarray*}%
Lemma \ref{lem:blocking} gives%

\begin{align*}
& \Pr \left\{ \left\Vert \sum_{i=1}^{n}\left( Y_{i}-\mathbb{E}%
\left[ Y_{i}\right] \right) \right\Vert >F\left( \mathbf{Y}\right)
+F^{\prime }\left( \mathbf{Y}\right) \right\}  \\
& \leq \Pr_{I}\left\{ \left\Vert \sum_{k =
1}^{m}(\overline{Y}_{k}-\mathbb{E}\left[ \overline{Y}_{k}\right])\right\Vert >F\left( \mathbf{Y}\right) \right\} +\Pr_{I^{\prime
}}\left\{ \left\Vert \sum_{k = 1}^{m}(\overline{Y}_{k}^{\prime
}-\mathbb{E}\left[ \overline{Y}_{k}^{\prime
}\right])\right\Vert
>F^{\prime }\left( \mathbf{Y}\right) \right\} +2\left( m-1\right) \beta _{\mu}\left( \tau \right) , \\
&\leq \delta /2{+}\delta /2{+}2\left( m-1\right) \beta_{\mu}\left(\tau\right),
\end{align*}%
where the second inequality follows from Prop. \ref{prop:con_ineq_ps}. Substitution of the expressions for $F\left( \mathbf{Y}%
\right) $ and $F^{\prime }\left( \mathbf{Y}\right) $ and using $\delta = 2\delta'$, we obtain%
\begin{eqnarray*}
&&\Pr \left\{ \left\Vert \sum_{i=1}^{n}(Y_{i}-\EE[Y_{i}]\right\Vert > (8\tau \bcon{+}4\tau \bcon\sqrt{m})\log(\frac{4}{\delta}) \right\} <\delta+2(m-1)\beta_{\mu}(\tau) 
\end{eqnarray*}%

Now, if we divide both sides in the probability by $n$ and substitute $\frac{2\tau}{n} = m$ the proof is finished.
\end{proof}

For comparison, we propose the following proposition, where Theorem \ref{thm:vector-concentration} is used in place of Pinelis and Sakhanenko’s inequality.
\begin{proposition}\label{prop:BI_cov}
Let $\mathbf{X}=(X_{t})_{t=1}^{n}$ be a Markov chain with distribution $\mu$ and $\mathbf{Y} = (\fH(X_{t})\otimes\fH(X_{t}))_{t=1}^{n}$. Assume $n =2m\tau$, and exists $\bcon\,>\,0$  such that $\norm{\fH(X_t)}^2{\leq }\bcon$ a.s. for all $t$. Let $\delta \geq 0$ and assume $\delt = \delta{-}2(\frac{n}{2\tau}{-}1)\beta_{\mu}(\tau) > 0$, with probability at least $1 {-} \delta$ we have
\[  \norm{ \ECx {-} \Cx }\leq \frac{4\bcon}{3m}\ln \frac{2}{\delt} {+}\sqrt{\frac{2\bcon^2}{m}\left(1 {+} 2\ln \frac{2}{\delt} \right)},
\]
where $\ECx \,= \,\textstyle{\tfrac{1}{n}\sum_{t \in[n]}}\, \phi(x_{t}){\otimes} \phi(x_{t})$ and in general $\Cx \,= \,\textstyle{\tfrac{1}{n}\sum_{t \in[n]}}\, \EE[\phi(X_{t}){\otimes} \phi(X_{t})].$
\end{proposition}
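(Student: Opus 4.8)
The plan is to obtain Proposition \ref{prop:BI_cov} by exactly the same route as Proposition \ref{prop:Pinelis_cov}, but feeding the vector-valued Bernstein bound of Theorem \ref{thm:vector-concentration} into the block machinery in place of Pinelis and Sakhanenko's inequality (Proposition \ref{prop:con_ineq_ps}). Equivalently, since Theorem \ref{thm:Data-dependent-Bernstein} already packages the block decomposition together with the independent Bernstein bound, I would simply instantiate that theorem with the rank-one operators $Y_t = \fH(X_t)\otimes\fH(X_t)$, regarded as elements of the Hilbert--Schmidt space $\HS{\RKHS}$, whose centered average is precisely $\ECx - \Cx$.

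First I would check the two quantitative inputs that Theorem \ref{thm:Data-dependent-Bernstein} requires. The uniform norm bound is immediate, $\hnorm{Y_t} = \norm{\fH(X_t)}^2 \leq \bcon$, so $\bcon$ plays the role of $c$. For the variance surrogate $\Corrtau{Y}$ I would use that the Hilbert--Schmidt inner product of two rank-one operators factorizes, $\scalarp{Y_t, Y_s}_{\rm HS} = \scalarp{\fH(X_t), \fH(X_s)}^2 = k(X_t,X_s)^2$, which is nonnegative and bounded by $\bcon^2$; since moreover $\EE[Y_t]$ is a positive operator, each cross term satisfies $\scalarp{\EE[Y_t], \EE[Y_s]}_{\rm HS} \geq 0$. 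Hence every summand defining $\Corrtau{Y}$ is at most $\bcon^2$, giving the worst-case bound $\Corrtau{Y} \leq \bcon^2$. Substituting into Theorem \ref{thm:Data-dependent-Bernstein} and using $n = 2m\tau$, so that $\tfrac{2\tau}{n} = \tfrac1m$ and $\tfrac{8\tau}{3n} = \tfrac{4}{3m}$, the slow term becomes at most $\sqrt{\tfrac{\bcon^2}{m}(1{+}2\ln\tfrac{2}{\delt})}$ and the fast term becomes $\tfrac{4\bcon}{3m}\ln\tfrac2{\delt}$, both dominated by the stated right-hand side (the factor $2\bcon^2$ under the root only loosens the bound).

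Alternatively, and more in parallel with the written proof of Proposition \ref{prop:Pinelis_cov}, I would form the block sums $\overline{Y}_k = \sum_{i\in I_k}Y_i$ and $\overline{Y'}_k$, note $\hnorm{\overline{Y}_k}\leq\tau\bcon$ and that they are independent under $\Pr_I$ (resp.\ $\Pr_{I'}$), apply Theorem \ref{thm:vector-concentration} to the centered sums on each sequence at confidence $\delta/2$, and glue the two tail events via Lemma \ref{lem:blocking} at the cost of the mixing penalty $2(\tfrac{n}{2\tau}{-}1)\beta_\mu(\tau)$, which is exactly what the hypothesis $\delt>0$ accounts for. The only genuinely delicate part of either route is constant bookkeeping: combining the two block sequences through $\sqrt a + \sqrt b \leq \sqrt2\sqrt{a+b}$, passing from the prefactor $(1{+}\sqrt{2\ln(2/\delt)})$ to its squared form under the root via the elementary inequality $(1{+}\sqrt{2L})\leq\sqrt{2(1{+}2L)}$, and dividing through by $n$. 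I expect the main thing to watch, rather than any conceptual obstacle, to be whether one centers $Y_t$ directly (incurring a factor $2$ in its almost-sure norm) or invokes Theorem \ref{thm:Data-dependent-Bernstein} for uncentered variables, since the two conventions redistribute a factor of $2$ between the slow and the fast terms.
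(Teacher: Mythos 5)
Your proposal is correct and follows essentially the same route as the paper: the paper's own proof is precisely your second variant (block sums $\overline{Y}_k$, Theorem \ref{thm:vector-concentration} at confidence $\delta/2$ on each interlaced sequence, Lemma \ref{lem:blocking} with the mixing penalty, the bound $\sum_k \mathbb{E}\|\overline{Y}_k-\mathbb{E}\overline{Y}_k\|^2 \leq m\tau^2\bcon^2$, and the stated elementary inequalities for the constants). Your first variant—invoking Theorem \ref{thm:Data-dependent-Bernstein} directly in $\HS{\RKHS}$ with $\Corrtau{Y}\leq\bcon^2$—is just the same argument pre-packaged, and in fact yields a slightly sharper slow term, which is dominated by the proposition's right-hand side as you note.
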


\begin{proof}
As we mentioned before, for covariance estimation, our random variables are the rank-one operators $Y_t = \phi(X_{t})\otimes \phi(X_{t})$ instead of $X_{t}$.
To prove we need to use the same procedure as section \ref{app:con_ineq_dependent} with these new random variables and use Theorem \ref{thm:vector-concentration}. 

Now, Write $\overline{Y}_{j}=\sum_{i\in I_{j}}Y_{i}$ and $\overline{Y'}_{j}=\sum_{i\in I_{j}^{\prime }}Y_{i}$. Note the bounds $\left\Vert \overline{Y}_{k}\right\Vert \leq \tau \bcon$ and $\left\Vert \overline{Y'}_{k}\right\Vert \leq \tau \bcon$ since there exists $\bcon\,>\,0$  such that $\norm{\fH(X_t)}^2{\leq }\bcon$ a.s. for all $t$. In addition, we know that the $\overline{Y}_{k}$ are independent under $\Pr_{I}$, as are the $\overline{Y'}_{k}$ under $%
\Pr_{I^{\prime }}$. 

Define 
\begin{eqnarray*}
F\left( \mathbf{Y}\right)  &=&\sqrt{\sum_{k=1}^{m}\mathbb{E}\left\Vert \overline{Y}_{k}-\mathbb{E}\left[
\overline{Y}_{k}\right] \right\Vert ^{2}}\left( 1{+}\sqrt{2\ln \left( 1/\delta \right) }%
\right) {+}\frac{4\bcon\tau}{3}\ln \left( 1/\delta \right)   \\
F^{\prime }\left( \mathbf{Y}\right)  &=&\sqrt{\sum_{k=1}^{m}\mathbb{E}\left\Vert \overline{Y'}_{k}-\mathbb{E}\left[
\overline{Y'}_{k}\right] \right\Vert ^{2}}\left( 1{+}\sqrt{2\ln \left( 1/\delta \right) }%
\right) {+}\frac{4\bcon\tau}{3}\ln \left( 1/\delta \right) .
\end{eqnarray*}%
Lemma \ref{lem:blocking} gives%

\begin{align*}
& \Pr \left\{ \left\Vert \sum_{i=1}^{n}\left( Y_{i}-\mathbb{E}%
\left[ Y_{i}\right] \right) \right\Vert >F\left( \mathbf{Y}\right)
+F^{\prime }\left( \mathbf{Y}\right) \right\}  \\
& \leq \Pr_{I}\left\{ \left\Vert \sum_{k =
1}^{m}(\overline{Y}_{k}-\mathbb{E}\left[ \overline{Y}_{k}\right])\right\Vert >F\left( \mathbf{Y}\right) \right\} +\Pr_{I^{\prime
}}\left\{ \left\Vert \sum_{k = 1}^{m}(\overline{Y}_{k}^{\prime
}-\mathbb{E}\left[ \overline{Y}_{k}^{\prime
}\right])\right\Vert
>F^{\prime }\left( \mathbf{Y}\right) \right\} +2\left( m-1\right) \beta _{\mu}\left( \tau \right) , \\
&\leq \delta /2{+}\delta /2{+}2\left( m-1\right) \beta_{\mu}\left(\tau\right),
\end{align*}%
where the second inequality follows after dropping the mean-zero assumption of Thm. \ref{thm:vector-concentration}. Substitution of the expressions for $F\left( \mathbf{Y}%
\right) $ and $F^{\prime }\left( \mathbf{Y}\right) $ and using  $\sqrt{a}{+}%
\sqrt{b}\leq \sqrt{2}\sqrt{a{+}b}$, and $\delta = 2\delta'$, we obtain%
\begin{eqnarray*}
&&\Pr \left\{ \left\Vert \sum_{i=1}^{n}(Y_{i}-\EE[Y_{i}]\right\Vert > \sqrt{2\sum_{k=1}^{m}(\mathbb{E}\left\Vert \overline{Y}_{k}-\mathbb{E}\left[
\overline{Y}_{k}\right] \right\Vert^{2}{+}\mathbb{E}\left\Vert \overline{Y'}_{k}-\mathbb{E}\left[
\overline{Y'}_{k}\right] \right\Vert^{2})}\left( 1{+}\sqrt{2\ln \left( 2/\delta \right) }%
\right) {+}\frac{8\bcon\tau}{3}\ln \left(2/\delta \right) \right\} \\
&<&\delta+2(m-1)\beta_{\mu}(\tau) 
\end{eqnarray*}%

We know $\sum_{k=1}^{m}(\mathbb{E}\left\Vert \overline{Y}_{k}-\mathbb{E}\left[
\overline{Y}_{k}\right] \right\Vert^{2}{+}\mathbb{E}\left\Vert \overline{Y'}_{k}-\mathbb{E}\left[
\overline{Y'}_{k}\right] \right\Vert^{2}) = 2m\tau^2\bcon^2$.
Now, if we divide both sides in the probability by $n$ and substitute $\frac{2\tau}{n} = m$ and using  $\sqrt{a}{+}%
\sqrt{b}\leq \sqrt{2}\sqrt{a{+}b}$ the proof is finished.    
\end{proof}
\subsubsection{Experimental results}\label{app:cov_est_exp_res}
As stated in the paper, the following experimental results demonstrate the consistent monotonic increase of the covariance upper bound with respect to $\tau$.

\begin{figure}[H]
    \centering
\includegraphics[width=\textwidth]{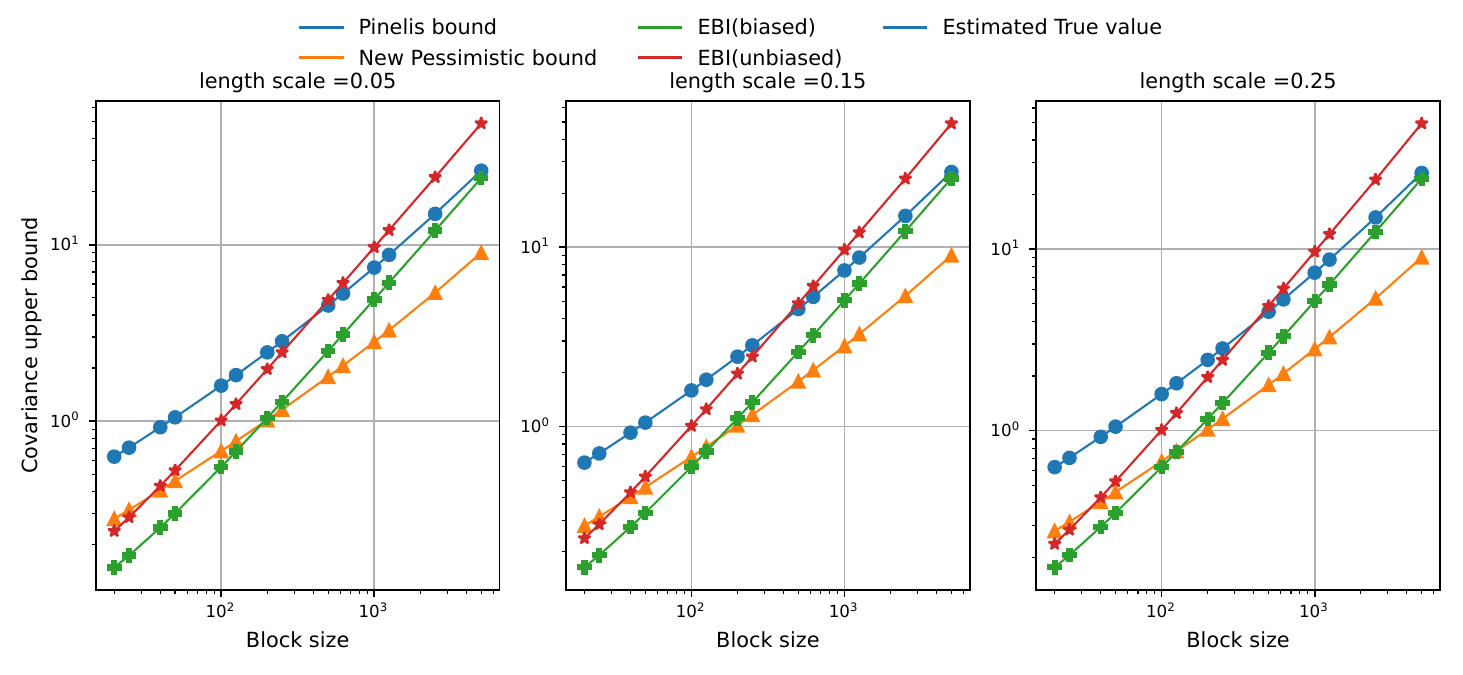}
    \caption{Covariance upper bound as a function of the block size for three different length scales of Gaussian kernel in logarithmic scale. The failure probability is set to 0.05, and we used 10k training points. The plots have been averaged over 30 independent simulations.}
    \label{fig:OU_tau_10k}
\end{figure}

\begin{figure}[H]
    \centering
\includegraphics[width=\textwidth]{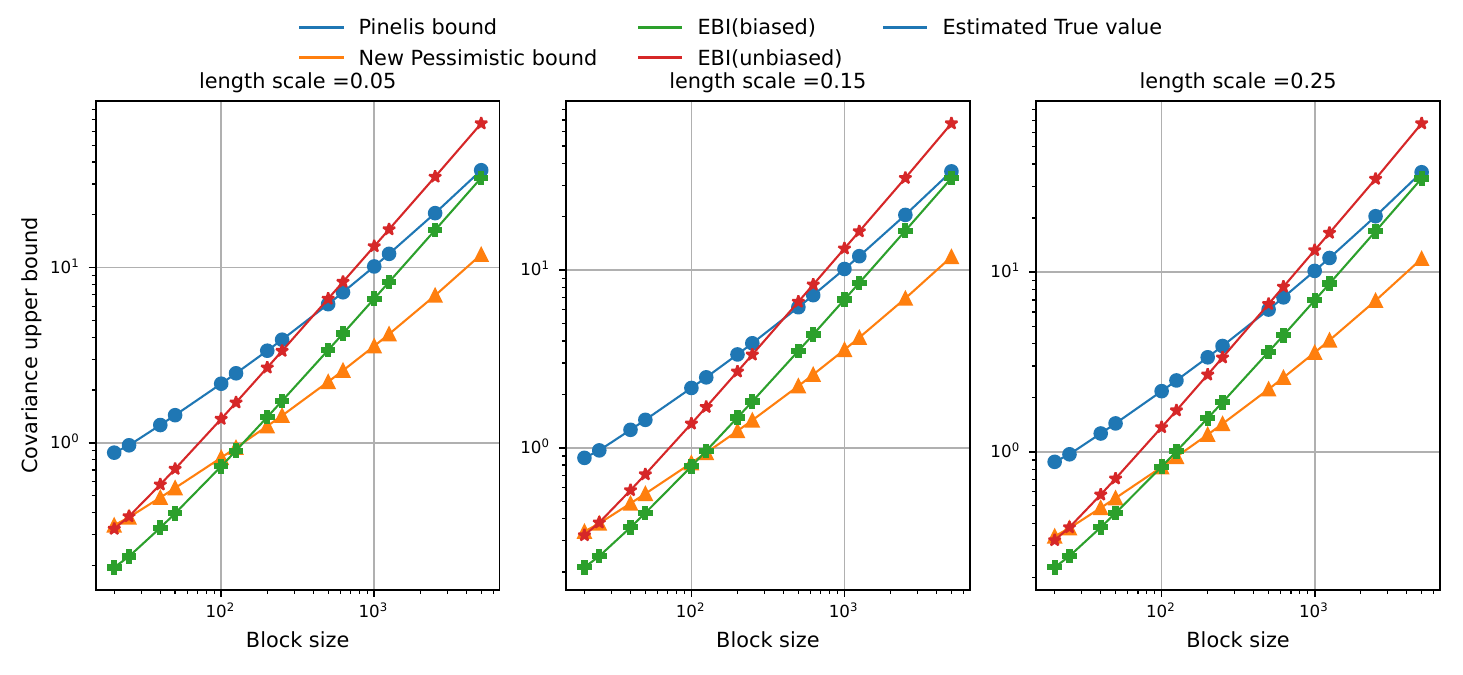}
    \caption{Covariance upper bound as a function of the block size for three different length scales of Gaussian kernel in logarithmic scale. The failure probability is set to 0.01, and we used 10k training points. The plots have been averaged over 30 independent simulations.}
    \label{fig:OU_tau_10k_delta}
\end{figure}

\begin{figure}[H]
    \centering
\includegraphics[width=\textwidth]{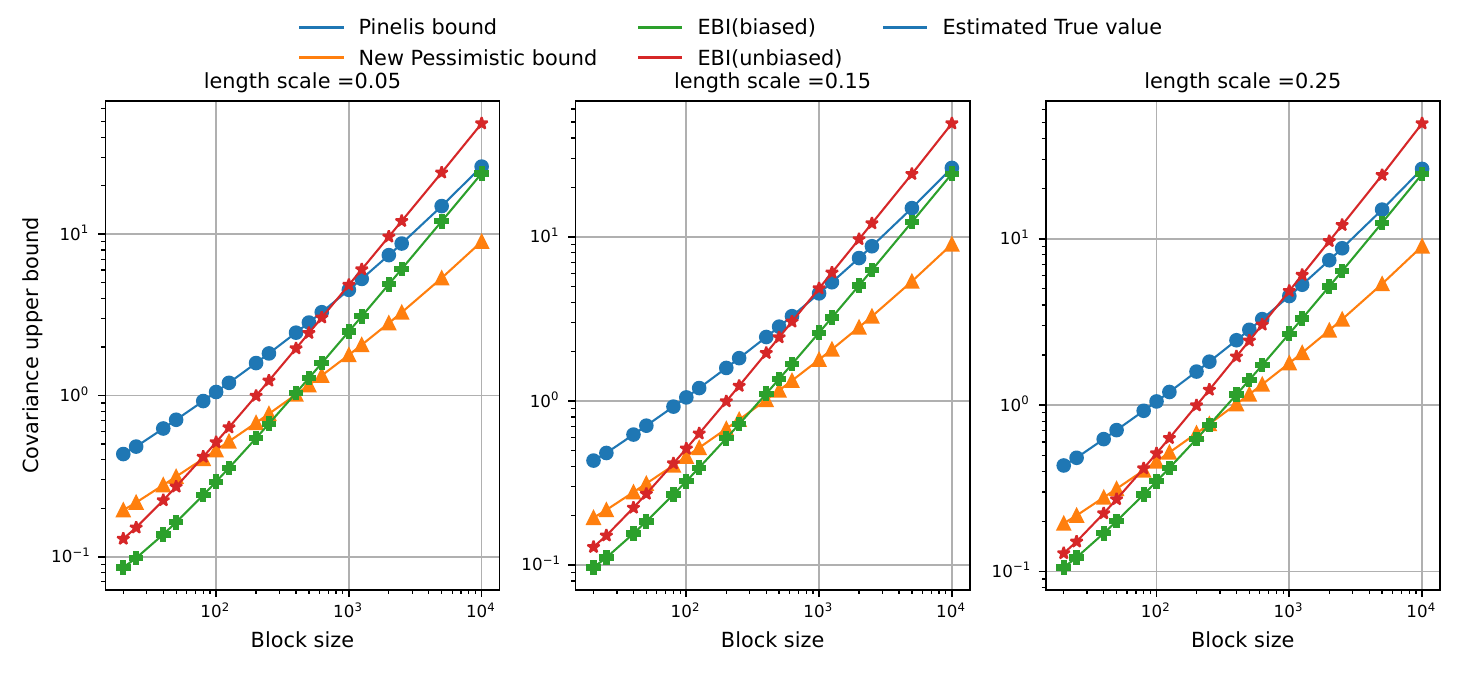}
    \caption{Covariance upper bound as a function of the block size for three different length scales of Gaussian kernel in logarithmic scale. The failure probability is set to 0.05, and we used 20k training points. The plots have been averaged over 30 independent simulations.}
    \label{fig:OU_tau_20k}
\end{figure}

\subsubsection{Estimating the true error of covariance operator}\label{app:cov_est_true_error}
For this experiment, we tried to estimate the norm of the difference between the sample covariance operator and the true one as follows,

\begin{equation*}
    \| \ECx - \Cx \|^{2}_{\HS{\RKHS}} = \tr(\ECx^2) + \tr(\Cx^2) - 2\tr(\ECx\Cx). 
\end{equation*}

Since we have the $\ECx$ operator based on the data, finding its trace is easy. Furthermore, in this example, we know that the invariant distribution is the standard normal Gaussian. Thus, we can find the analytic form for the $\tr(\Cx)$. First, recall $\Cx = \EE_{x \sim \im}[\fH(x) \otimes \fH(x)]$. Then, we have
\begin{equation*}
\begin{aligned}
\tr(\Cx^2) = tr(\EE_{x \sim \im}[\fH(x) \otimes \fH(x)]\EE_{y \sim \im}[\fH(y) \otimes \fH(y)]) \\ = \EE_{x \sim \mathcal{N}(0,1)}\EE_{y \sim \mathcal{N}(0,1)}[\scalarp{\fH(x), \fH(y)}^2] 
 = \EE_{x \sim \mathcal{N}(0,1)}\EE_{y \sim \mathcal{N}(0,1)}[k(x,y)^2]
\\ = \frac{1}{2\pi}\int_{x}\int_{y} (\exp{(\frac{-(x-y)^2}{2l^2})})^{2} \exp{(\frac{-y^2}{2})}\exp{(\frac{-x^2}{2})}dy dx  = \sqrt{\frac{1}{1+\frac{4}{l^2}}},
\end{aligned}
\end{equation*}
where $l$ is the length scale of the Gaussian kernel. Finally, for the last term, we tried to estimate it by $\tr(\ECx, \Tilde{\Cx})$, where $\Tilde\Cx = \frac{1}{n} \sum_{i=1}^{n} \fH(\Tilde{x}_i) \otimes \fH(\Tilde{x}_i)$ and $\Tilde{x}_{i}$s are new samples from OU process. Then we repeat the previous process many times, which means $\tr(\ECx \Cx) \approx \frac{1}{T}\sum_{t=1}^{T}k(\Tilde{X_t}, X)^2$, where $\Tilde{X_t}$ is a new batch of data with the size n at each time. We set $T = 100$ and $n = 10^4$ for this experiment.

\subsubsection{Extreme examples}\label{app:cov_est_ext_exp}
We introduced two empirical Bernstein's inequalities in which two different estimations of the variance proxy have been used. In this section, we will take a more precise look at these estimations for extreme examples in the case of covariance estimation. 
\begin{itemize}
    \item[1)] $K^{.2} = c^2 \Id_{n\times n}$, This means that $K(x,y) = 0$ unless $x=y$. Thus, if we have a very small length scale for the Gaussian Kernel(close to zero) this can happen. In this situation, $\UECorrtau{X} = \BECorrtau{X} = \frac{c^2}{\tau}$.
    \item[2)] $K^{.2} = \Diag(c^2 1_{\tau\times \tau})$, This means that $K(x,y) = 0$ unless $x$ and $y$ are less than $\tau$-time separated and the boundary is very sharp. In other words, if the distance of $x$ and $y$ is less than $\tau$ time steps then $k(x,y) = c^2$ otherwise $0$. In this situation, $\UECorrtau{X} = \BECorrtau{X} = c^2$.
    
    \item[3)] $K^{.2} = c^2 1_{n\times n}$, This means that $K(x,y) = c^2$ no matter what are x and y. Thus, if we have a very large length scale for the Gaussian kernel (close to infinity) this would be the case. In this situation, $\UECorrtau{X} = c^2$ and $\BECorrtau{X} = 0$.
\end{itemize}

Thus, we can conclude that $\frac{b^2}{\tau} \leq \UECorrtau{X} \leq c^2$ and $0 \leq \BECorrtau{X} \leq c^2$ where $b$ and $c$ could be the infimum and supremum values of the kernel, respectively.
\subsection{Noisy ordered MNIST} \label{app:MNIST}

The architecture of the oracle network used for DPNet
is given by $\phi_{\boldsymbol{\theta}}$ : Conv2d $(1,16 ; 5) \rightarrow \operatorname{ReLU} \rightarrow \operatorname{MaxPool}(2) \rightarrow \operatorname{Conv} 2 \mathrm{~d}(16,32 ; 5) \rightarrow \operatorname{ReLU} \rightarrow \operatorname{MaxPool}(2) \rightarrow \operatorname{Dense} (1568,5)$. We set the seed number 42 and used 2 as a padding hyperparameter. Here, the arguments of the convolutional layers are Conv2d( \#in channel \#out channels; kernel size). The Tikhonov regularization parameter for the CNN kernel $\gamma_{\mathrm{CNN}}=10^{-4}$. The network $\phi_{\boldsymbol{\theta}}$ has been pre-trained as a digit classifier using the cross entropy loss function. The training was performed with the Adam optimizer (learning rate $=0.01$ ) for 20 epochs (batch size $=64$ ). The training dataset corresponds to the same 1000 images used to train the Koopman estimators.

In Figure \ref{fig:MNIST_TSNE}, t-SNE, a nonlinear dimension reduction, was applied to the concatenation of left and right eigenfunctions on the test dataset. The results indicate that models with superior representations tend to perform better and exhibit better generalization.
\begin{figure}[H]
    \centering
\includegraphics[width=\textwidth]{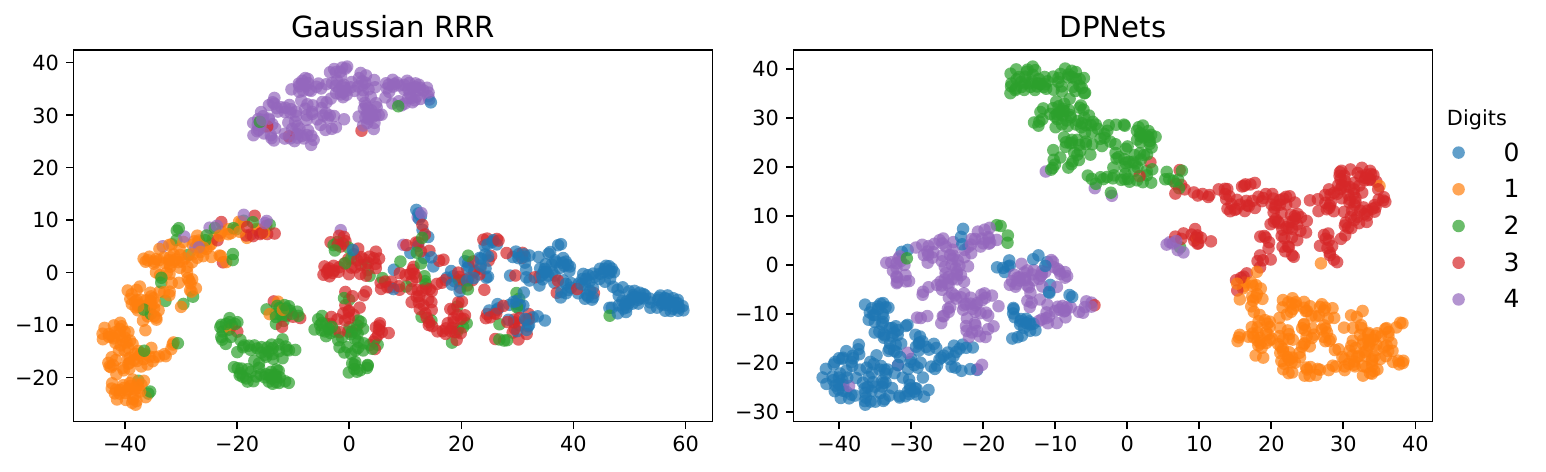}
    \vspace{-.3truecm}
    \caption{"t-SNE visualization of the concatenated left and right eigenfunctions on the MNIST test dataset with $\eta = 0.1$.}
    \label{fig:MNIST_TSNE}
\end{figure}
Here are the results for two different choices of $\eta = 0.2$ and $\eta = 0.05$. 
\begin{figure}[H]
    \centering
    \begin{subfigure}[b]{0.9\textwidth} 
        \centering
        \includegraphics[width=\linewidth]{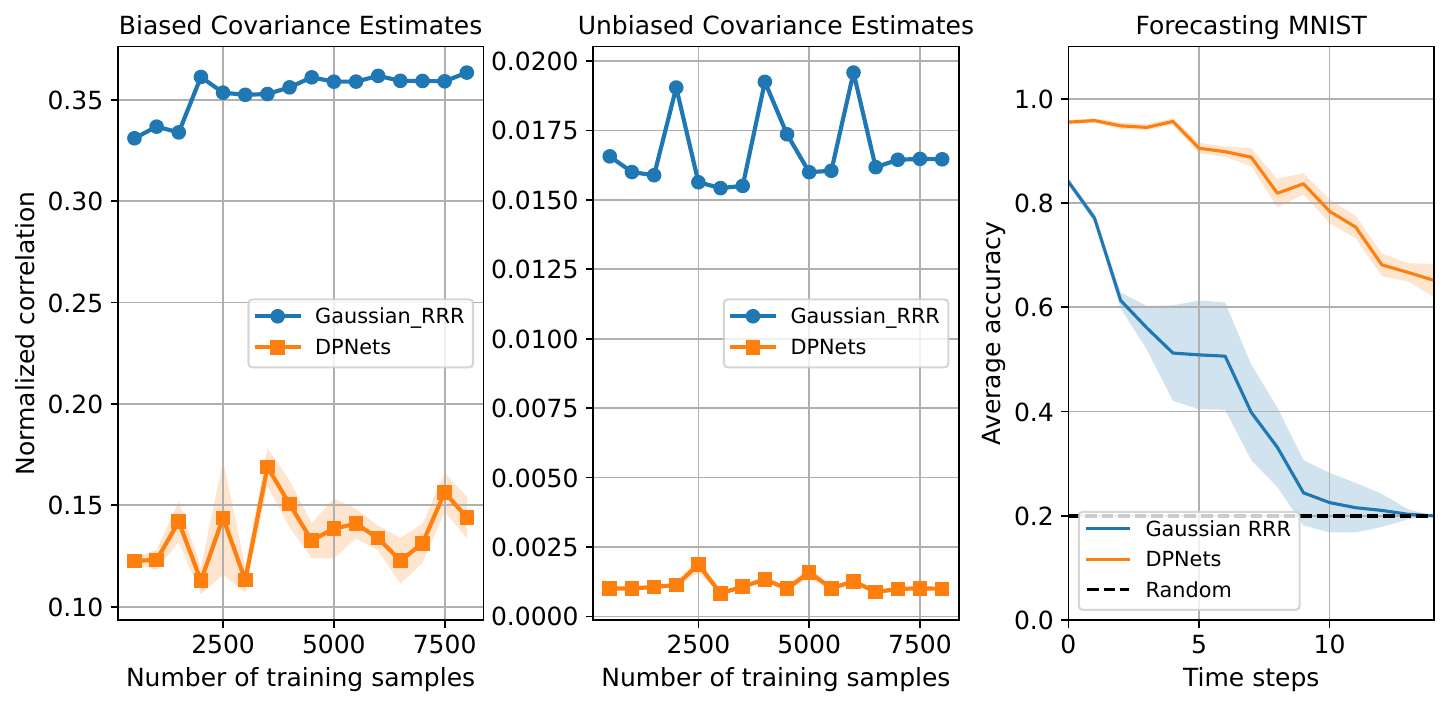}
    \end{subfigure}
    \vspace{0.2cm} 
    \begin{subfigure}[b]{0.9\textwidth} 
        \centering
        \includegraphics[width=\linewidth]{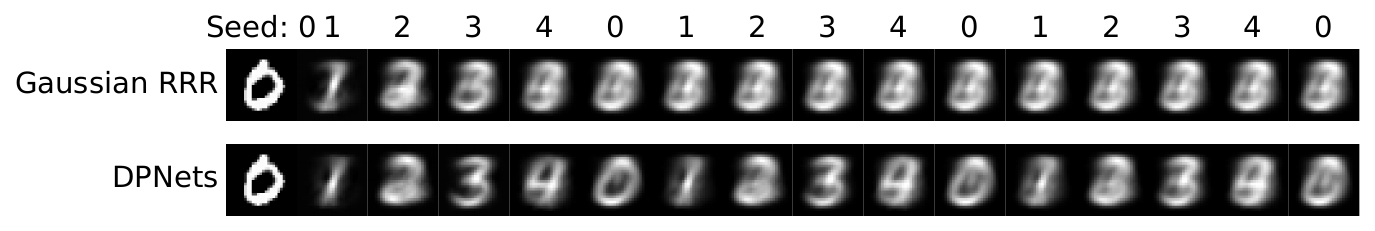}
    \end{subfigure}
    \caption{Performance evaluation of rank-5 RRR estimators using Gaussian and DPNet kernels on MNIST with $\eta = 0.2$: normalized correlations and forecast accuracy. }
    \label{fig:main}
\end{figure}

\begin{figure}[H]
    \centering
\includegraphics[width=\textwidth]{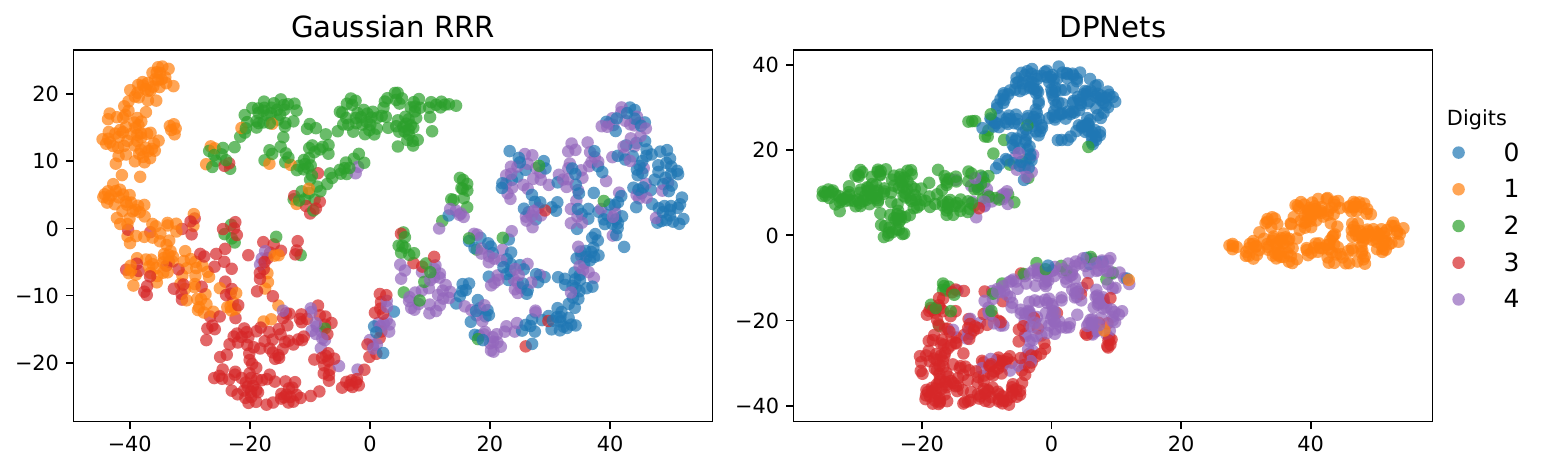}
    \caption{"t-SNE visualization of the concatenated left and right eigenfunctions on the MNIST test dataset with $\eta = 0.2$}
\end{figure}

\begin{figure}
    \centering
    \begin{subfigure}[b]{0.9\textwidth} 
        \centering
        \includegraphics[width=\linewidth]{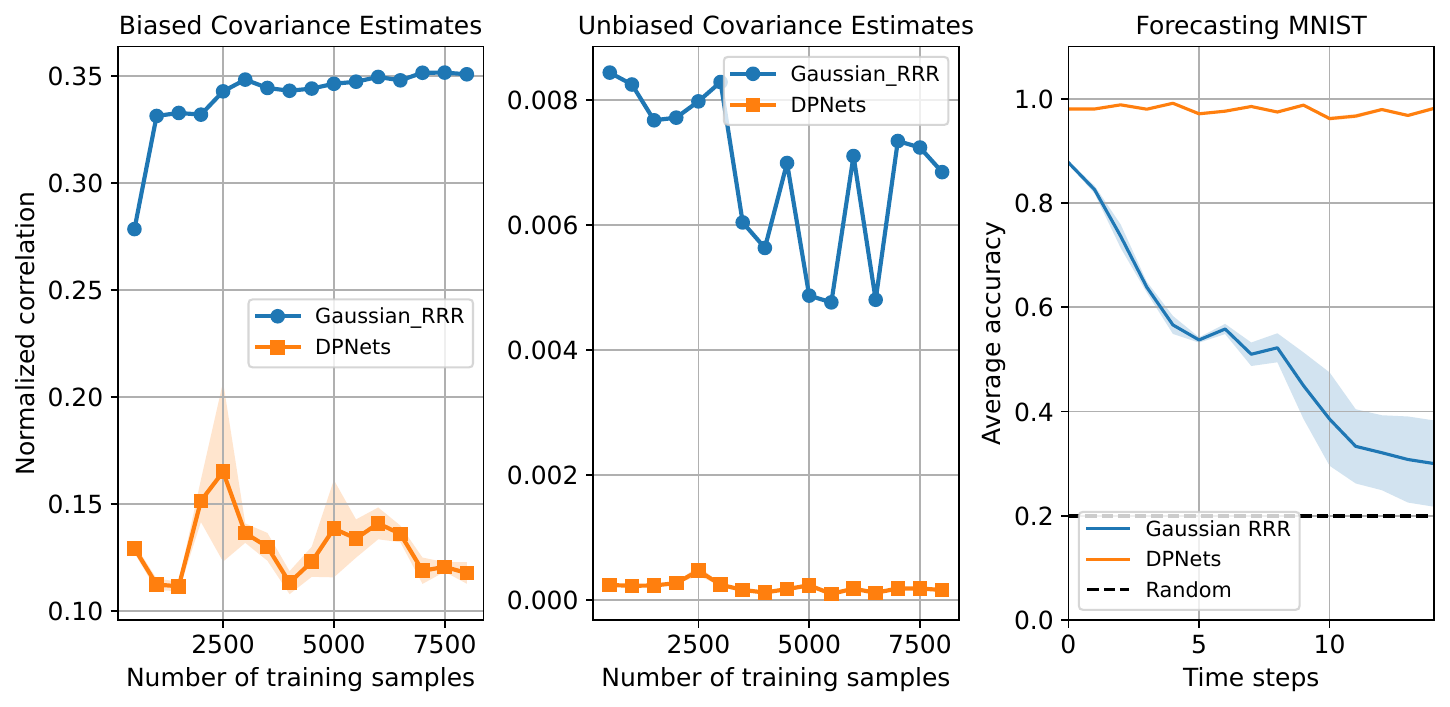}
    \end{subfigure}
    \vspace{0.2cm} 
    \begin{subfigure}[b]{0.9\textwidth} 
        \centering
        \includegraphics[width=\linewidth]{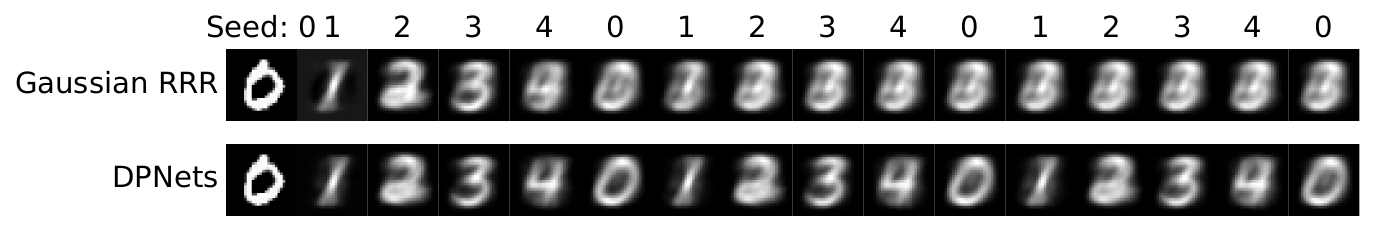}
    \end{subfigure}
    \caption{Performance evaluation of rank-5 RRR estimators using Gaussian and DPNet kernels on MNIST with $\eta = 0.05$: normalized correlations and forecast accuracy. }
\end{figure}

\begin{figure}[t]
    \centering
\includegraphics[width=\textwidth]{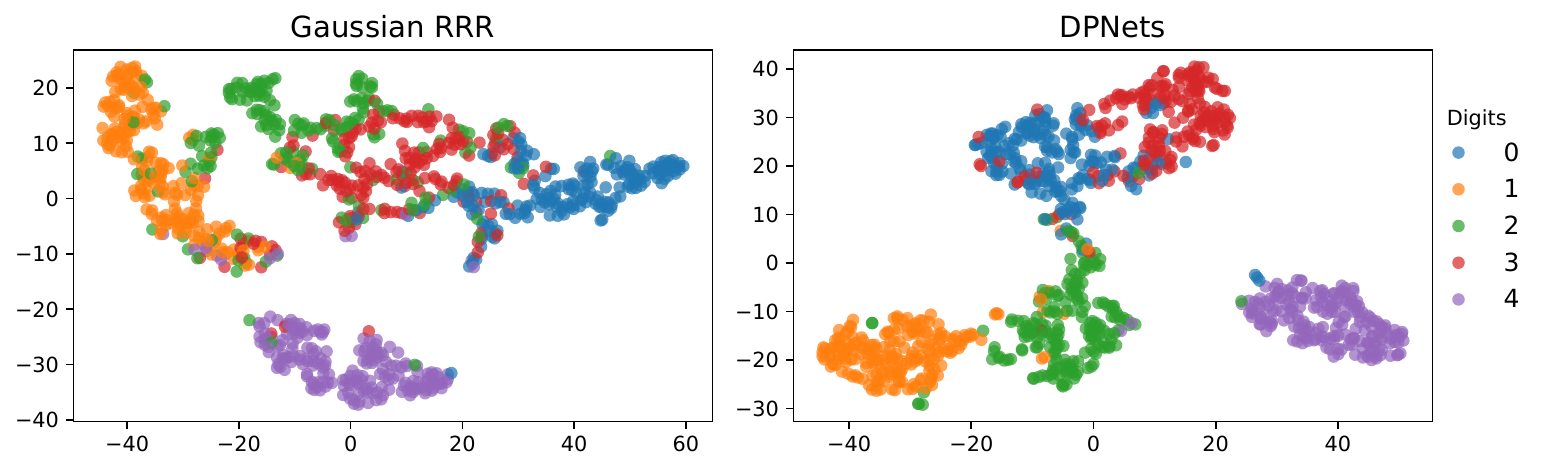}
    \caption{"t-SNE visualization of the concatenated left and right eigenfunctions on the MNIST test dataset with $\eta = 0.05$}
    \label{fig:t-SNE_0.05}
\end{figure}
\end{document}